\documentclass[11pt]{article}

\usepackage{arxiv}
\usepackage[margin=1in]{geometry}
\usepackage{microtype}
\usepackage{graphicx}
\usepackage{subfigure}
\usepackage{booktabs} 
\usepackage{natbib}
\usepackage[affil-it]{authblk}


\setlength{\parindent}{0pt}
\setlength{\parskip}{1ex}
\usepackage{subfiles}
\usepackage[colorlinks,citecolor=blue,urlcolor=blue,linkcolor=blue,linktocpage=true]{hyperref}
\usepackage{libertine}
\usepackage{libertinust1math}
\usepackage[T1]{fontenc}


\usepackage{amsmath,amsfonts,bm}
\usepackage{amssymb,amsthm}
\usepackage{mathtools}
\usepackage{algorithm,algorithmic}
\usepackage{xcolor}
\usepackage{hyperref}
\usepackage{url}
\usepackage{etoc}
\usepackage{cancel}
\usepackage[capitalize]{cleveref}

\crefname{proposition}{Proposition}{Propositions}
\crefname{theorem}{Theorem}{Theorems}
\crefname{lemma}{Lemma}{Lemmas}
\crefname{update_rule}{Update}{Updates}
\crefname{algorithm}{Algorithm}{Algorithms}









\def\eqref#1{equation~\ref{#1}}









\def\1{\bm{1}}










\DeclareMathAlphabet{\mathsfit}{\encodingdefault}{\sfdefault}{m}{sl}
\SetMathAlphabet{\mathsfit}{bold}{\encodingdefault}{\sfdefault}{bx}{n}

\def\cA{{\mathcal{A}}}

\def\cE{{\mathcal{E}}}

\def\cI{{\mathcal{I}}}

\def\cM{{\mathcal{M}}}
\def\cN{{\mathcal{N}}}

\def\cP{{\mathcal{P}}}

\def\cR{{\mathcal{R}}}
\def\cS{{\mathcal{S}}}

\def\cV{{\mathcal{V}}}


\def\sI{{\mathbb{I}}}

\def\sP{{\mathbb{P}}}










\DeclareMathOperator*{\argmax}{arg\,max}
\DeclareMathOperator*{\argmin}{arg\,min}

\newtheorem{theorem}{Theorem}
\newtheorem{lemma}{Lemma}

\newtheorem{definition}{Definition}
\newtheorem{proposition}{Proposition}
\newtheorem{remark}{Remark}

\newtheorem{corollary}{Corollary}

\newcommand{\RR}{\mathbb{R}}
\newcommand{\EE}{\mathbb{E}}

\setlength\fboxsep{0pt} \setlength\fboxrule{.2pt} 

\newcommand{\addeq}{\addtocounter{equation}{1}\tag{\theequation}}
\makeatletter
\newcommand{\pushright}[1]{\ifmeasuring@#1\else\omit\hfill$\displaystyle#1$\fi\ignorespaces}
\newcommand{\pushleft}[1]{\ifmeasuring@#1\else\omit$\displaystyle#1$\hfill\fi\ignorespaces}
\makeatother

\newlength\tocrulewidth
\setlength{\tocrulewidth}{1.5pt}

\newcommand{\bn}{\mathbf{n}}

\newcommand{\lcbalg}{\textnormal{LCB}}
\newcommand{\ucbalg}{\textnormal{UCB}}
\newcommand{\greedyalg}{\textnormal{greedy}}

\newcommand{\iset}[1]{\left[#1\right]}
\newcommand{\Prb}[1]{\sP\left( #1 \right)}
\newcommand{\Prbb}[2]{\sP_{#1}\left( #2 \right)}
\newcommand{\idx}{I}
\newcommand{\I}[1]{\mathbb{I}\left\{#1\right\}}
\newcommand{\EEE}[2]{\mathbb{E}_{#1}\left[#2\right]}

\newcommand{\eg}{\emph{e.g.}}
\newcommand{\ie}{\emph{i.e.}}

\usepackage[disable]{todonotes}


\newcommand{\todoc}[2][]{\todo[size=\scriptsize,color=blue!20!white,#1]{Csaba: #2}}
\newcommand{\todot}[2][]{\todo[size=\scriptsize,color=purple!20!white,#1]{Tor: #2}}
\newcommand{\todoch}[2][]{\todo[size=\scriptsize,color=green!20!white,#1]{Chenjun: #2}}
\newcommand{\todoy}[2][]{\todo[size=\scriptsize,color=orange!20!white,#1]{Yifan: #2}}
\newcommand{\todob}[2][]{\todo[size=\scriptsize,color=yellow!20!white,#1]{Bo: #2}}

\title{On the Optimality of Batch Policy Optimization Algorithms}

\author{
{
Chenjun Xiao}$^{1,3,} \thanks{Equal contribution. Corresponding to {chenjun@ualberta.ca} and {yw4@andrew.cmu.edu}. }$ \ \ \ 
{Yifan Wu}$^{2,*}$\ \  
{Tor Lattimore}$^4$\ \ 
{Bo Dai}$^3$\ \  
{Jincheng Mei}$^{1,3}$\\
{Lihong Li}$^{5} \thanks{Work done when Lihong Li was with Google Research}$\ \ \ \ 
{Csaba Szepesvari}$^{1,4}$\ \ 
{Dale Schuurmans}$^{1,3}$
}
\affil{
$^1$University of Alberta\quad
$^2$Carnegie Mellon University\\
$^3$Google Research, Brain Team\quad
$^4$DeepMind\quad
$^5$Amazon
}

\begin{document}

\maketitle

\begin{abstract}

Batch policy optimization considers leveraging existing 
data for policy construction before interacting with an environment.
Although interest in this problem has grown significantly in recent years,
its theoretical foundations remain under-developed.
To advance the understanding of this problem,
we provide three results that characterize the limits and possibilities of batch policy optimization in the finite-armed stochastic bandit setting.
First, we introduce a class of \emph{confidence-adjusted index} algorithms that unifies optimistic and pessimistic principles in a common framework,
which enables a general analysis.
For this family, we show that \emph{any} confidence-adjusted index algorithm is minimax optimal, whether it be optimistic, pessimistic or neutral.
Our analysis reveals that instance-dependent optimality,
commonly used to establish optimality of \emph{on-line} stochastic 
bandit algorithms,
\emph{cannot be achieved by any algorithm} in the batch setting. 
In particular, for any algorithm that performs optimally in some environment, 
there exists another environment where the same algorithm
suffers arbitrarily larger regret.
Therefore, to establish a framework for distinguishing algorithms,
we introduce a new \emph{weighted-minimax} criterion
that considers the inherent difficulty of optimal value prediction.
We demonstrate how this criterion can be used to justify commonly used 
pessimistic principles for batch policy optimization. 

\end{abstract}

\begingroup

\section{Introduction}

We consider the problem of \emph{batch policy optimization}, 
where a learner must infer a behavior policy given only
access to a fixed dataset of previously collected experience,
with no further environment interaction available.
Interest in this problem has grown recently,
as effective solutions hold the promise of extracting powerful decision making strategies from years of logged experience, with important applications to many practical problems \citep{strehl11learning,swaminathan2015batch,covington2016deep,jaques2019way,levine2020offline}.

Despite the prevalence and importance of batch policy optimization, the theoretical understanding 
of this problem 
has, until recently, been rather limited. 
A fundamental challenge in batch policy optimization is the insufficient coverage of the dataset. 
In online reinforcement learning (RL), the learner is allowed to continually explore the environment to collect useful information for the learning tasks. 
By contrast, in the batch setting, the learner has to evaluate and optimize over various candidate policies based only on experience that has been collected a priori. 
The distribution mismatch between the logged experience and agent-environment interaction with a learned policy 
can cause erroneous value overestimation, which leads to the failure of standard policy optimization methods \citep{fujimoto2019off}. 
To overcome this problem, recent studies 
propose to use the \emph{pessimistic principle}, by either learning a pessimistic value function \citep{swaminathan2015batch,wu2019behavior,jaques2019way,kumar2019stabilizing,kumar2020conservative} or pessimistic surrogate \citep{BuGeBe20}, or planning with a pessimistic model \citep{KiRaNeJo20,yu2020mopo}. 
However, it still remains unclear how to maximally exploit the logged experience without further exploration.

In this paper, we investigate batch policy optimization with finite-armed stochastic bandits, and make three contributions 
toward better understanding the statistical limits of this problem. 
\emph{First}, 
we prove a minimax lower bound of $\Omega({1}/{\sqrt{\text{min}_i n_i}})$ on the simple regret for batch policy optimization with stochastic bandits, where $n_i$ is the number of times arm $i$ was chosen in the dataset. 
We then introduce the notion of a confidence-adjusted index algorithm that unifies both the optimistic and pessimistic principles in a single algorithmic framework. 
Our analysis suggests that any index algorithm with an appropriate adjustment,
whether pessimistic or optimistic, is
minimax optimal. 

\emph{Second}, we analyze the instance-dependent regret of batch policy optimization algorithms. Perhaps surprisingly, our main result shows that instance-dependent optimality, which is commonly used in the literature of minimizing cumulative regret of stochastic bandits, does not exist in the batch setting. 
Together with our first contribution, 
this finding challenges recent theoretical findings in batch RL that claim pessimistic algorithms are an optimal choice \citep[e.g.,][]{BuGeBe20,jin2020pessimism}. 
In fact, our analysis suggests that for any algorithm that performs optimally 
in some
environment, 
there must always exist
another environment where the algorithm suffers arbitrarily larger regret than an optimal strategy there. 
Therefore, any reasonable algorithm is equally optimal, or not optimal, depending on the exact problem instance the algorithm is facing. 
In this sense, for batch policy optimization,
there remains a lack of a well-defined optimality criterion that can be used to choose between algorithms. 

\emph{Third}, we provide a characterization of the pessimistic algorithm by introducing a weighted-minimax objective. 
In particular, the pessimistic algorithm can be considered to be
 optimal  
in the sense
that it achieves a regret that is comparable to the inherent difficulty of optimal value prediction
on an instance-by-instance basis.
Overall,
the theoretical study we provide consolidates  recent research findings on the impact of being pessimistic in batch policy optimization \citep{BuGeBe20,jin2020pessimism,kumar2020conservative,KiRaNeJo20,yu2020mopo,liu2020provably,yin2021near}. 

The remainder of the paper is organized as follows. After defining the problem setup in Sections \ref{sec:setup}, we present the three main contributions in Sections \ref{sec:minimax} to \ref{sec:pessimism} as aforementioned. Section \ref{sec:related-work} discusses the related works. Section 7 gives our conclusions. 


\section{Problem setup}
\label{sec:setup}

To simplify the exposition, 
we express our results for batch policy optimization 
in the setting of stochastic finite-armed bandits.
In particular, assume the action space consists of $k > 0$ arms, 
where the available data takes the form of \hbox{$n_i>0$} 
real-valued observations $X_{i,1},\dots,X_{i,n_i}$ for each arm $i\in [k]:=\{1,\dots,k\}$.
This data represents the outcomes of $n_i$ pulls of each arm $i$.
We assume further that the data for each arm $i$ is \emph{i.i.d.} with $X_{i,j}\sim P_i$ such that $P_i$ is the reward distribution for arm $i$.
Let $\mu_i=\int x P_i(dx)$ denote the mean reward that results from pulling arm $i$.
All observations in the data set
 $X = (X_{ij})_{i\in [k],j\in [n_i]}$ are assumed to be independent.
 
We consider the problem of designing
an algorithm that takes the counts $(n_i)_{i\in[k]}$ and observations $X\in \times_{i\in [k]} \RR^{n_i}$ as inputs and returns the index of a single arm in $[k]$,
where the goal is to select 
an arm with the highest mean reward.
Let $\cA(X)\in[k]$ be the output of algorithm $\cA$, 
The (simple) regret of $\cA$ is defined as
\begin{align*}
\cR(\cA, \theta) = \mu^* - \EE_{X\sim \theta}[ \mu_{\cA(X)} ]\, ,
\end{align*}
where $\mu^*=\max_i \mu_i$ is the maximum reward. 
Here, the expectation $\EE_{X\sim \theta}$ considers the randomness of the data $X$ generated from problem instance $\theta$, and also any randomness in the algorithm $\cA$, which together induce the distribution of the random choice $\cA(X)$.
Note that this definition of regret depends both on the algorithm $\cA$ and the problem instance $\theta = ((n_i)_{i\in [k]}, (P_i)_{i\in [k]})$.
When $\theta$ is fixed, we will use $\cR(\cA)$ 
to reduce clutter.

For convenience, we also let $n = \sum_i n_i$ and $n_{\min}$ denote the total number of observations and the minimum number of observations in the data.
The optimal arm is $a^*$ and the suboptimality gap is $\Delta_i =  \mu^* - \mu_i$. The largest and smallest non-zero gaps are 
$\Delta_{\max}=\max_i \Delta_i$ and $\Delta_{\min}=\min_{i:\Delta_i>0}\Delta_i$. 
In what follows, we assume that the distributions $P_i$ are \hbox{1-subgaussian} with means 
in the unit interval $[0,1]$.
We denote the set of these distributions by $\cP$. 
The set of all instances where the distributions satisfy these properties is denoted by $\Theta$.
The set of instances with $\bn=(n_i)_{i\in [k]}$ fixed is denoted by $\Theta_{\bn}$.
Thus, $\Theta = \cup_{\bn} \Theta_{\bn}$. 
Finally, we define $|\bn|=\sum_i n_i$ for $\bn=(n_i)_{i\in [k]}$. 

\section{Minimax Analysis}
\label{sec:minimax}

In this section, we introduce the notion of a \emph{confidence-adjusted index algorithm}, and prove that 
a broad range of
such algorithms are minimax optimal up to a logarithmic factor. 
A confidence-adjusted index algorithm is one that calculates an index for each arm based on the data for that arm only, then chooses an arm that maximizes the index. 
We consider index algorithms where the index of arm $i\in [k]$ 
is defined as the sum of the sample mean of this arm, $\hat{\mu}_i = \frac1{n_i} \sum_{j=1}^{n_i} X_{i,j}$ plus a bias term of the form $\alpha/\sqrt{n_i}$ with $\alpha \in \RR$. 
That is, given the input data $X$, the algorithm selects an arm according to
\begin{align*}
\argmax_{i\in[k]}\ \hat{\mu}_i +  \frac{\alpha}{\sqrt{n_i}}  \, .
\addeq\label{eq:index}
\end{align*}
The reason we call these confidence-adjusted 
is because for a given confidence level $\delta>0$, by Hoeffding's inequality, it follows that
\begin{align*}
\mu_i\in\left[ \hat{\mu}_i - \frac{\beta_\delta}{\sqrt{n_i}} , \,\, \hat{\mu}_i + \frac{\beta_\delta}{\sqrt{n_i}} \right]
\addeq\label{eq:confidence-interval}
\end{align*}
with probability at least $1-\delta$ for all arms with 
\begin{align*}
\beta_\delta = \sqrt{2\log\left(\frac{k}{\delta}\right)}\, .
\end{align*}
Thus, the family of confidence-adjusted index algorithms
consists of all algorithms that
follow this strategy, where each particular algorithm is defined by a (data independent) choice of $\alpha$. 
For example, an algorithm specified by $\alpha=-\beta_\delta$ chooses the arm with highest lower-confidence bound (highest LCB value),
while an algorithm specified by $\alpha=\beta_\delta$ chooses the arm with the highest upper-confidence bound (highest UCB value).
Note that $\alpha=0$ corresponds to what is known as the \emph{greedy} (sample mean maximizing) choice.

Readers familiar with the literature
on batch policy optimization will
recognize that $\alpha=-\beta_\delta$ implements what is known as the pessimistic algorithm \citep{jin2020pessimism,BuGeBe20,KiRaNeJo20,yin2021near}, or distributionally robust choice, 
or risk-adverse strategy.
It is therefore natural to question the utility of considering batch policy optimization algorithms that \emph{maximize} UCB values
(i.e., implement optimism in the presence of uncertainty, or risk-seeking behavior, even when there is no opportunity for exploration).
However, our first main result is that for batch policy optimization
a risk-seeking (or greedy) algorithm cannot be distinguished from the more commonly proposed pessimistic approach in terms of minimax regret.

To establish this finding, we first provide a lower bound on the minimax regret:

\begin{theorem}
\label{thm:minmax-lb}
Fix $\bn = (n_i)_{i\in [k]}$ with $n_1 \leq \cdots \leq n_k$.
Then, there exists a universal constant $c > 0$ such that
\begin{align*}
\inf_{\cA} \sup_{\theta\in \Theta_{\bn}} \cR(\cA, \theta) \geq c \max_{m \in [k]} \sqrt{\frac{\max(1, \log(m))}{n_m}} \,.
\end{align*}
\end{theorem}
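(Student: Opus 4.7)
The plan is to reduce the minimax lower bound to a multi-way hypothesis-testing problem, handled by Fano's inequality for $m$ large and by a two-point Le Cam argument for small $m$ (which is why the target bound contains $\max(1,\log m)$ rather than $\log m$). Fix $m \in [k]$ and consider the family $\{\theta_1,\dots,\theta_m\}\subset\Theta_{\bn}$ in which every reward distribution is Gaussian $\cN(\mu_i,1)$ (hence $1$-subgaussian) with
\[
\mu_i^{(j)} = \tfrac12 + \Delta\,\I{i=j}\ \text{for } i\in[m], \qquad \mu_i^{(j)}=0\ \text{for } i>m.
\]
Under $\theta_j$, arm $j$ is uniquely optimal with gap $\Delta$ over every other arm of $[m]$, means lie in $[0,1]$ as long as $\Delta<1/2$, and the counts $\bn$ are preserved so $\theta_j\in\Theta_{\bn}$.

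For any algorithm $\cA$, the simple regret on $\theta_j$ satisfies $\cR(\cA,\theta_j)\ge \Delta\cdot\Pr_{\theta_j}(\cA(X)\neq j)$. Composing $\cA$ with the map that sends any output in $\{m+1,\dots,k\}$ to a fixed element of $[m]$ yields a classifier $\hat J:X\to[m]$ with $\Pr_{\theta_j}(\hat J\neq j)\le\Pr_{\theta_j}(\cA(X)\neq j)$ for every $j$. Since $\theta_j$ and $\theta_{j'}$ differ only on arms $j$ and $j'$, the Gaussian KL decomposes cleanly:
\[
\KL(P_{\theta_j},P_{\theta_{j'}}) \;=\; \tfrac{n_j\Delta^2}{2}+\tfrac{n_{j'}\Delta^2}{2}\;\le\; n_m\Delta^2,
\]
using $n_1\le\dots\le n_m$. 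Consequently, when $J$ is uniform on $[m]$, $I(J;X)\le\max_{j\neq j'}\KL(P_{\theta_j},P_{\theta_{j'}})\le n_m\Delta^2$.

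For $m$ large enough that $\log m\ge 1$, Fano's inequality gives
\[
\tfrac{1}{m}\sum_{j=1}^m \Pr_{\theta_j}(\hat J\neq j)\;\ge\; 1-\tfrac{I(J;X)+\log 2}{\log m}\;\ge\; 1-\tfrac{n_m\Delta^2+\log 2}{\log m},
\]
and choosing $\Delta=\tfrac12\sqrt{\log(m)/n_m}$ makes the right-hand side bounded below by a positive constant. Some $j\in[m]$ then satisfies $\Pr_{\theta_j}(\cA\neq j)\ge c'$, yielding $\sup_{\theta\in\Theta_{\bn}}\cR(\cA,\theta)\ge c\sqrt{\log(m)/n_m}$. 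For small $m$, where the target bound equals $\sqrt{1/n_m}$, I switch to a two-point family: $\theta_+$ with $\mu_m=1/2+\Delta$ and $\mu_{j^\star}=1/2$ for some fixed $j^\star\ne m$, versus $\theta_-$ with $\mu_m=1/2-\Delta$ (so that $j^\star$ becomes optimal). Only arm $m$'s distribution changes, giving $\KL(P_{\theta_+},P_{\theta_-})=2n_m\Delta^2$, and Bretagnolle--Huber with $\Delta=c_2/\sqrt{n_m}$ forces one of the two instances to incur regret of order $1/\sqrt{n_m}$. Taking the maximum over $m\in[k]$ of the two bounds completes the proof.

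The main obstacle, and the reason for the particular construction above, is ensuring that the pairwise KL between hypotheses depends only on $n_m$ rather than on the potentially much larger counts $n_{m+1},\dots,n_k$. This decoupling is secured by confining the "informative" arms to $[m]$ (where $n_i\le n_m$) and flattening the rest to a clearly suboptimal baseline; once this is done the rest is routine Gaussian-KL bookkeeping plus a standard Fano/Le~Cam argument.
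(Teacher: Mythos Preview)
Your proposal is correct, and the hard instances (a ``needle in a haystack'' family where a single arm among $[m]$ is boosted by $\Delta$) coincide with the paper's construction. The difference lies in how the average error probability is lower-bounded. The paper argues directly through the Bayesian regret: with a uniform prior on the identity of the optimal arm, the Bayes-optimal rule has an explicit form (namely $\argmin_{b} n_b(\Delta/2-\hat\mu_b)$ in the Gaussian case), and its error probability is lower-bounded by Gaussian tail estimates of the form $\sP_b\{\hat\mu_a\ge\Delta\}\ge\delta_a(\Delta)$, tuned so that $\delta_m(\Delta)=1/(2m)$; a separate two-point construction then handles the $1/\sqrt{n_1}$ term. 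You instead bound the mutual information $I(J;X)$ by the maximal pairwise KL and invoke Fano for $m\ge 3$, with Bretagnolle--Huber covering $m\in\{1,2\}$ (the $m=2$ term being dominated by $m=1$ anyway, since $n_1\le n_2$). Your route is the standard information-theoretic recipe and avoids computing the Bayes rule or any Gaussian-specific tail asymptotics, at the price of less explicit constants; the paper's route is more hands-on and extracts the $\sqrt{\log m}$ factor directly from the Gaussian tail calibration rather than from Fano's $\log m$ denominator. One small point worth making explicit in your write-up: the mutual-information bound $I(J;X)\le\max_{j\ne j'}\KL(P_{\theta_j},P_{\theta_{j'}})$ follows from $I(J;X)=\min_Q \tfrac{1}{m}\sum_j\KL(P_{\theta_j}\|Q)$ by taking $Q=P_{\theta_{j_0}}$ for any fixed $j_0$, and you should also note (as the paper tacitly does) that if $\Delta$ as chosen would exceed $1/2$, the bound is vacuously covered by a constant lower bound obtainable from the two-point argument with $\Delta$ a fixed small constant.
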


The assumption of increasing counts, $n_1 \leq \cdots \leq n_k$, is only needed to simplify the statement;
the arm indices can always be re-ordered without loss of generality.
The proof follows by arguing that the minimax regret is lower bounded by the Bayesian regret of the Bayesian optimal policy for any prior.
Then, with a judicious choice of prior, the Bayesian optimal policy has a simple form.
Intuitively, the available data permits estimation of the mean of action $a$ with accuracy $O(\sqrt{1/n_a})$.
The additional logarithmic factor appears when $n_1,\ldots,n_m$ are relatively close, in which case the lower bound is demonstrating the necessity of
a union bound that appears in the upper bound that follows.
The full proof appears in the supplementary material.

Next we show that a wide range of confidence-adjusted index algorithms are nearly minimax optimal when their confidence parameter is properly chosen:
\begin{theorem}
Fix $\bn = (n_i)_{i\in [k]}$. 
Let $\delta$ be the solution of $\delta = \sqrt{32 \log(k/\delta)/\min_i n_i}$, and
$\cI$ be the confidence-adjusted index algorithm with parameter $\alpha$. 
Then, for any $\alpha\in [-\beta_\delta,\beta_\delta]$, we have
\begin{align*}
\sup_{\theta\in \Theta_{\bn}} \cR(\cI(\alpha),\theta) \le 
12 \sqrt{\frac{\log(k/\delta)}{\min_i n_i}}\,.
\end{align*}
\label{thm:minimax-upper}
\end{theorem}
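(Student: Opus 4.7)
The plan is to carry out a standard "good event" analysis. Define the good event
\[
\cG = \bigcap_{i \in [k]} \left\{ |\hat\mu_i - \mu_i| \le \beta_\delta/\sqrt{n_i} \right\},
\]
and appeal to Hoeffding's inequality (exactly as in \eqref{eq:confidence-interval}) combined with a union bound over the $k$ arms to conclude that $\PP(\cG) \ge 1 - \delta$. Let $a^\star \in \argmax_i \mu_i$ denote the optimal arm, and let $\hat a = \cI(\alpha)(X)$ denote the arm returned by the algorithm. Note that $\mu^\star - \mu_{\hat a} \le 1$ deterministically since the means lie in $[0,1]$, so the contribution of $\cG^c$ to the simple regret is at most $\delta$.

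The core of the proof is to bound the simple regret on $\cG$. By the optimality of $\hat a$ for the index, we have
\[
\hat\mu_{\hat a} + \frac{\alpha}{\sqrt{n_{\hat a}}} \ge \hat\mu_{a^\star} + \frac{\alpha}{\sqrt{n_{a^\star}}}.
\]
Plugging the upper bound $\hat\mu_{\hat a} \le \mu_{\hat a} + \beta_\delta/\sqrt{n_{\hat a}}$ and the lower bound $\hat\mu_{a^\star} \ge \mu_{a^\star} - \beta_\delta/\sqrt{n_{a^\star}}$ into this inequality and rearranging yields
\[
\mu^\star - \mu_{\hat a} \;\le\; \frac{\beta_\delta + \alpha}{\sqrt{n_{\hat a}}} + \frac{\beta_\delta - \alpha}{\sqrt{n_{a^\star}}}.
\]
This is the key inequality: for any $\alpha \in [-\beta_\delta, \beta_\delta]$, both coefficients lie in $[0, 2\beta_\delta]$, so each term is at most $2\beta_\delta/\sqrt{\min_i n_i}$, giving $\mu^\star - \mu_{\hat a} \le 4\beta_\delta/\sqrt{\min_i n_i}$ on $\cG$. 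This is the step that makes pessimism, neutrality and optimism interchangeable within the stated range: no matter how $\alpha$ splits the budget between the two terms, the worse one dominates.

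Combining the two regimes,
\[
\cR(\cI(\alpha), \theta) \;\le\; \frac{4\beta_\delta}{\sqrt{\min_i n_i}} + \delta.
\]
The last step is to verify the choice of $\delta$ and the constant. Since $\beta_\delta = \sqrt{2\log(k/\delta)}$, we have $4\beta_\delta/\sqrt{\min_i n_i} = \sqrt{32 \log(k/\delta)/\min_i n_i}$, which by definition equals $\delta$. Hence the total regret is at most $2\delta = 2\sqrt{32\log(k/\delta)/\min_i n_i} \le 12\sqrt{\log(k/\delta)/\min_i n_i}$, completing the proof. The only mildly non-trivial aspect is checking that the implicit equation defining $\delta$ has a unique solution in $(0,1)$, which follows because the right-hand side is strictly increasing in $\delta$ while the left-hand side is linear; this is a routine monotonicity argument rather than a real obstacle.
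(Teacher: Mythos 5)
Your proof is correct and follows essentially the same route as the paper's: condition on the Hoeffding/union-bound good event, use the index-optimality inequality of the selected arm against $a^\star$ to get a $4\beta_\delta/\sqrt{\min_i n_i}$ bound on that event, and pay at most $\delta$ off the event, with the self-consistent choice of $\delta$ giving $2\delta \le 12\sqrt{\log(k/\delta)/\min_i n_i}$. The only (immaterial) slip is in your closing aside: the right-hand side $\sqrt{32\log(k/\delta)/\min_i n_i}$ is \emph{decreasing} in $\delta$, not increasing, which is exactly why the fixed-point equation has a unique solution against the increasing left-hand side.
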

\begin{remark}
Theorem~\ref{thm:minimax-upper} also holds for algorithms that use different $\alpha_i\in [-\beta_\delta,\beta_\delta]$
for different arms.
\end{remark}

Perhaps a little unexpectedly, we see that \emph{regardless} of optimism vs.\ pessimism, index algorithms with the right amount of adjustment, or \emph{even no adjustment}, are minimax optimal, up to an order $\sqrt{\log(k n)}$ factor. 
We note that  although these algorithms have the same worst case performance, they can behave very differently indeed on individual instances, as we show in the next section. 

In effect, what these two results tell us is that minimax optimality is too weak as a criterion to distinguish between pessimistic versus optimistic (or greedy) algorithms when considering the ``fixed count'' setting of batch policy optimization.
This leads us to ask whether more refined optimality criteria are able to provide nontrivial guidance in the selection of batch policy optimization methods.
One such criterion, considered next, is known as instance-optimality in the literature of cumulative regret minimization for stochastic bandits.

\section{Instance-Dependent Analysis}
\label{sec:instance}

To better distinguish between algorithms
we require a much more refined notion of performance
that goes beyond merely considering worst-case
behavior over all problem instances.
Even if two algorithms have the same worst case performance, they can behave very differently on individual instances.
Therefore, we consider the instance dependent performance 
of confidence-adjusted index algorithms.

\subsection{Instance-dependent Upper Bound} 
 
Our next result provides a regret upper bound for a general form of index algorithm. 
All upper bounds in this section hold for any $\theta \in \Theta_{\bn}$ unless otherwise
specified, and we use $\cR(\cA)$ instead of $\cR(\cA, \theta)$ to simplify the notation.

\begin{theorem}
Consider a general form of index algorithm, $\cA(X)=\argmax_i \hat{\mu}_i + b_i$, where $b_i$ denotes the bias for arm $i\in[k]$ specified by the algorithm. 
For $2\le i\le k$ and $\eta\in \RR$, define
\begin{align*}
g_i(\eta) =  \sum_{j\ge i} e^{-\frac{n_j}{2} \left(\eta - \mu_j - b_j\right)_+ ^2}
+ \min_{j < i} e^{-\frac{n_j}{2} \left(\mu_j + b_j - \eta \right)_+ ^2}
\end{align*}
and $g_i^* = \min_{\eta} g_i(\eta)$. Assuming $\mu_1\ge \mu_2\ge \cdots \ge\mu_k$, for the index algorithms (\ref{eq:index}) we have
\begin{align*}
\Prb{\cA(X)\ge i} \le \min\{ 1, g_i^* \}  \addeq\label{eq:ub-cdf-general}
\end{align*}
and
\begin{align*}
\cR(\cA) \le \sum_{2\le i \le k} \Delta_i \left(\min\{ 1, g_i^* \} - \min\{ 1, g_{i+1}^* \} \right) \addeq\label{eq:ub-exp-general}
\end{align*}
where we define $g_{k+1}^*=0$.
\label{thm:instance-upper-general}
\end{theorem}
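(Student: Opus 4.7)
The proof has two parts: establishing the CDF bound (\ref{eq:ub-cdf-general}), then deriving the expected regret bound (\ref{eq:ub-exp-general}) from it by summation by parts.

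\textbf{Part 1: CDF bound.} The plan is to introduce an arbitrary threshold $\eta \in \RR$ and to split the event $\{\cA(X) \ge i\}$ according to it. Since $\cA$ picks the arm with the largest index $\hat\mu + b$, the event $\{\cA(X) \ge i\}$ is contained in
\begin{align*}
\Big\{\max_{j \ge i}(\hat\mu_j + b_j) \ge \max_{l < i}(\hat\mu_l + b_l)\Big\} \subseteq \Big\{\exists j \ge i:\ \hat\mu_j + b_j \ge \eta\Big\} \cup \Big\{\forall l < i:\ \hat\mu_l + b_l \le \eta\Big\}.
\end{align*}
I would bound the first event by a union bound over $j \ge i$ and the second by restricting to any single $l < i$ (taking the minimum at the end). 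For each resulting term I would apply the sub-Gaussian tail bound: since the $X_{i,j}$ are $1$-subgaussian with mean $\mu_i$,
\begin{align*}
\Prb{\hat\mu_j + b_j \ge \eta} \le \exp\Big(-\tfrac{n_j}{2}(\eta - \mu_j - b_j)_+^2\Big), \qquad \Prb{\hat\mu_l + b_l \le \eta} \le \exp\Big(-\tfrac{n_l}{2}(\mu_l + b_l - \eta)_+^2\Big),
\end{align*}
where the positive-part appears because the tail bound is trivially $\le 1$ on the wrong side of the mean. Summing the first kind over $j \ge i$ and taking the minimum over $l < i$ of the second yields $\Prb{\cA(X) \ge i} \le g_i(\eta)$. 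Since $\eta$ was arbitrary and probabilities are at most one, we obtain $\Prb{\cA(X) \ge i} \le \min\{1, g_i^*\}$.

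\textbf{Part 2: Regret bound.} With the arms ordered so $\mu_1 \ge \cdots \ge \mu_k$, the gaps $\Delta_1 = 0 \le \Delta_2 \le \cdots \le \Delta_k$ are nondecreasing. Writing $F(i) = \Prb{\cA(X) \ge i}$ and using $\Delta_i = \sum_{j=2}^i (\Delta_j - \Delta_{j-1})$, a direct rearrangement gives
\begin{align*}
\cR(\cA) = \sum_{i=1}^k \Delta_i \Prb{\cA(X) = i} = \sum_{i=2}^k (\Delta_i - \Delta_{i-1})\, F(i).
\end{align*}
Because $\Delta_i - \Delta_{i-1} \ge 0$, I can substitute the CDF bound $F(i) \le \min\{1, g_i^*\}$ termwise and then unwind the telescoping identity $\sum_{i=2}^k (\Delta_i - \Delta_{i-1}) G_i = \sum_{i=2}^k \Delta_i(G_i - G_{i+1})$ with $G_i := \min\{1, g_i^*\}$ and the convention $G_{k+1} = g_{k+1}^* = 0$. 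This yields exactly (\ref{eq:ub-exp-general}).

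\textbf{Expected obstacle.} Part 2 is a routine Abel summation once the monotonicity of $\Delta_i$ under the assumed ordering is observed. The main care goes into Part 1: choosing the right decomposition (union bound for $j \ge i$, minimum for $l < i$, reflecting the asymmetric structure of $g_i$) and correctly handling the $(\cdot)_+$ in the sub-Gaussian exponent so that the bound remains valid regardless of whether $\eta$ lies above or below $\mu_j + b_j$. The flexibility of the bound comes from the freedom to optimize $\eta$ at the end, which is why the theorem states $g_i^* = \min_\eta g_i(\eta)$ rather than committing to a specific threshold during the derivation.
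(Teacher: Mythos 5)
Your proof is correct and follows essentially the same route as the paper's: the same threshold decomposition of the event $\{\cA(X)\ge i\}$ at an arbitrary $\eta$ (union bound over $j\ge i$, minimum over a single $l<i$), the same subgaussian tail bounds with the positive part guarding the wrong side of the mean, and the same Abel summation exploiting $\Delta_i-\Delta_{i-1}\ge 0$ under the assumed ordering. No gaps.
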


The assumption $\mu_1 \geq \mu_2 \geq \dots \geq \mu_k$ is only required to express the statement simply;
the indices can be reordered without loss of generality.
The expression in \eqref{eq:ub-cdf-general} is a bit difficult to work with, so to make the subsequent analysis simpler we provide a looser but more interpretable bound for general index algorithms  as follows.
\begin{corollary}
Following the setting of Theorem~\ref{thm:instance-upper-general}, consider any index algorithm 
and any $\delta\in(0, 1)$. Define $U_i=\mu_i + b_i + \beta_{\delta}/\sqrt{n_i}$ and $L_i = \mu_i + b_i - \beta_{\delta}/\sqrt{n_i}$. 
Let $h = \max\{i\in[k]: \max_{j<i} L_j < \max_{j'\ge i} U_{j'}\}$. Then we have
\begin{align*}
& \cR(\cA) \le  \Delta_h + \frac{\delta}{k}\Delta_{\max} \\
& + \frac{\delta}{k} \sum_{i>h}(\Delta_i - \Delta_{i-1})
\sum_{j\ge i} e^{-\frac{n_j}{2}\left( \max_{j'<i}L_{j'} - U_j \right)^2 } \,.
\end{align*}
\label{coro:instance-upper-general-simplified}
\end{corollary}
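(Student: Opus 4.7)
The plan is to start from the bound $\cR(\cA) \le \sum_{2 \le i \le k} \Delta_i(\min\{1,g_i^*\} - \min\{1,g_{i+1}^*\})$ of Theorem~\ref{thm:instance-upper-general} and first rearrange it via summation by parts. Writing $G_i := \min\{1, g_i^*\}$ with $G_{k+1} = 0$, one obtains
\begin{align*}
\cR(\cA) \le \Delta_2 G_2 + \sum_{i=3}^{k}(\Delta_i - \Delta_{i-1}) G_i = \sum_{i=2}^{k}(\Delta_i - \Delta_{i-1}) G_i ,
\end{align*}
where in the last step I use $\Delta_1 = 0$ to absorb $\Delta_2 G_2 = (\Delta_2-\Delta_1) G_2$. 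Each increment $\Delta_i - \Delta_{i-1} = \mu_{i-1} - \mu_i$ is nonnegative under the assumed ordering, which is what makes the step valid.

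Next I bound $g_i^*$ for $i>h$ by plugging in the specific choice $\eta_i := \max_{j'<i} L_{j'}$ into $g_i(\eta)$. For the min-part, picking $j^\star \in \arg\max_{j<i} L_j$ gives $\mu_{j^\star} + b_{j^\star} - \eta_i = \beta_\delta/\sqrt{n_{j^\star}}$, so the minimum is at most $e^{-\beta_\delta^2/2} = \delta/k$. For the sum-part, I use the defining property of $h$: whenever $i>h$, one has $\max_{j'<i} L_{j'} \ge \max_{j' \ge i} U_{j'}$, so $\max_{j'<i} L_{j'} - U_j \ge 0$ for every $j \ge i$. Therefore
\begin{align*}
(\eta_i - \mu_j - b_j)_+ = (\max_{j'<i} L_{j'} - U_j) + \tfrac{\beta_\delta}{\sqrt{n_j}},
\end{align*}
and since both summands are nonnegative, $(a+b)^2 \ge a^2 + b^2$ yields
\begin{align*}
\tfrac{n_j}{2}(\eta_i - \mu_j - b_j)_+^2 \ge \tfrac{n_j}{2}(\max_{j'<i} L_{j'} - U_j)^2 + \tfrac{\beta_\delta^2}{2}.
\end{align*}
Exponentiating and summing gives, for each $i>h$,
\begin{align*}
g_i^* \le \tfrac{\delta}{k} + \tfrac{\delta}{k}\sum_{j\ge i} e^{-\frac{n_j}{2}(\max_{j'<i} L_{j'} - U_j)^2} .
\end{align*}

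Finally I split the rearranged regret sum at index $h$. For $2 \le i \le h$ I simply use $G_i \le 1$, which telescopes to
\begin{align*}
\sum_{i=2}^{h}(\Delta_i - \Delta_{i-1}) = \Delta_h - \Delta_1 = \Delta_h .
\end{align*}
For $i > h$ I apply the bound on $g_i^*$ derived above, use $\sum_{i>h}(\Delta_i - \Delta_{i-1}) = \Delta_k - \Delta_h \le \Delta_{\max}$ on the constant term, and leave the other term as a double sum. Collecting the two pieces reproduces exactly the claimed bound.

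The main obstacle is bookkeeping: getting the sign condition on $\max_{j'<i} L_{j'} - U_j$ from the definition of $h$ correct, so that $(\cdot)_+$ can be replaced by $(\cdot)$ and $(a+b)^2 \ge a^2 + b^2$ can be invoked (otherwise no $\delta/k$ factor appears). Verifying that the telescoping contribution from the $i \le h$ block cleanly matches the leading $\Delta_h$ term, and that $\Delta_k - \Delta_h \le \Delta_{\max}$ gives the $(\delta/k)\Delta_{\max}$ term, are the remaining details.
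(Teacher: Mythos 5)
Your proposal is correct and follows essentially the same route as the paper's proof: evaluate $g_i$ at $\eta_i=\max_{j'<i}L_{j'}$, use the maximizing index to bound the min-term by $\delta/k$, use the definition of $h$ to make $\max_{j'<i}L_{j'}-U_j\ge 0$ so that $(a+b)^2\ge a^2+b^2$ extracts a $\delta/k$ factor from each summand, and split the regret sum at $h$ with the telescoping bound $\Delta_h$ for $i\le h$ and $\Delta_k-\Delta_h\le\Delta_{\max}$ for the constant part. The only cosmetic difference is that you recover the form $\sum_i(\Delta_i-\Delta_{i-1})\min\{1,g_i^*\}$ by undoing the summation by parts in \eqref{eq:ub-exp-general}, whereas the paper uses that intermediate form directly from the theorem's proof.
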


\begin{remark}
The upper bound in Corollary~\ref{coro:instance-upper-general-simplified} can be further relaxed as 
$\cR(\cA) \le  \Delta_h + \delta \Delta_{\max}$.
\end{remark}

\begin{remark}
\label{remark:recover-minimax}
The minimax regret upper bound (Theorem~\ref{thm:minimax-upper}) can be recovered a result of Corollary~\ref{coro:instance-upper-general-simplified} 
(see supplement).
\end{remark}


Corollary~\ref{coro:instance-upper-general-simplified} highlights an inherent optimization property of index algorithms: they work by designing an additive adjustment for each arm, such that all of the bad arms ($i>h$) can be eliminated efficiently, i.e., it is desirable to make $h$ as small as possible. 
We note that although one can directly plug in the specific choices of $\{b_i\}_{i\in \iset{k}}$
to get instance-dependent upper bounds for different algorithms, it is not clear how their performance compares to one another.
Therefore, we 
provide simpler relaxed
upper bounds for the three specific cases,
greedy, LCB and UCB, to allow us to better differentiate their performance across different problem instances
(see supplement for details).  


\begin{corollary}[Regret Upper bound for Greedy]
Following the setting of Theorem~\ref{thm:instance-upper-general},  
for any $0<\delta<1$, the regret of greedy ($\alpha=0$) on any problem instance is upper bounded by 
\begin{align*}
\cR(\cA) \le  \min_{i\in \iset{k}} \left(\Delta_{i} + \sqrt{\frac{2}{n_{i}}\log\frac{k}{\delta}} + \max_{j>i} \sqrt{\frac{2}{n_{j}}\log\frac{k}{\delta}} \right) + \delta \, .
\end{align*}
\label{coro:ub-greedy-instance}
\end{corollary}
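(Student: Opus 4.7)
The plan is to deduce Corollary~\ref{coro:ub-greedy-instance} directly from Corollary~\ref{coro:instance-upper-general-simplified}, instantiated at the greedy choice $\alpha=0$ so that $b_i=0$, $U_i = \mu_i + \beta_\delta/\sqrt{n_i}$, and $L_i = \mu_i - \beta_\delta/\sqrt{n_i}$ with $\beta_\delta = \sqrt{2\log(k/\delta)}$. Using the looser form noted in the remark after Corollary~\ref{coro:instance-upper-general-simplified}, we have $\cR(\cA) \le \Delta_h + \delta\,\Delta_{\max}$. Since means lie in $[0,1]$, $\Delta_{\max}\le 1$ and hence $\delta\Delta_{\max}\le \delta$, so the task reduces to proving that for every $i\in[k]$,
\[
\Delta_h \;\le\; \Delta_i \;+\; \sqrt{\frac{2}{n_i}\log\frac{k}{\delta}} \;+\; \max_{j>i}\sqrt{\frac{2}{n_j}\log\frac{k}{\delta}},
\]
with the usual convention that a maximum over the empty set is zero.

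To establish this inequality I would split on the position of $h$ relative to the fixed reference index $i$. When $h\le i$, the sorting assumption $\mu_1\ge\cdots\ge \mu_k$ yields $\Delta_h\le \Delta_i$ and the inequality is immediate. When $h > i$, I would invoke the definition of $h$: since $i < h$, arm $i$ contributes to the lower-confidence maximum, so
\[
L_i \;\le\; \max_{j<h} L_j \;<\; \max_{j'\ge h} U_{j'}.
\]
Hence there exists $j^\ast \ge h$ with $\mu_i - \beta_\delta/\sqrt{n_i} < \mu_{j^\ast} + \beta_\delta/\sqrt{n_{j^\ast}}$. Using $\mu_h \ge \mu_{j^\ast}$ (from monotonicity together with $j^\ast \ge h$), rearranging gives
\[
\Delta_h - \Delta_i \;=\; \mu_i - \mu_h \;\le\; \mu_i - \mu_{j^\ast} \;<\; \frac{\beta_\delta}{\sqrt{n_i}} + \frac{\beta_\delta}{\sqrt{n_{j^\ast}}} \;\le\; \frac{\beta_\delta}{\sqrt{n_i}} + \max_{j>i}\frac{\beta_\delta}{\sqrt{n_j}},
\]
which is exactly the desired bound once $\beta_\delta$ is substituted. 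Minimizing over $i$ and combining with $\delta\Delta_{\max}\le \delta$ completes the argument.

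I do not expect a significant obstacle here: the analytic content is already carried by Theorem~\ref{thm:instance-upper-general} and its corollary, and only careful bookkeeping is required to turn the abstract index $h$ into a concrete minimum over reference arms. The only mild subtleties are the boundary cases $i=1$ (where the lower-confidence maximum is over an empty set) and $i=k$ (where $\max_{j>i}$ is empty), both of which are absorbed by the convention above and by the trivial first case $h\le i$. The value of the corollary lies less in the proof technique than in the interpretability of the resulting expression, which exposes how greedy's instance-dependent regret is controlled by the best trade-off between a suboptimality gap $\Delta_i$ and the deviation terms at arms $i$ and the ``least explored'' arm among those with larger index.
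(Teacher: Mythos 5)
Your proof is correct and follows essentially the same route as the paper's: the condition defining $h$ in Corollary~\ref{coro:instance-upper-general-simplified} specialized to $b_i=0$ is precisely the condition the paper uses to define its critical index $\hat{i}$ for greedy, and your case split ($h\le i$ versus $h>i$, with the witness $j^\ast\ge h$ and monotonicity of the means) reproduces the paper's bound $\Delta_{\hat i}\le \Delta_i+\sqrt{\tfrac{2}{n_i}\log\tfrac{k}{\delta}}+\max_{j>i}\sqrt{\tfrac{2}{n_j}\log\tfrac{k}{\delta}}$ verbatim. The only cosmetic difference is that you factor the argument through Corollary~\ref{coro:instance-upper-general-simplified} and its remark rather than re-deriving the high-probability event from Theorem~\ref{thm:instance-upper-general}, which is a perfectly valid (and slightly tidier) organization.
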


\begin{corollary}[Regret Upper bound for LCB]
Following the setting of Theorem~\ref{thm:instance-upper-general}, 
for any $0<\delta<1$, the regret of LCB ($\alpha=-\beta_\delta$) on any problem instance is upper bounded by
\begin{align*}
\cR(\cA)\leq \min_{i\in \iset{k}} \Delta_i + \sqrt{\frac{8}{n_{i}}\log\frac{k}{\delta}} + \delta\, .
\end{align*}
\label{coro:ub-lcb-instance}
\end{corollary}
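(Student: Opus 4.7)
The plan is to specialize the general bound from Corollary~\ref{coro:instance-upper-general-simplified} (and the remark that immediately follows it) to the LCB algorithm, and then carefully bound the resulting index $h$. For LCB one has $b_i=-\beta_\delta/\sqrt{n_i}$, so $U_i = \mu_i+b_i+\beta_\delta/\sqrt{n_i}=\mu_i$ and $L_i = \mu_i - 2\beta_\delta/\sqrt{n_i}$. Starting from the relaxed bound $\cR(\cA)\le \Delta_h + \delta\Delta_{\max}$ and using $\Delta_{\max}\le 1$ (since means lie in $[0,1]$), the task reduces to showing
\begin{align*}
\Delta_h \;\le\; \min_{i\in[k]} \Big(\Delta_i + \tfrac{2\beta_\delta}{\sqrt{n_i}}\Big),
\end{align*}
which is exactly the stated bound once one substitutes $\beta_\delta=\sqrt{2\log(k/\delta)}$ so that $2\beta_\delta/\sqrt{n_i}=\sqrt{8\log(k/\delta)/n_i}$.

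Next I would translate the defining condition of $h$ into a usable form. Because $\mu_1\ge \cdots \ge \mu_k$, we have $\max_{j'\ge i} U_{j'}=\mu_i$, so the set defining $h$ becomes $\{i\in[k]:\max_{j<i} L_j < \mu_i\}$; as $h$ is the maximum element of this set, in particular $L_j<\mu_h$ for every $j<h$.

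The claim then follows by a short case split. Let $i^*\in\arg\min_i(\Delta_i+2\beta_\delta/\sqrt{n_i})$ and $v^*=\Delta_{i^*}+2\beta_\delta/\sqrt{n_{i^*}}$. If $h\le i^*$, monotonicity of the gaps under the assumed ordering of means gives $\Delta_h\le \Delta_{i^*}\le v^*$. Otherwise $h>i^*$, and taking $j=i^*$ in the display above yields $\mu_{i^*}-2\beta_\delta/\sqrt{n_{i^*}}<\mu_h$, which rewrites as $\Delta_h-\Delta_{i^*}<2\beta_\delta/\sqrt{n_{i^*}}$, i.e.\ $\Delta_h<v^*$. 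Combining the two cases gives the desired reduction, and hence the corollary.

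The most delicate step is the bookkeeping around the definition of $h$: correctly reading off that $L_j<\mu_h$ for all $j<h$, and recognizing that the minimizer $i^*$ is always an admissible witness once $i^*<h$. Everything else is arithmetic directly from the general corollary and the value of $\beta_\delta$.
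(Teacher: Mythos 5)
Your proof is correct and follows essentially the same route as the paper's: the critical index $h$ you extract from Corollary~\ref{coro:instance-upper-general-simplified} coincides with the index $\hat{i}$ the paper constructs directly from Theorem~\ref{thm:instance-upper-general} (for LCB one has $U_{j'}=\mu_{j'}$ and $L_j=\mu_j-\sqrt{8\log(k/\delta)/n_j}$, so the two defining conditions are literally the same), and your two-case bound on $\Delta_h$ via the witness $i^*$ is the same argument the paper runs with an arbitrary index $i$. Routing through the already-established corollary and its remark, rather than re-optimizing $g_i(\eta)$ from scratch, is a legitimate and slightly cleaner packaging of the same proof.
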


\begin{corollary}[Regret Upper bound for UCB]
Following the setting of Theorem~\ref{thm:instance-upper-general},
for any $0<\delta<1$, the regret of UCB ($\alpha=\beta_\delta$) on any problem instance is upper bounded by
\begin{align*}
\cR(\cA)  \le \min_{i \in \iset{k}} \left( \Delta_i + \max_{j > i} \sqrt{\frac{8}{n_{j}}\log\frac{k}{\delta}} \right) + \delta \, .
\end{align*}
\label{coro:ub-ucb-instance}
\end{corollary}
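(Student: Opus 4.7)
The plan is to derive the UCB bound by specializing the general index-algorithm result in Corollary~\ref{coro:instance-upper-general-simplified} (together with the remark that relaxes it to $\cR(\cA)\le \Delta_h + \delta\Delta_{\max}$). For UCB we have $b_i=\beta_\delta/\sqrt{n_i}$, which cleanly simplifies the quantities appearing in the definition of $h$:
\begin{align*}
L_i \;=\; \mu_i + b_i - \frac{\beta_\delta}{\sqrt{n_i}} \;=\; \mu_i, \qquad
U_i \;=\; \mu_i + b_i + \frac{\beta_\delta}{\sqrt{n_i}} \;=\; \mu_i + \frac{2\beta_\delta}{\sqrt{n_i}}.
\end{align*}
Since the arms are indexed so that $\mu_1\ge \mu_2\ge\cdots\ge \mu_k$, for every $i\ge 2$ we have $\max_{j<i}L_j=\mu_1$, so the index $h$ from the Corollary satisfies
\begin{align*}
h \;=\; \max\Bigl\{\,i\in[k]:\; \mu_1 < \max_{j'\ge i}\bigl(\mu_{j'} + 2\beta_\delta/\sqrt{n_{j'}}\bigr)\Bigr\}.
\end{align*}

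Next I would bound $\Delta_h$ by $\min_{i^{*}}\!\bigl(\Delta_{i^{*}} + \max_{j>i^{*}}2\beta_\delta/\sqrt{n_j}\bigr)$ by a case split on $i^{*}$. If $h\le i^{*}$, then monotonicity of $\Delta_i$ in $i$ (again using the sorted-means convention) gives $\Delta_h\le \Delta_{i^{*}}$. If $h>i^{*}$, the definition of $h$ produces some $j'\ge h>i^{*}$ with $2\beta_\delta/\sqrt{n_{j'}} > \mu_1-\mu_{j'}=\Delta_{j'}\ge \Delta_h$, and hence $\Delta_h \le \max_{j>i^{*}} 2\beta_\delta/\sqrt{n_j}$. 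Combining the two cases, taking the minimum over $i^{*}$, and using $2\beta_\delta=\sqrt{8\log(k/\delta)}$ together with the bound $\Delta_{\max}\le 1$ on the residual $\delta\,\Delta_{\max}$ term from Corollary~\ref{coro:instance-upper-general-simplified}, yields exactly
\begin{align*}
\cR(\cA) \;\le\; \min_{i\in[k]}\Bigl(\Delta_i + \max_{j>i}\sqrt{\tfrac{8}{n_j}\log(k/\delta)}\Bigr) + \delta,
\end{align*}
which is the claimed inequality (with the convention that the $\max$ over an empty set is $0$ when $i=k$, so the $i=k$ term recovers the trivial bound $\Delta_k+\delta$).

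The only nontrivial step is the structural observation used in the $h>i^{*}$ case, namely that UCB's $h$-threshold forces the existence of a late-index arm whose confidence bonus already dominates its suboptimality gap. This is exactly the "optimism pays for overshoot" phenomenon, and is what makes the UCB bound feature $\max_{j>i}$ (rather than $\max_{j\ge i}$ or $\min_{j\ge i}$) of the bonuses. I expect no technical obstacle beyond carefully tracking the index conventions and verifying the edge case $i=k$; all concentration is already absorbed into the $\beta_\delta$ factor inherited from Corollary~\ref{coro:instance-upper-general-simplified}, so no new Hoeffding argument is needed.
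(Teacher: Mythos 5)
Your proposal is correct and follows essentially the same route as the paper: the paper's proof of Corollary~\ref{coro:ub-ucb-instance} works directly from Theorem~\ref{thm:instance-upper-general} by choosing $\eta=\mu_1$ and defining $\hat{i}$ as the largest index with $\max_{j\ge i}\bigl(\mu_j+\sqrt{8\log(k/\delta)/n_j}\bigr)>\mu_1$, which is exactly your $h$ from Corollary~\ref{coro:instance-upper-general-simplified} specialized to $b_i=\beta_\delta/\sqrt{n_i}$, and it bounds $\Delta_{\hat i}$ by the same two-case analysis you give. Your routing through the already-proven Corollary~\ref{coro:instance-upper-general-simplified} and its remark is a clean packaging of the identical argument, and the witness-extraction step in your $h>i^{*}$ case is sound.
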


\begin{remark}
The results 
in these corollaries sacrifice
the tightness of instance-dependence to obtain cleaner bounds for the different algorithms. The tightest instance dependent bounds can be derived from Theorem~\ref{thm:instance-upper-general} by 
optimizing
$\eta$. 
\end{remark}

\if0
\todoy[inline]{
Compare these bounds, LCB v.s. UCB might be interesting.
Propose arguments according to upper bounds
Verify these arguments through experiments.
Use the permutation model to explain why LCB might be more favorable in general (requires $k$ to be large.)
}

\todot{The above are really nice clean bounds and they say something interesting about the algorithms. A lot of the instance-dependence has been lost
now though. Should we remark on what happens when $(n_i)$ get large? Note that now the minimax theorems are (more or less) corollaries of these.}

\todob{Upvote! We may discuss the scenario under which the LCB, UCB, and greedy domininate each other, in terms of the sample quality, i.e., $n_i$. For example, an interesting practical case is the behavior policy is quite good but not optimal. Could we say something about the selection criterion in this case? Intuitively, we should use LCB in this case.  }
\fi

\paragraph{Discussion.}
The regret upper bounds presented above suggest that although they are all nearly minimax optimal, UCB, LCB and greedy exhibit 
distinct behavior on individual instances.
Each
will eventually select the best arm with high probability
when $n_i$ gets large for \emph{all} $i\in \iset{k}$, 
but their performance 
can be
very different when $n_i$ gets large for only a \emph{subset} of arms $S\subset \iset{k}$. 
For example,
LCB performs well whenever $S$ contains a good arm 
(i.e., 
with small $\Delta_i$ and large $n_i$). 
UCB performs well
when 
there 
is
a good arm $i$ 
such that all worse arms
are in $S$ ($n_j$ 
large for all $j>i$). 
For the greedy algorithm, the regret upper bound is small only when 
there is a good arm $i$ where $n_j$ is large for all $j \ge i$, in which situation both LCB and UCB 
perform well.

Clearly there are
instances where LCB performs much better than UCB 
and vice versa.
Consider an environment where there are two groups of arms: one 
with
higher rewards and 
another with lower rewards.
The behavior policy 
plays a subset of the arms $S\subset\iset{k}$ 
a large number of times
and ignores the rest. 
If $S$ contains at least one good arm but no bad arm,
LCB will select a good played arm~(with high probability)
while UCB will select a bad unplayed arm.
If $S$ 
consists
of all bad arms,
then LCB will select a bad arm by being pessimistic about the unobserved good arms
while UCB is guaranteed to select a good arm by being optimistic. 

This example actually
raises a potential reason 
to favor LCB, since
the condition for UCB to outperform LCB is stricter: requiring the behavior policy to play all bad arms
while ignoring all good arms. 
To formalize this, we compare the upper bounds for the two algorithms by taking
the $n_i$ for a subset of arms $i\in S\subset \iset{k}$ to infinity.
For $\cA\in \{\greedyalg, \lcbalg, \ucbalg\}$, 
let $\hat{\cR}_S(\cA)$ be the regret upper bounds with $\{n_i\}_{i\in S}\to\infty$ and $\{n_i\}_{i\notin S} = 1$ while
fixing $\mu_1,...,\mu_k$  in
Corollary~\ref{coro:ub-greedy-instance}, \ref{coro:ub-lcb-instance},  and \ref{coro:ub-ucb-instance} respectively. 
Then 
LCB 
dominates
the three algorithms with high probability 
under a uniform prior for $S$:
\begin{proposition}
Suppose $\mu_1>\mu_2>...>\mu_k$ and $S\subset \iset{k}$ is uniformly sampled from all subsets with size $m<k$,
then 
\begin{align*}
\Prb{\hat{\cR}_S(\lcbalg) < \hat{\cR}_S(\ucbalg)} \ge 1 - 
\frac{(k-m)!m!}{k!} \,.
\end{align*}
\label{prop:lcb-vs-ucb}
\end{proposition}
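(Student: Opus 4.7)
The plan is to simplify the upper bounds from Corollaries~\ref{coro:ub-lcb-instance} and~\ref{coro:ub-ucb-instance} in the stated regime and reduce the comparison $\hat{\cR}_S(\lcbalg) < \hat{\cR}_S(\ucbalg)$ to a purely combinatorial condition on $S$. The key preliminary observation is that for any $\delta\in(0,1)$ and $k\ge 2$ we have $\sqrt{8\log(k/\delta)}>1\ge\Delta_{\max}$, so in each of the two minima appearing in those corollaries any contribution indexed by an arm $i\notin S$ (for which $n_i=1$) is strictly dominated by the corresponding contribution from an arm in $S$ (for which $n_i\to\infty$), and may be discarded.

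After this pruning, the LCB bound collapses to
\[
\hat{\cR}_S(\lcbalg) \;=\; \min_{i\in S}\Delta_i + \delta \;=\; \Delta_{\min S} + \delta,
\]
where the second equality uses that $\mu_1>\cdots>\mu_k$ makes $i\mapsto\Delta_i$ strictly increasing. For UCB, the inner expression $\max_{j>i}\sqrt{8\log(k/\delta)/n_j}$ vanishes precisely when every index $j>i$ belongs to $S$, equivalently when $i\ge\max([k]\setminus S)$; otherwise it is at least $1$ and is discarded as above. Writing $i^\star_S = \max([k]\setminus S)$, which is well-defined because $|S|=m<k$, this yields
\[
\hat{\cR}_S(\ucbalg) \;=\; \min\{\Delta_i : \{i+1,\dots,k\}\subseteq S\} + \delta \;=\; \Delta_{i^\star_S} + \delta.
\]
Since the gaps are strictly monotone, the inequality $\hat{\cR}_S(\lcbalg)<\hat{\cR}_S(\ucbalg)$ is equivalent to the combinatorial condition $\min S < i^\star_S$.

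It remains to count the sets $S$ for which this condition fails. Because $S$ and $[k]\setminus S$ partition $[k]$, the inequality $\min S \ge \max([k]\setminus S)$ forces every element of $S$ to exceed every element of its complement; with $|S|=m$ the unique such subset is $S^\star=\{k-m+1,\dots,k\}$. A uniformly sampled size-$m$ subset equals $S^\star$ with probability $\binom{k}{m}^{-1}=\tfrac{(k-m)!\,m!}{k!}$, and passing to the complement gives the proposition.

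The main obstacle is really just the first paragraph: one has to carefully identify which terms in each minimum can be dropped and verify that what remains admits the closed forms $\Delta_{\min S}$ and $\Delta_{i^\star_S}$. Once that reduction is made, the comparison reduces to a one-line strict-monotonicity argument, and the probability bound is a one-line combinatorial count.
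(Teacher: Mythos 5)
Your proposal is correct and follows essentially the same route as the paper's own proof: both evaluate the limits of the LCB and UCB bounds using the observation $\sqrt{8\log(k/\delta)}>1\ge\Delta_{\max}$ to discard the finite-count terms, reduce the comparison to $\min S$ versus the smallest index $i$ with $\{i+1,\dots,k\}\subseteq S$ (your $i^\star_S$, the paper's $h$), and note that the comparison fails only for $S=\{k-m+1,\dots,k\}$, which a uniform draw hits with probability $\binom{k}{m}^{-1}$. The only cosmetic difference is that you derive an exact closed form for $\hat{\cR}_S(\ucbalg)$ where the paper settles for the lower bound $\Delta_h+\delta$, which suffices equally well.
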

This lower bound
is $1/2$ when $k=2$ and approaches $1$ when $k$ increases for any $0 < m < k$ since it is always lower bounded by $1 - 1/k$.
The same argument applies when comparing LCB to greedy.

To summarize, when comparing different algorithms by their upper bounds, we have the following 
observations:
(i) These algorithms behave differently on different instances, and none of them outperforms
the others on all instances.
(ii) Both scenarios where LCB is better and scenarios where UCB is better exist.
(iii) LCB is more favorable when $k$ is not too small because it is the best option
among these algorithms on most of the instances.

\paragraph{Simulation results. }
Since our discussion is based on comparing only the upper bounds~(instead of the exact regret) for different algorithms,
it is a question that whether these statements still hold in terms of their actual performance.
To answer this question, we verify these statements through experiments on synthetic problems.
The details of these synthetic experiments can be found in the supplementary material.

We first verify that there exist instances where LCB is the best among the three algorithms as well as instances
where UCB is the best.
For LCB to perform well, we construct two $\epsilon$-greedy behavior policies on a $100$-arm bandit where the best arm or a near-optimal arm is selected 
to be played with a high frequency while the other arms are uniformly played with a low frequency.
Figure~\ref{fig:lcb-1} and~\ref{fig:lcb-2} show that LCB outperforms UCB and greedy on these two instances,
verifying our observation from the upper bound~(Corollary~\ref{coro:ub-lcb-instance}) that LCB only requires a good behavior policy 
while UCB and greedy require bad arms to be eliminated~(which is not the case for $\epsilon$-greedy policies).
For UCB to outperform LCB, we set the behavior policy to play 
a set of near-optimal arms with only a small number of times and play the rest of the arms uniformly. 
Figure~\ref{fig:ucb-1} and~\ref{fig:ucb-2} show that UCB outperforms LCB and greedy on these two instances, verifying our observation from the upper bound~(Corollary~\ref{coro:ub-ucb-instance}) that UCB only requires all worse arms to be identified.

We now verify the statement that LCB is the best option on most of the instances when $k$ is not too small. We verify this statement in two aspects: First, we show that when $k=2$, LCB and UCB have an equal 
chance to be the better algorithm. More specifically, we fix $n_1 > n_2$~(note that if $n_1=n_2$ all index algorithms are the same as greedy) and vary $\mu_1 - \mu_2$ from $-1$ to $1$. 
Intuitively, when $|\mu_1-\mu_2|$ is large, the problem is relatively easy for all algorithms. For $\mu_1-\mu_2$ in the medium range, as it becomes larger, the good arm is tried more often, thus the problem becomes easier for LCB and harder for UCB. 
Figure~\ref{fig:two-arm-1} and ~\ref{fig:two-arm-2} 
confirm this and 
show that both LCB and UCB are the best option on half of the instances. Second, 
we show that as $k$ grows, LCB quickly becomes the more favorable algorithm,
outperforming UCB and greedy on an increasing fraction of instances.
More specifically, we vary $k$ and sample a set of instances from the prior distribution introduced in 
Proposition~\ref{prop:lcb-vs-ucb} with $|S|=k/2$ and $|S|=k/4$. Figure~\ref{fig:frac-1} and~\ref{fig:frac-2} shows that the fraction of instances where LCB is the best quickly approaches $1$ as $k$ increases.

\if0
We also compare these algorithms on a two-armed bandit, where the reward distribution for each arm $i$ is a Gaussian with unit variance. 
We fix the mean reward of the first arm $\mu_1=0$ and the number of observations of both arms, and investigate how the regret of different algorithms changes as a function of $\mu_1-\mu_2$. 
In Figure~\ref{fig:two-armed-bandit} we present the results for two cases $n_1=10, n_2=5$ and $n_1=100, n_2=10$. 
Intuitively, when $|\mu_1-\mu_2|$ is large, the problem is relatively easy for all algorithms. For $\mu_1-\mu_2$ in the medium range, as it becomes larger, the good arm is tried more often, thus the problem becomes easier for LCB and harder for UCB. 
It is clear that neither UCB nor LCB cannot dominate the other algorithm. In fact, their performance is symmetry over the instance where $\mu_1=\mu_2$.   
Also, the greedy algorithm is dominated by either LCB or UCB on all problem instances. 
\fi

\if0
\begin{figure*}[t]
\centering
\subfigure[LCB better]
{\includegraphics[width=4.0cm]{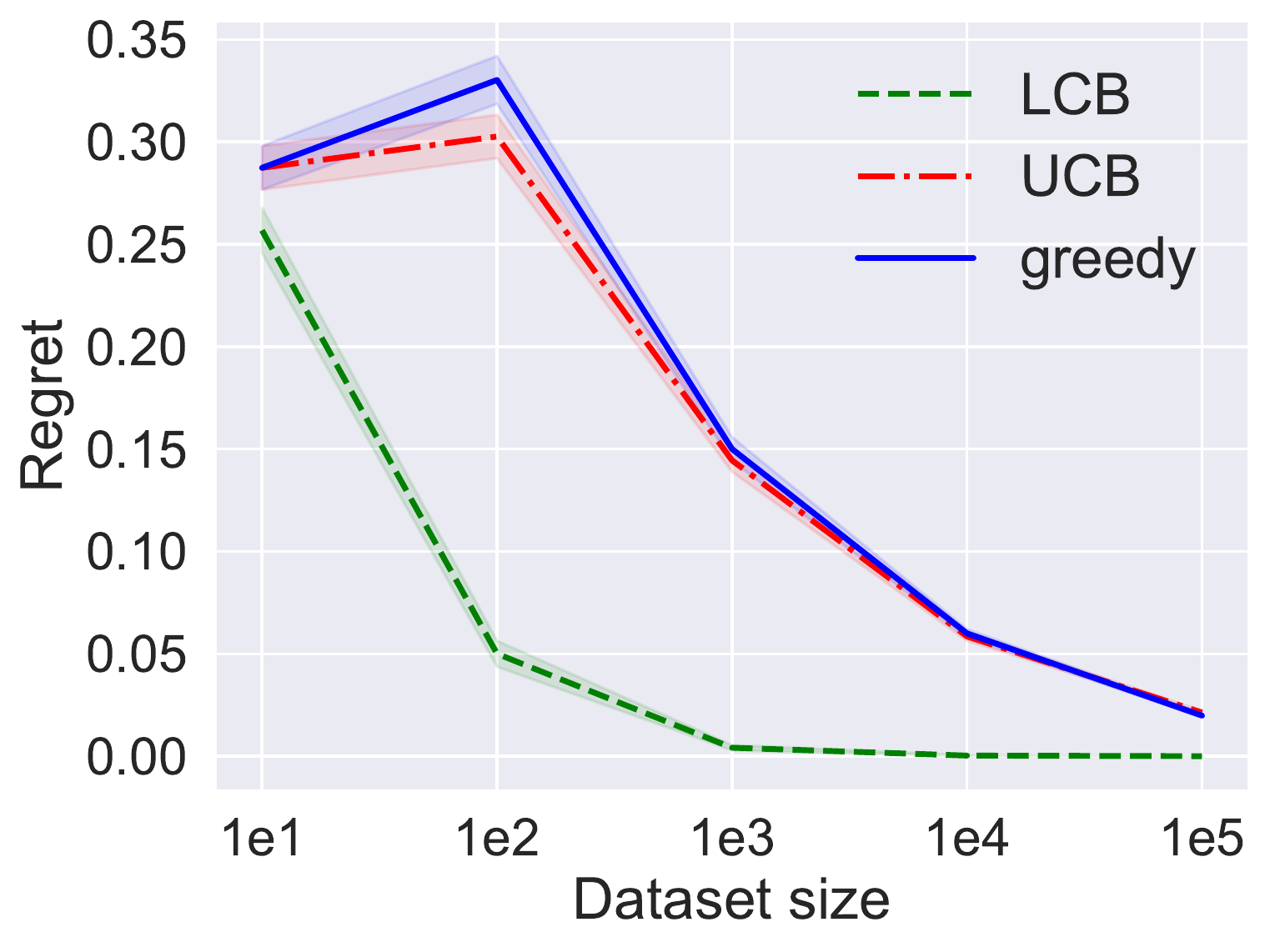}
\label{fig:lcb-better}
}
\subfigure[UCB better]
{\includegraphics[width=4.0cm]{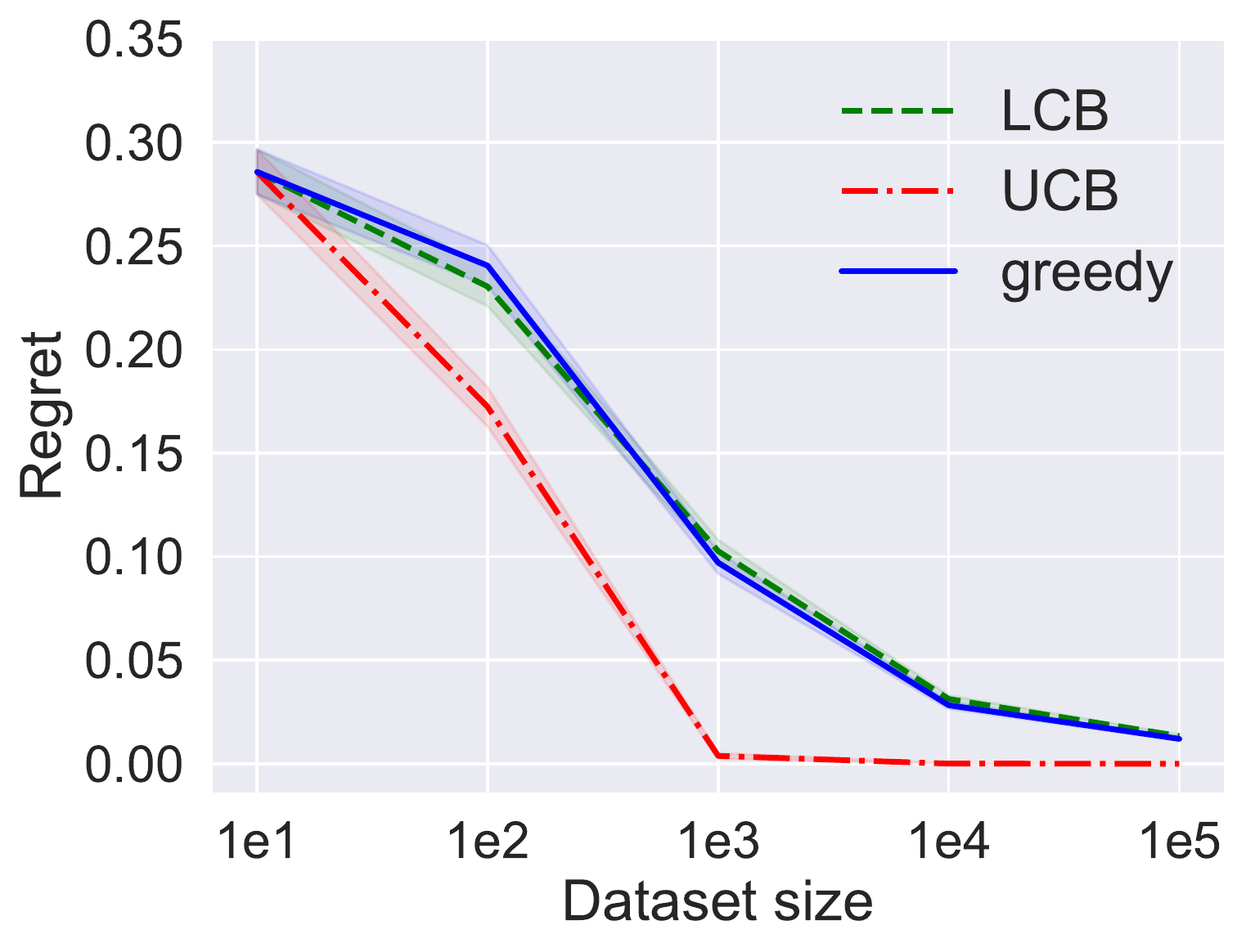}
\label{fig:ucb-better}
}
\subfigure[$k=2$]
{\includegraphics[width=4.0cm]{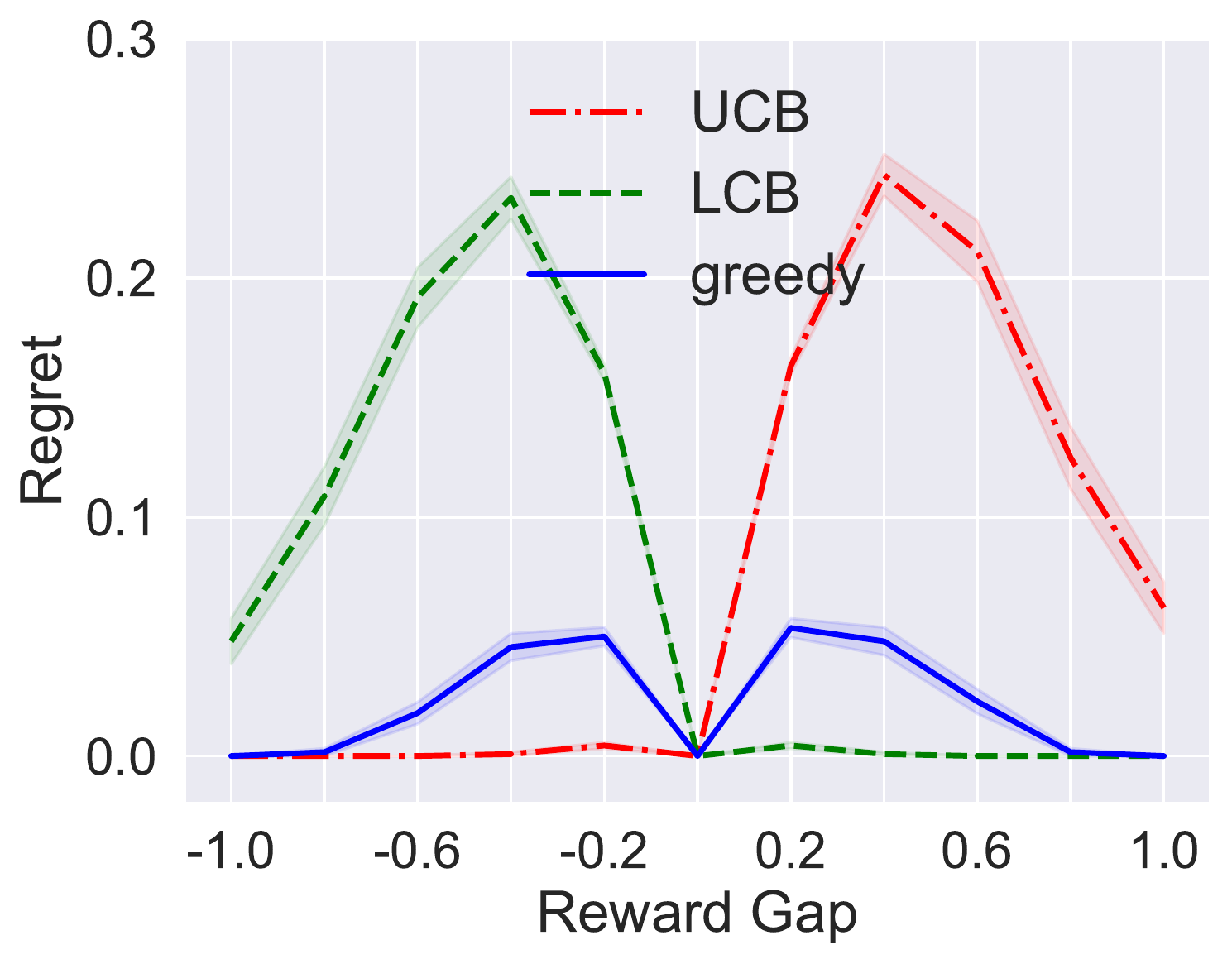}
\label{fig:two-arms}
}
\subfigure[$k\ge2$]
{\includegraphics[width=4.0cm]{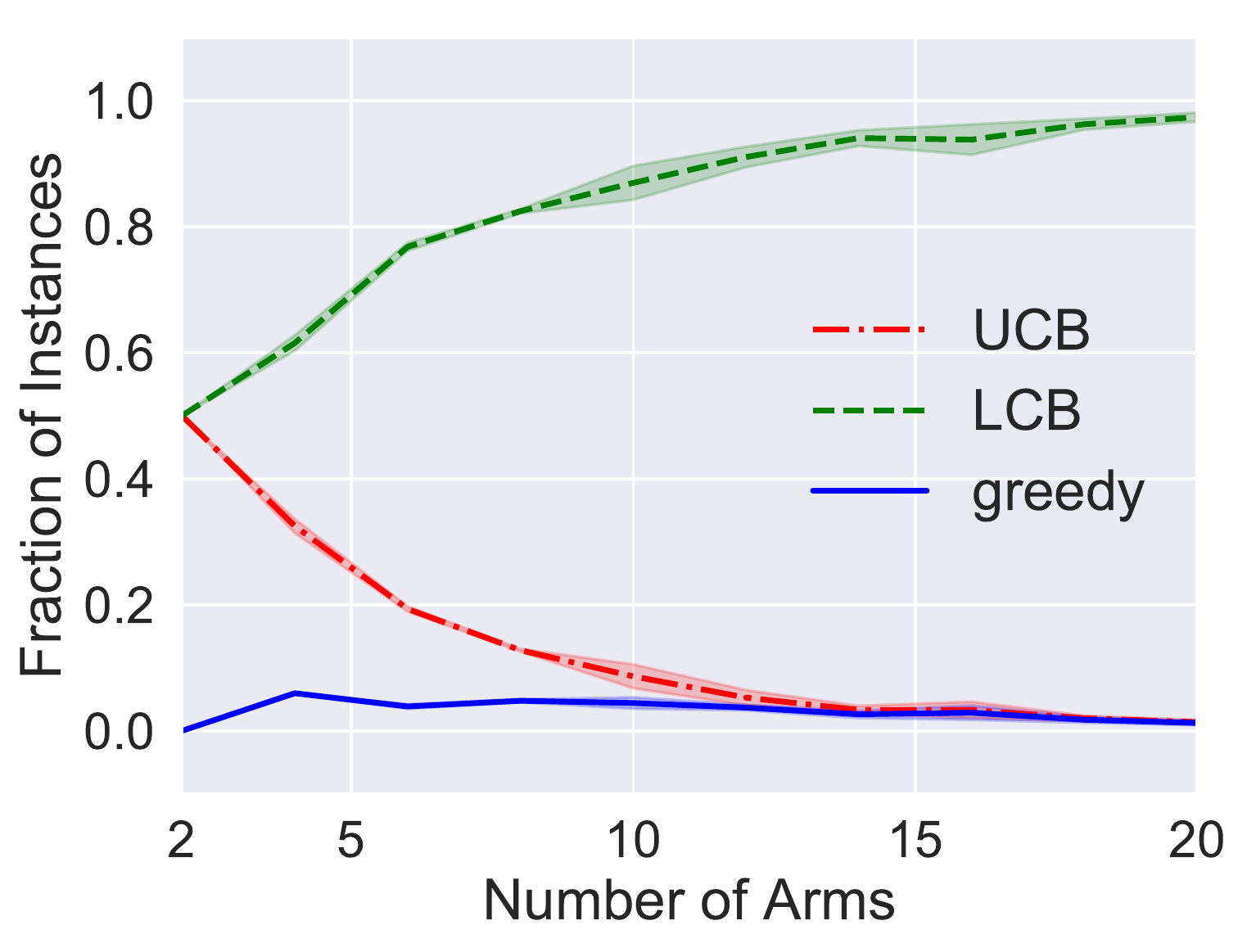}
\label{fig:more-arms}
}
\caption{Comparing UCB, LCB and greedy on synthetic problems. 
(a) Problem instance where LCB has the best performance. The data set is generated by a behavior policy that pulls the optimal arm with \emph{high} frequency and the other arms uniformly. 
(b) Problem instance where UCB has the best performance. The data set is generated by a behavior policy that pulls the best arm with very \emph{small} frequency and the other arms uniformly. 
(c) A set of two-armed bandit instances where both LCB and UCB dominate half of the instances.  
(d) 
For each $k$ and $m=k/2$, we first sample 100 vectors $\vec{\mu}=[\mu_1,...,\mu_k]$ and for each $\vec{\mu}$
we uniformly sample $100$~(if exist) subsets $S\subset{k}, |S|=m$ to generate up to $10$k instances.
We then count the fraction of instances where each algorithm performs better than the other two algorithms
among the randomly sampled set of instances. 
Experiment details are provided in the supplementary material. 
}
\label{fig:bandit}
\end{figure*}
\fi

\begin{figure*}[t]
\centering
\subfigure[LCB-1]
{\includegraphics[width=6.0cm]{figures/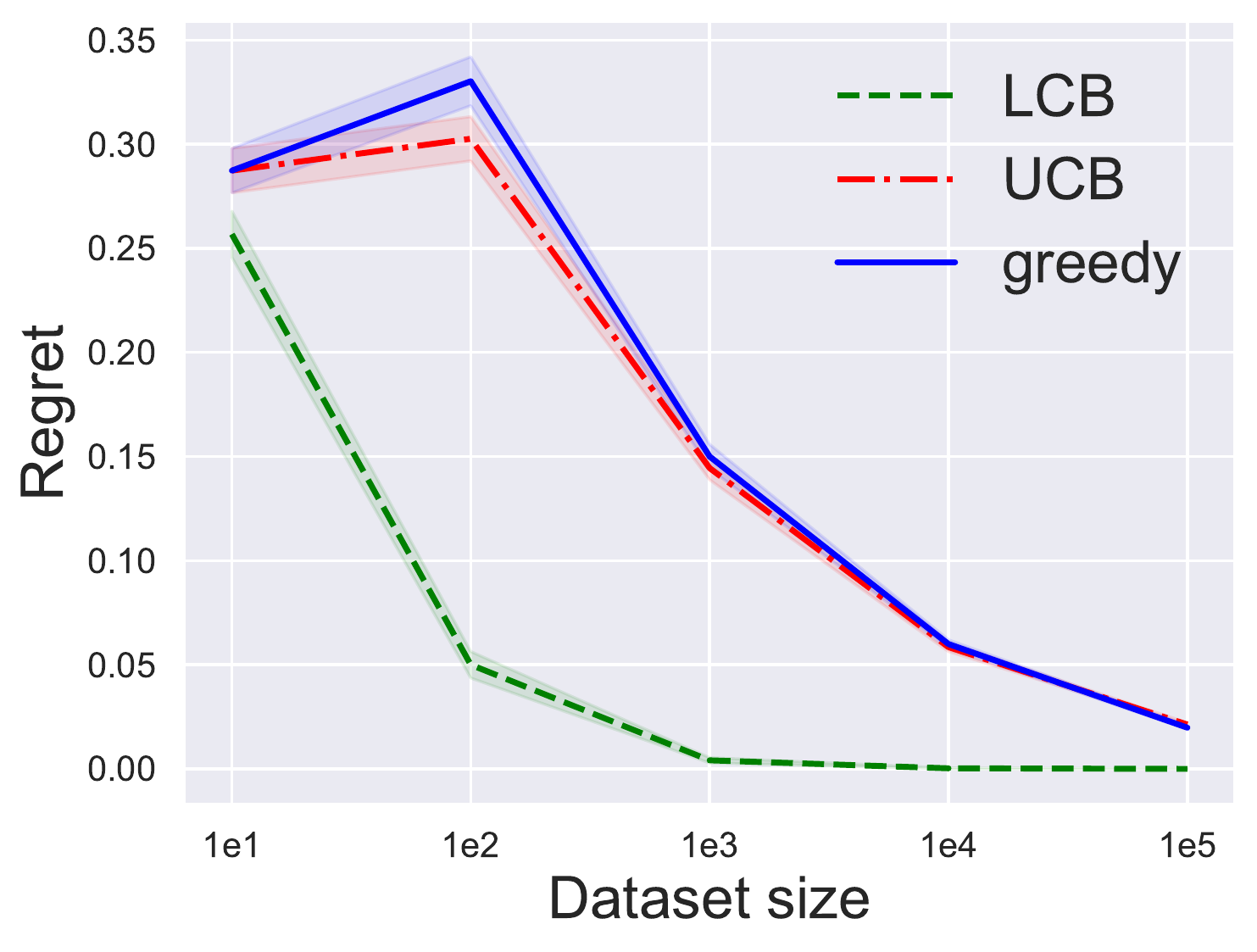}
\label{fig:lcb-1}
}
\subfigure[LCB-2]
{\includegraphics[width=6.0cm]{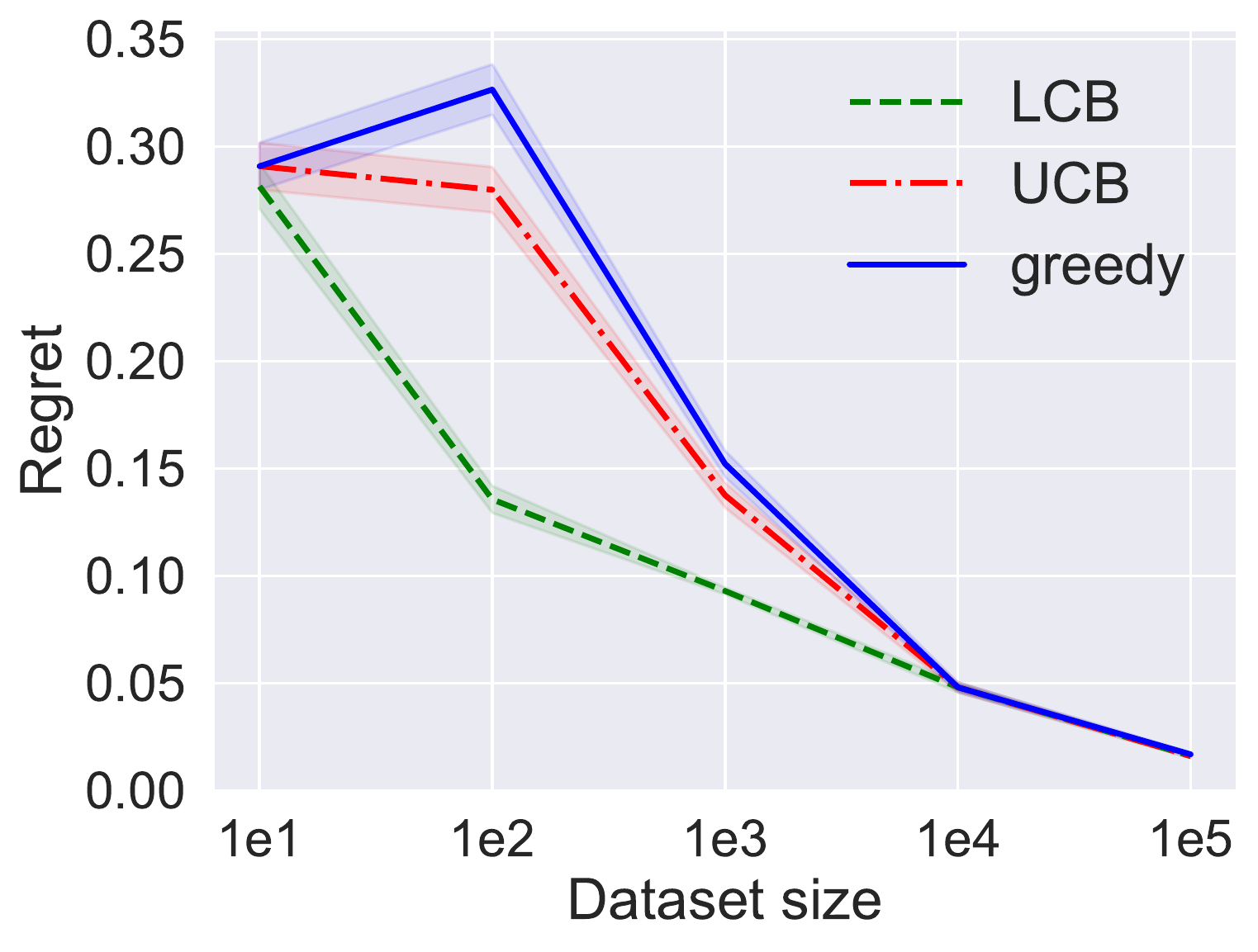}
\label{fig:lcb-2}
}
\subfigure[UCB-1]
{\includegraphics[width=6.0cm]{figures/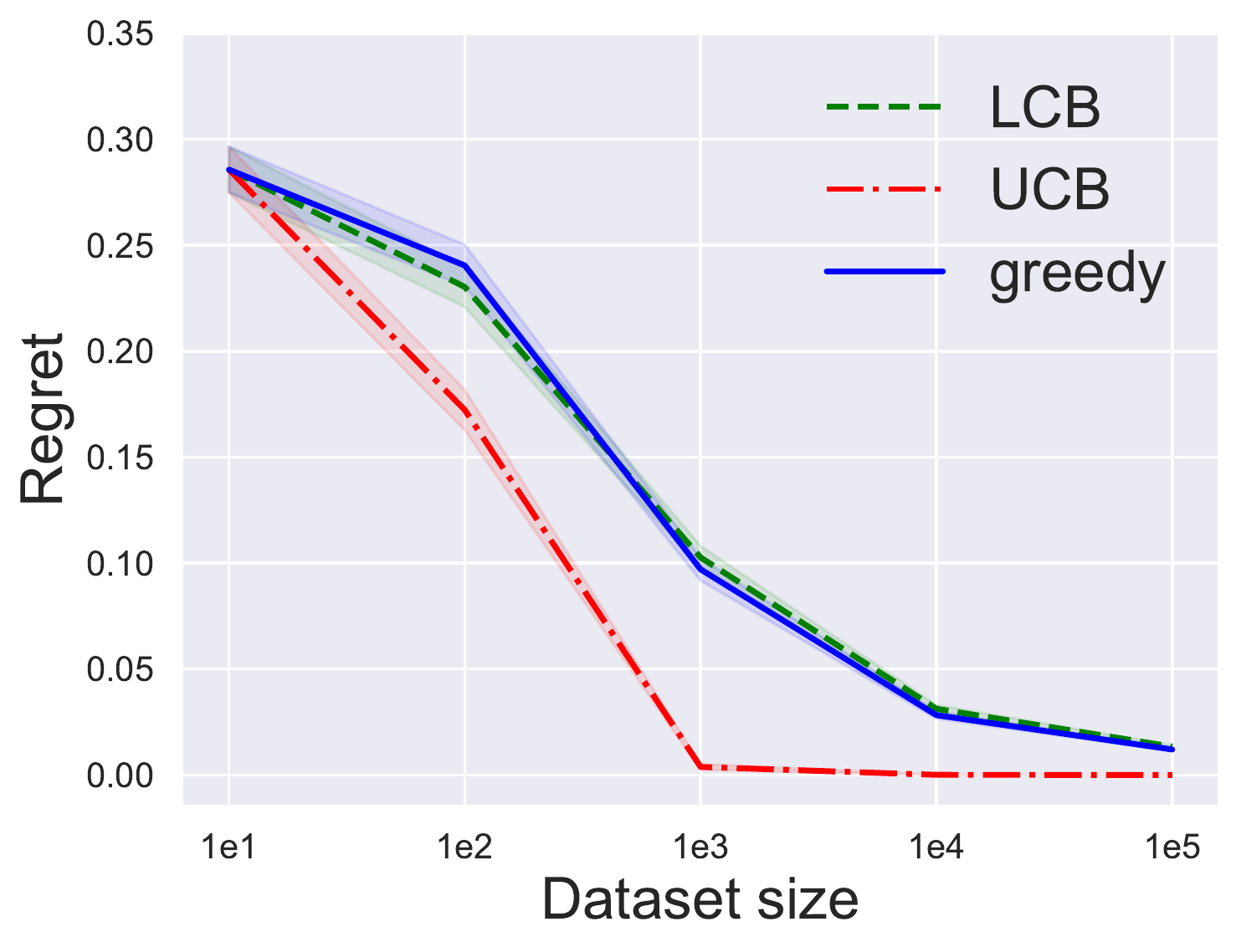}
\label{fig:ucb-1}
}
\subfigure[UCB-2]
{\includegraphics[width=6.0cm]{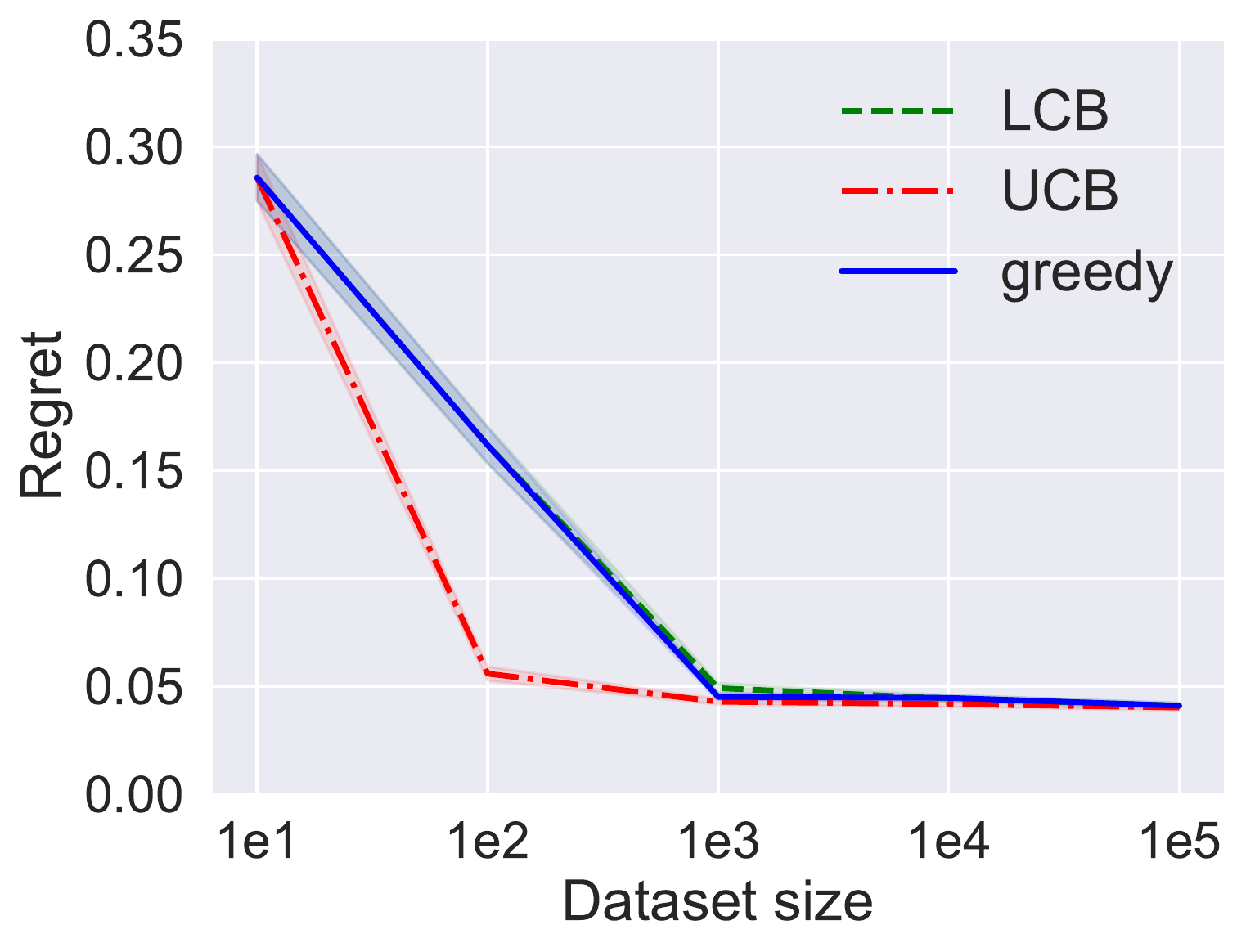}
\label{fig:ucb-2}
}
\caption{Comparing UCB, LCB and greedy on synthetic problems~(with $k=100$). 
(a) and (b): Problem instances where LCB has the best performance. The data set is generated by a behavior policy that pulls an arm~$i$ with \emph{high} frequency and the other arms uniformly. In (a) $i$ is the best arm
while in (b) $i$ is the $10$th-best arm.
(c) and (d): Problem instances where UCB has the best performance. The data set is generated by a behavior policy that pulls a set of good arms $\{j: j\le i\}$ 
with very \emph{small} frequency and the other arms uniformly. In (c) we use $i=1$ while in (d) we use $i=10$.  
Experiment details are provided in the supplementary material. 
}
\label{fig:bandit}
\end{figure*}
\begin{figure*}[ht]
\centering
\subfigure[$k=2, n_1/n_2=2$]
{\includegraphics[width=6.0cm]{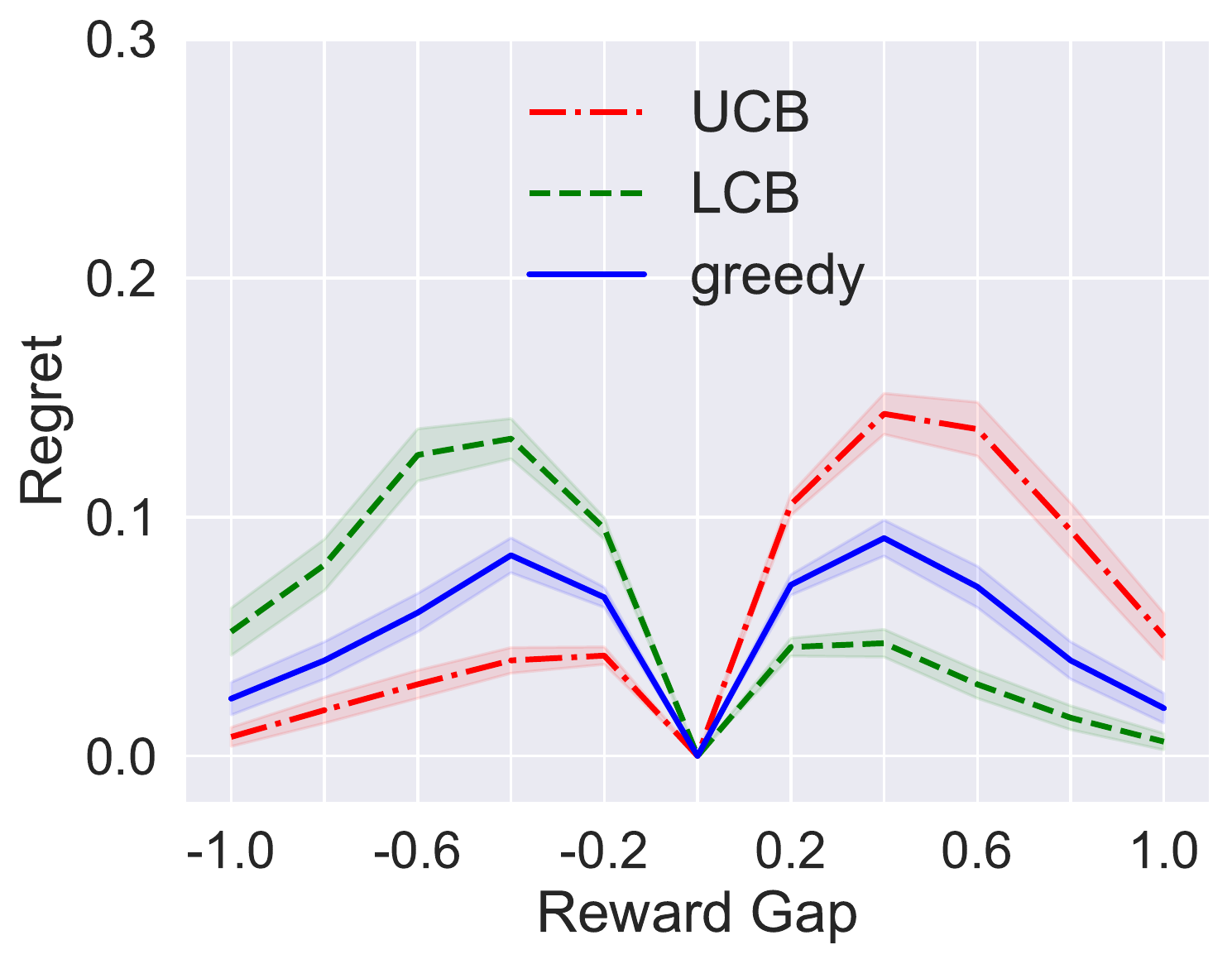}
\label{fig:two-arm-1}
}
\subfigure[$k=2, n_1/n_2=10$]
{\includegraphics[width=6.0cm]{figures/two_arm_n_ratio_10.pdf}
\label{fig:two-arm-2}
}
\subfigure[$|S|=k/2$]
{\includegraphics[width=6.0cm]{figures/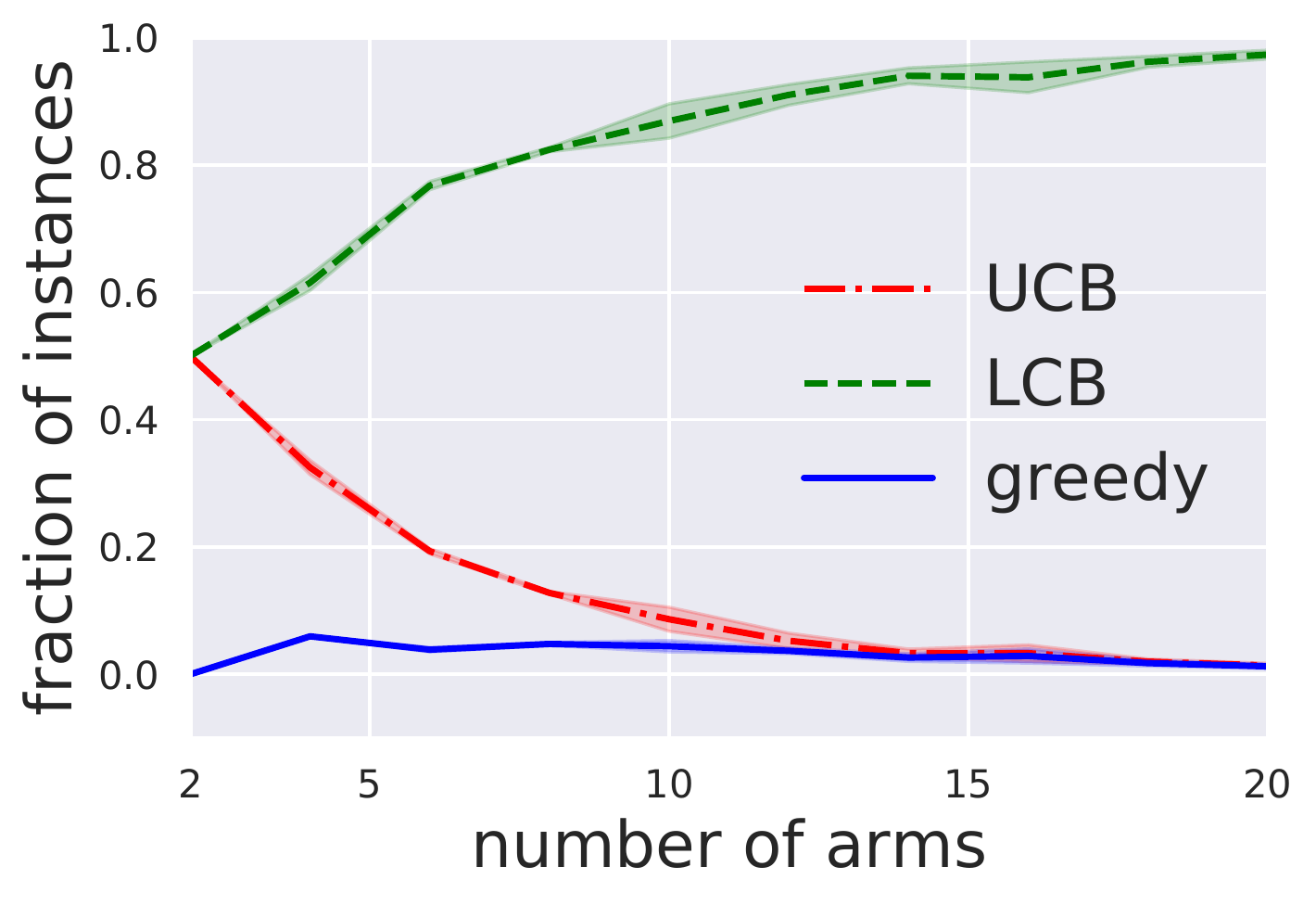}
\label{fig:frac-1}
}
\subfigure[$|S|=k/4$]
{\includegraphics[width=6.0cm]{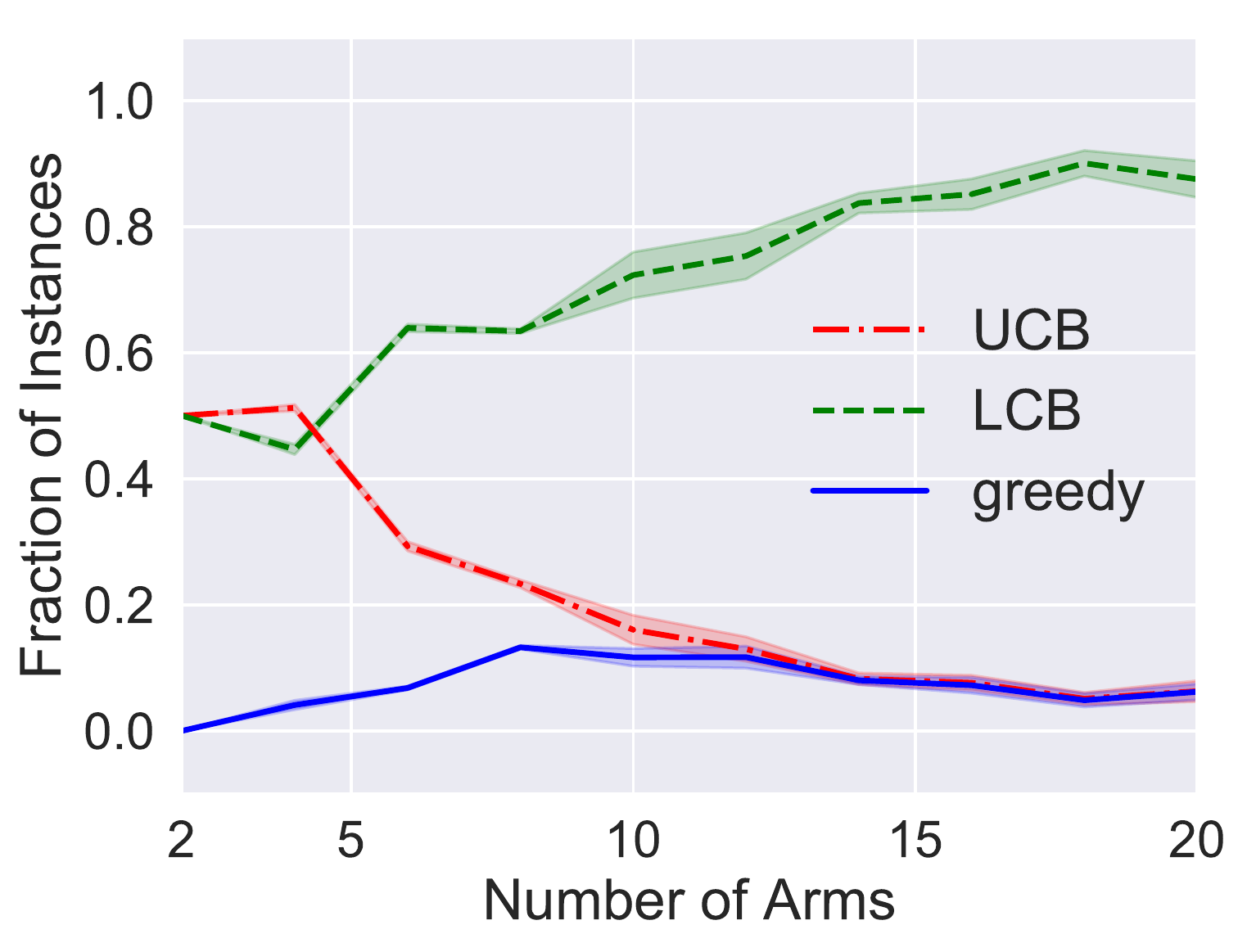}
\label{fig:frac-2}
}
\caption{Comparing UCB, LCB and greedy on synthetic problems. 
(a) and (b): A set of two-armed bandit instances where both LCB and UCB dominate half of the instances.  
(c) and (d): 
For each $k$,  we first sample 100 vectors $\vec{\mu}=[\mu_1,...,\mu_k]$ and for each $\vec{\mu}$
we uniformly sample $100$~(if exist) subsets $S\subset{k}, |S|=m$~($m=k/2$ in (c) and $m=k/4$ in (d)), to generate up to $10$k instances.
We then count the fraction of instances where each algorithm performs better than the other two algorithms
among the randomly sampled set of instances. 
Experiment details are provided in the supplementary material. 
}
\label{fig:two-armed-bandit}
\end{figure*}

\subsection{Instance-dependent Lower Bound}

We have established that, despite all being minimax optimal, index algorithms with different adjustment can exhibit very different performance on specific problem instances.
One might therefore wonder if instance optimal algorithms exist for batch policy optimization with finite-armed stochastic bandits. 
To answer this question, we next show that there is no instance optimal algorithm in the batch optimization setting for stochastic bandits, which is a very different outcome from
the setting of cumulative regret minimization for online stochastic bandits. 

For cumulative regret minimization, \citet{lai1985asymptotically} introduced an asymptotic notion of instance optimality \citep{lattimore2020bandit}. The idea is to first remove algorithms that are insufficiently adaptive, then define a yardstick (or benchmark) for each instance as the best (normalized) asymptotic performance that can be achieved with the remaining adaptive algorithms.
An algorithm that meets this benchmark over all instances is then considered to be an instance optimal algorithm. 

When 
adapting this notion of instance optimality to the batch setting
there are two decisions that need to be made: what is an appropriate notion of ``sufficient adaptivity'' and whether, of course, a similar asymptotic notion is sought or optimality can be adapted to the finite sample setting.
Here, we consider the asymptotic case, as one usually expects this  to be easier.

We consider the 2-armed bandit case ($k=2$) with Gaussian reward distributions $\cN(\mu_1, 1)$ and $\cN(\mu_2, 1)$ for each arm respectively. Recall that, in this setting, fixing $\bn=(n_1, n_2)$ each instance $\theta\in \Theta_{\bn}$ is defined by $(\mu_1, \mu_2)$. We assume that algorithms
only make decisions based on the sufficient statistic --- empirical means for each arm, which in this case reduces to $X=(X_1, X_2, \bn)$ with $X_i\sim \cN(\mu_i, 1/n_i)$.

To introduce an asymptotic notion,
we further denote $n=n_1 + n_2$, $\pi_1 = n_1/n$, and $\pi_2=n_2/n=1 - \pi_1$. 
Assume $\pi_1, \pi_2 >0$; then each $\bn$ can be uniquely defined by $(n, \pi_1)$ for $\pi_1\in (0, 1)$.
We also ignore the fact that $n_1$ and $n_2$ should be integers since we assume the algorithms can
only make decisions based on the sufficient statistic $X_i\sim \cN(\mu_i, 1/n_i)$, which is well defined even when $n_i$ is not an integer.

\begin{definition}[Minimax Optimality]
Given a constant $c\ge 1$, an algorithm is said to be minimax optimal if its worst case regret
is bounded by the minimax value of the problem up to a multiplicative factor $c$. We define the set 
of minimax optimal algorithms as
\begin{align*}
\cM_{\bn, c} = \left\{ \cA: \sup_{\theta\in \Theta_{\bn}}\cR(\cA, \theta) \le c\cdot \inf_{\cA'} \sup_{\theta\in \Theta_{\bn}}\cR(\cA', \theta) \right\}\,.
\end{align*}
\end{definition}
\begin{definition}[Instance-dependent Lower Bound]
Given a set of algorithms $\cM$, for each $\theta \in \Theta_{\bn}$, we define the instance-dependent lower
bound as $\cR^*_{\cM}(\theta)=\inf_{\cA\in \cM} \cR(\cA, \theta)$.
\end{definition}

The following theorem states the non-existence of instance optimal algorithms up to a constant multiplicative factor.
\begin{theorem}
Let $c_0$ be the constant in  minimax lower bound such that $\inf_{\cA} \sup_{\theta\in \Theta_{\bn}}\cR(\cA, \theta) \ge c_0/\sqrt{n_{\min}}$. Then for any $c > 2/c_0$ and any algorithm $\cA$, we have
\begin{align*}
\sup_{\theta\in \Theta_{\bn}} \frac{\cR(\cA, \theta)}{\cR^*_{\cM_{\bn, c}}(\theta)}
\ge \frac{ n_{\min} }{ n_{\min} + 4 } e^{\frac{\beta^2}{4} +  \frac{\beta}{4}\sqrt{n_{\min}}  }
\end{align*}
where $\beta=cc_0 - 2$.
\label{thm:instance-negative-1}
\end{theorem}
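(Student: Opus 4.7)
The approach is a two-point (Le Cam) testing argument, combined with an explicit construction of a minimax-optimal reference algorithm that is heavily biased toward the optimal arm of the test instance. Given any algorithm $\cA$, the plan is to exhibit a single instance $\theta \in \Theta_{\bn}$ on which $\cR(\cA, \theta)/\cR^*_{\cM_{\bn,c}}(\theta)$ matches the stated lower bound.

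WLOG assume $n_2 = n_{\min}$. For a parameter $\Delta \in (0, 1/2]$ to be selected, consider the twin instances
\begin{align*}
\theta_0 = (1/2,\ 1/2 - \Delta), \qquad \theta_1 = (1/2,\ 1/2 + \Delta),
\end{align*}
which differ only in the mean of arm 2 and have opposite optimal arms with gap $\Delta$. Since the only observation relevant to distinguishing $\theta_0$ from $\theta_1$ is $X_2 \sim \cN(\mu_2, 1/n_{\min})$, the Neyman--Pearson / Le Cam inequality yields
\begin{align*}
\PP_{\theta_0}(\cA = 2) + \PP_{\theta_1}(\cA = 1) \ge 2\Phi(-\Delta\sqrt{n_{\min}}),
\end{align*}
hence $\max_{j\in\{0,1\}} \cR(\cA, \theta_j) \ge \Delta\,\Phi(-\Delta\sqrt{n_{\min}})$. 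Applying the Mills-ratio lower bound $\Phi(-x) \ge \tfrac{x}{(x^2+1)\sqrt{2\pi}}e^{-x^2/2}$ produces an explicit exponential lower bound on the numerator; the choice $\Delta = 1/2$ (so that $(\Delta\sqrt{n_{\min}})^2 = n_{\min}/4$) is what ultimately produces the $n_{\min}/(n_{\min}+4)$ prefactor.

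For the denominator, for each $\theta_j$ I would exhibit an algorithm $\tilde\cA^{(j)} \in \cM_{\bn, c}$ strongly biased toward the optimal arm of $\theta_j$. A natural choice is the threshold rule $\tilde\cA^{(0)}_t(X) = 1$ iff $X_2 - X_1 \le t$, with a symmetric variant for $\theta_1$, for a threshold $t>0$. A careful analysis of $\sup_{\Delta'\in[0,1]} \Delta'\Phi\!\left((t-\Delta')/\sigma\right)$, where $\sigma^2 = 1/n_1 + 1/n_2 \le 2/n_{\min}$, shows that the worst-case regret of $\tilde\cA^{(0)}_t$ is at most $t + 2/\sqrt{n_{\min}}$. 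Setting $t = \beta/\sqrt{n_{\min}} = (cc_0 - 2)/\sqrt{n_{\min}}$ then bounds the worst case by $cc_0/\sqrt{n_{\min}} \le c \cdot \inf_{\cA'}\sup_\theta \cR(\cA', \theta)$, placing $\tilde\cA^{(0)}_t \in \cM_{\bn, c}$. On the favored instance $\theta_0$, this yields
$\cR^*_{\cM_{\bn,c}}(\theta_0) \le \cR(\tilde\cA^{(0)}_t, \theta_0) = \Delta\,\Phi\!\left(-(\Delta + t)/\sigma\right)$, to which the Mills-ratio upper bound $\Phi(-y) \le \tfrac{1}{y\sqrt{2\pi}}e^{-y^2/2}$ is applied.

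Forming the ratio at $\Delta = 1/2$ and simplifying, the exponential factors combine: the exponent in the denominator picks up $(\Delta + t)^2/(2\sigma^2)$ while the numerator contributes $-\Delta^2 n_{\min}/2$, and the algebra collapses to the claimed $\beta^2/4 + \beta\sqrt{n_{\min}}/4$; the Mills prefactors combine to $n_{\min}/(n_{\min}+4)$. The hardest step is Step 2 above: showing that the additive overhead beyond $t$ in the worst-case regret of the threshold algorithm is \emph{precisely} $2/\sqrt{n_{\min}}$ (so that the ``$-2$'' in $\beta = cc_0 - 2$ emerges with the right coefficient) requires locating the worst $\Delta^*$ via the Mills-ratio identity $\Phi(u^*)/\phi(u^*) + u^* = t/\sigma$ and bounding the resulting expression uniformly over admissible $\sigma \in [1/\sqrt{n_{\min}},\,\sqrt{2/n_{\min}}]$. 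Once this tight worst-case bound is in hand, the rest is routine algebra on Gaussian tails.
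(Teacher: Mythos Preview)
Your overall plan---a two-point testing argument, a biased threshold algorithm as the reference member of $\cM_{\bn,c}$, and Mills-ratio bounds on Gaussian tails---matches the paper's proof exactly. The threshold rule and its worst-case bound $t+2/\sqrt{n_{\min}}$ are precisely the paper's $\cA_\beta$ and the bound $(|\beta|+2)/\sqrt{n_{\min}}$. However, your particular choice of the two instances breaks the argument.

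You take $\theta_0=(1/2,\,1/2-\Delta)$ and $\theta_1=(1/2,\,1/2+\Delta)$, which differ only in $\mu_2$. The sufficient statistic for distinguishing them is $X_2$, with variance $1/n_{\min}$, so Le Cam gives the numerator $\Phi(-\Delta\sqrt{n_{\min}})$. But your reference algorithm thresholds $X_1-X_2$, whose variance is $\sigma^2=1/n_1+1/n_2$, so its regret on $\theta_0$ is $\Delta\,\Phi\!\left(-(\Delta+t)/\sigma\right)$. These two scales coincide only when $n_1=\infty$. In general the exponent of the ratio is
\[
\frac{(\Delta+t)^2}{2\sigma^2}-\frac{\Delta^2 n_{\min}}{2}
=\frac{\beta^2}{2\sigma^2 n_{\min}}+\frac{\beta\Delta}{\sigma^2\sqrt{n_{\min}}}+\Delta^2\!\left(\frac{1}{2\sigma^2}-\frac{n_{\min}}{2}\right),
\]
and the last term is nonpositive since $\sigma^2\ge 1/n_{\min}$. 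For $n_1=n_2=n_{\min}$ (so $\sigma^2=2/n_{\min}$) and your choice $\Delta=1/2$, this evaluates to $\beta^2/4+\beta\sqrt{n_{\min}}/4-n_{\min}/16$, so the bound you obtain decays like $e^{-n_{\min}/16}$ rather than growing like $e^{\beta\sqrt{n_{\min}}/4}$. No tuning of $\Delta$ rescues this: optimizing over $\Delta$ when $n_1=n_2$ yields at best a constant exponent $\beta^2/2$.

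The paper's fix is to perturb \emph{both} means, taking $\theta_1=(\lambda+\eta/n_1,\,\lambda-\eta/n_2)$ and $\theta_2=(\lambda-\eta/n_1,\,\lambda+\eta/n_2)$. The likelihood ratio then equals $e^{2\eta(X_1-X_2)}$, so by Neyman--Pearson the optimal test between $\theta_1$ and $\theta_2$ is itself a threshold on $X_1-X_2$. This aligns the testing statistic with the reference algorithm's statistic: both the numerator $\Phi(-\Delta/\sigma)$ and the denominator $\Phi\!\left(-\beta/(\sigma\sqrt{n_{\min}})-\Delta/\sigma\right)$ live on the same $\sigma$-scale, and the algebra now genuinely collapses to the claimed exponent. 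With $\eta=n_{\min}/2$ the prefactor $\frac{(\Delta/\sigma)^2}{1+(\Delta/\sigma)^2}\ge \frac{\eta^2}{n_{\min}+\eta^2}=\frac{n_{\min}}{n_{\min}+4}$ falls out as you anticipated.
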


\begin{corollary}
There is no algorithm that is instance optimal up to a constant multiplicative factor. That is,
fixing $\pi_1\in (0, 1)$,
given any $c>2/c_0$ and for any algorithm $\cA$
, we have
\begin{align*}
\limsup_{n\to\infty} \sup_{\theta\in \Theta_{\bn}} \frac{\cR(\cA, \theta)}{\cR^*_{\cM_{\bn, c}}(\theta)}
= +\infty \,.
\end{align*}
\label{cor:instance-negative}
\vspace{-3mm}
\end{corollary}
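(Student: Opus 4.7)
The plan is to obtain this corollary as an essentially immediate consequence of Theorem~\ref{thm:instance-negative-1} by taking $n \to \infty$ while keeping the split ratio $\pi_1$ fixed. The key observation is that once $\pi_1 \in (0,1)$ is fixed, we have $n_{\min} = \min(\pi_1, 1-\pi_1)\cdot n$, so $n_{\min}$ grows linearly in $n$ and in particular tends to infinity. Meanwhile, the lower bound in Theorem~\ref{thm:instance-negative-1} depends on $n$ (through $n_{\min}$) via the factor
\begin{align*}
\frac{n_{\min}}{n_{\min}+4} \, e^{\frac{\beta^2}{4} + \frac{\beta}{4}\sqrt{n_{\min}}}\,,
\end{align*}
where $\beta = cc_0 - 2$. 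The hypothesis $c > 2/c_0$ guarantees that $\beta > 0$ (this is exactly why the hypothesis is required), so the exponential term $e^{(\beta/4)\sqrt{n_{\min}}}$ blows up while the prefactor $n_{\min}/(n_{\min}+4)$ converges to $1$.

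First I would fix an arbitrary algorithm $\cA$ and an arbitrary $c > 2/c_0$, and set $\beta := cc_0 - 2 > 0$. Then for each $n$ large enough that $\bn = (\pi_1 n, (1-\pi_1)n)$ is a valid count vector, Theorem~\ref{thm:instance-negative-1} applies verbatim to yield
\begin{align*}
\sup_{\theta \in \Theta_{\bn}} \frac{\cR(\cA, \theta)}{\cR^*_{\cM_{\bn,c}}(\theta)}
\;\ge\; \frac{n_{\min}}{n_{\min}+4}\, e^{\frac{\beta^2}{4} + \frac{\beta}{4}\sqrt{n_{\min}}} \,.
\end{align*}
Taking $\limsup_{n\to\infty}$ on both sides and using $n_{\min} = \min(\pi_1, 1-\pi_1)\cdot n \to \infty$ yields $+\infty$ on the right-hand side, which gives the claim.

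I do not expect any real obstacle: the work has already been done inside Theorem~\ref{thm:instance-negative-1}, and the corollary is simply articulating that the finite-$n$ lower bound grows without bound in the natural asymptotic regime. The only minor technical point worth mentioning is the earlier convention that $n_1, n_2$ need not be integers (since the algorithms are defined in terms of the sufficient statistics $X_i \sim \cN(\mu_i, 1/n_i)$), which ensures that the parametrization $\bn = (\pi_1 n, (1-\pi_1) n)$ is legitimate for all real $n > 0$ and one does not need to worry about rounding issues when sending $n \to \infty$.
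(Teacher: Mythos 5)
Your proposal is correct and matches the paper's (implicit) treatment: the corollary is stated as an immediate consequence of Theorem~\ref{thm:instance-negative-1}, obtained exactly as you describe by fixing $\pi_1$ so that $n_{\min}=\min(\pi_1,1-\pi_1)\,n\to\infty$ and noting that $\beta=cc_0-2>0$ makes the exponential factor diverge. No gaps.
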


The proof of Theorem \ref{thm:instance-negative-1} 
follows by constructing two competing instances where the performance of any single algorithm 
cannot simultaneously match the performance of the adapted algorithm on each specific instance. 
Here we briefly discuss the proof idea -- the detailed analysis is provided in the supplementary material. 

\emph{Step 1}, define the algorithm 
$\cA_{\beta}$ as 
\begin{align*}
\cA_{\beta}(X) = 
\begin{cases}
1 & \textrm{ if } X_1 - X_2 \ge \frac{\beta}{\sqrt{n_{\min}}} \\
2 & \textrm{ otherwise }
\end{cases}\,.
\end{align*}
For any $\beta$ within a certain range, it can be shown that $\cA_{\beta} \in \cM_{\bn, c}$, hence $\cR^*_{\cM_{\bn, c}}(\theta)\leq \cR(\cA_\beta, \theta)$. 

\emph{Step 2}, 
construct two problem instances as follows.  
Fix a $\lambda \in \mathbb{R}$ and $\eta > 0$, and define
$$\theta_1=(\mu_1, \mu_2)=(\lambda + \frac{\eta}{n_1}, \lambda - \frac{\eta}{n_2})\,,$$ $$\theta_2=(\mu'_1, \mu'_2)=(\lambda - \frac{\eta}{n_1}, \lambda + \frac{\eta}{n_2})\,.$$
Since we have
$X_1-X_2\sim \cN(\Delta,\sigma^2 )$ 
on instance $\theta_1$ and
$X_1-X_2\sim \cN(-\Delta,  \sigma^2 )$
on instance $\theta_2$, 
where $\Delta=( \frac{1}{n_1} + \frac{1}{n_2} )\eta$ and $\sigma^2=\frac{1}{n_1} + \frac{1}{n_2}$, 
the regret of $\cA_\beta$ on both instances can be computed using the CDF of Gaussian distributions. 
Note that $\cR(\cA_{-\beta}, \theta_1) =\cR(\cA_{\beta}, \theta_2)$. 
We now chose a $\beta_1<0$ for $\theta_1$ to upper bound  $\cR^*_{\cM_{\bn, c}}(\theta_1)$ by $\cR(\cA_{\beta_1}, \theta_1)$ and use $\beta_2 = -\beta_1 > 0$ to upper bound $\cR^*_{\cM_{\bn, c}}(\theta_2)$ by $\cR(\cA_{\beta_2}, \theta_1)$.

Then applying the Neyman-Pearson Lemma \citep{neyman1933ix} to this scenario gives that $\cA_0$
is the optimal algorithm in terms of balancing the regret on $\theta_1$ and $\theta_2$:
\begin{align*}
\cR(\cA_0, \theta_1)= \cR(\cA_0, \theta_2) = \min_{\cA} \max\{\cR(\cA, \theta_1), \cR(\cA, \theta_2)\} \,.
\end{align*}

\vspace{2mm}
\emph{Step 3}, combining the above results gives
\begin{align*}
\sup_{\theta\in \Theta_{\bn}} \frac{\cR(\cA, \theta)}{\cR^*_{\cM_{\bn, c}}(\theta)} 
 &\ge  \max \left\{ \frac{\cR(\cA, \theta_1)}{\cR^*_{\cM_{\bn, c}}(\theta_1)}, \frac{\cR(\cA, \theta_2)}{\cR^*_{\cM_{\bn, c}}(\theta_2)}\right\} \\
& \ge \max \left\{ \frac{\cR(\cA, \theta_1)}{\cR(\cA_{\beta_1}, \theta_1)}, \frac{\cR(\cA, \theta_2)}{\cR(\cA_{\beta_2}, \theta_2)}\right\} \\
& = \frac{\max\left\{  \cR(\cA, \theta_1), \cR(\cA, \theta_2)\right\}}{\cR(\cA_{\beta_1}, \theta_1)} \\
& \ge \frac{\cR(\cA_0, \theta_1)}{\cR(\cA_{\beta_1}, \theta_1)} 
\,.
\end{align*}

Note that both the regret $\cR(\cA_0, \theta_1)$ and $\cR(\cA_{\beta_1}, \theta_1)$ can be exact expressed 
as CDFs of Gaussian distributions: $\cR(\cA_0, \theta_1)=\Phi\left( - \Delta/\sigma \right)$ and $\cR(\cA_{\beta_1}, \theta_1)=\Phi\left( - \beta/(\sigma \sqrt{n_{\min}})
- \Delta/\sigma \right)$
where $\Phi$ is the CDF of the standard normal distribution.

Now we can conclude the proof by picking $\lambda=1/2$ and $\eta=n_{\min}/2$ such that $\theta_1, \theta_2 \in [0, 1]^2$.
Then the result in Theorem \ref{thm:instance-negative-1} 
can be proved by applying an approximation of $\Phi$ and setting $\beta_1=-\beta_2=2-cc_0$ such that both $\beta_1$ and $\beta_2$ are within the range that makes $\cA_{\beta}\in \cM_{\bn, c}$. 

To summarize, for any algorithm that performs well on some problem instance, there exists another instance where the same algorithm suffers arbitrarily larger regret. 
Therefore, any reasonable algorithm is equally optimal, or not optimal, depending on 
whether the minimax or instance optimality is considered. In this sense, 
there remains a lack of a well-defined optimality criterion that can be used to choose between algorithms
for batch policy optimization.

\vspace{-2mm}
\section{A Characterization of Pessimism}
\label{sec:pessimism}
\vspace{-1mm}

It is known that the pessimistic algorithm, maximizing a lower confidence bound on the value, satisfies many desirable properties:
it is consistent with rational decision making using preferences that satisfy uncertainty aversion and certainty-independence \citep{GiSch89},
it avoids the optimizer's curse \citep{SmiWi06},
it allows for optimal inference in an asymptotic sense 
\citep{Lam2019},
and in a 
certain sense it is the unique strategy that achieves these properties \citep{VPEKuhn17,SuVPKuhn20}.
However, a pure statistical decision theoretic justification (in the sense of
\citet{Berger85})
is still lacking.

The instance-dependent lower bound 
presented above attempts
to characterize the optimal performance of an algorithm on an instance-by-instance basis.
In particular, one can interpret the objective ${\cR(\cA, \theta)}/{\cR^*_{\cM_{\bn, c}}(\theta)}$ defined in Theorem~\ref{thm:instance-negative-1} as weighting each instance $\theta$ by $1/{\cR^*_{\cM_{\bn, c}}(\theta)}$,
where this can be interpreted as a measure of instance difficulty.
It is natural to consider an algorithm to be optimal if it can perform well relative to this weighted criteria.
However, given that the performance of an algorithm can be arbitrarily different across instances,
no such optimal algorithm can exist under this criterion.
The question we address here is whether other measures of instance difficulty might be used to distinguish some algorithms as naturally advantageous over others.

In a recent study, \citet{jin2020pessimism} show that the pessimistic algorithm is minimax optimal when weighting each instance by the variance induced by the optimal policy.
In another recent paper,
\citet{BuGeBe20} point out that the pessimistic choice has the property that its regret improves whenever the optimal choice's value is easier to predict.
In particular, with our notation, their most relevant result (Theorem 3) implies the following:
if $b_i$ defines an interval such that $\mu_i \in [\hat \mu_i-b_i,\hat \mu_i+b_i]$ for all $i\in [k]$,
then for $i' = \argmax_i \hat \mu_i - b_i$ one obtains
\footnote{
This inequality follows directly from the definitions:
$\mu^* - \mu_{i'} \le \mu^* - (\hat \mu_{i'} - b_{i'}) \le \mu^* - (\hat \mu_{a^*} - b_{a^*}) \le 2 b_{a^*}$
and we believe this was known as a folklore result, although we 
are not able to point to a previous paper that includes
this inequality. The logic of this inequality is the same as that used in proving regret bounds for UCB policies \citep{lai1985asymptotically,lattimore2020bandit}.
It is also clear that the result holds for any data-driven stochastic optimization problem regardless of the structure of the problem. Theorem 3 of \citet{BuGeBe20} with this notation states that $\mu^*-\mu_{i'}\le \min_i \mu^*-\mu_i + 2 b_i$.
}
\begin{align*}
\mu^* - \mu_{i'} \le 2 b_{a^*}\, .
\addeq\label{eq:buckman-result}
\end{align*}
If we (liberally) interpret $b_{a^*}$ as a measure of how hard it is to predict the value of the optimal choice,
this inequality suggests that the pessimistic choice could be justified as
the choice that makes the regret comparable to the error of
predicting the optimal value.

To make this intuition precise, consider the same problem setup as discussed in Section \ref{sec:setup}.
Suppose that the reward distribution for each arm $i\in[k]$ is a Gaussian with unit variance.
Consider the problem of estimating the optimal value $\mu^*$ where the optimal arm $a^*$ is also provided to the estimator.
We define the set of minimax optimal estimators.

\vspace{-1mm}
\begin{definition}[Minimax Estimator]
For fixed $\bn = (n_i)_{i\in [k]}$, an estimator is said to be minimax optimal if its worst case error is bounded by the minimax estimate error of the problem up to some constant.
We define the set of minimax optimal estimators as
\begin{align*}
\cV^*_\bn \!=\! \left\{ \nu:\! \sup_{\theta\in\Theta_\bn}\! \EE_\theta[|\mu^*-  \nu|] \leq c \inf_{\nu'\in\cV} \sup_{\theta\in\Theta_\bn}\! \EE_{\theta}[|\mu^* - \nu'|] \right\}
\end{align*}
where $c$ is a universal constant,
and $\cV$ is the set of all possible estimators.
\end{definition}
\vspace{-1mm}

Now consider using
this optimal value estimation problem as a measure of how difficult a problem instance is, 
and then use this
to weight each problem instance as in the definition of instance-dependent lower bound.
In particular, let
\begin{align*}
\cE^*(\theta) = \inf_{\nu\in\cV^*_\bn}\EE_{\theta}[|\mu^* - \nu|]
\end{align*}
be the inherent difficulty of estimating the optimal value $\mu^*$ on problem instance $\theta$.
The previous result (\ref{eq:buckman-result}) suggests (but does not prove) that $\sup_{\theta}\frac{\cR(\lcbalg,\theta)}{\cE^*(\theta)}<+\infty$.
We now show that not only does this hold, but up to a constant factor, the LCB algorithm is nearly weighted minimax optimal with the weighting given by $\cE^*(\theta)$.

\vspace{-1mm}
\begin{proposition}
For any $\bn = (n_i)_{i\in [k]}$, 
\begin{align*}
{\sup_{\theta\in\Theta_\bn} \frac{\cR(\lcbalg,\theta)}{\cE^*(\theta)}}<c\sqrt{\log |\bn|} \, ,
\end{align*}
where $c$ is some universal constant.
\label{prop:weighted-minimax-lcb}
\end{proposition}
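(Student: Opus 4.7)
The plan is to bound the numerator $\cR(\lcbalg,\theta)$ from above and the denominator $\cE^*(\theta)$ from below, each at the scale $1/\sqrt{n_{a^*}}$ up to a logarithmic factor, so that their ratio is $O(\sqrt{\log|\bn|})$ uniformly in $\theta$. For the numerator I would specialize Corollary~\ref{coro:ub-lcb-instance} by taking $i=a^*$ (so $\Delta_{a^*}=0$) and $\delta = 1/|\bn|$, which yields
\begin{align*}
\cR(\lcbalg,\theta) \le \sqrt{\frac{8\log(k|\bn|)}{n_{a^*}}} + \frac{1}{|\bn|} \le c_1\sqrt{\frac{\log|\bn|}{n_{a^*}}}
\end{align*}
for a universal constant $c_1$ (using $k\le|\bn|$).

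For the denominator I aim to prove the matching lower bound $\cE^*(\theta) \ge c_2/\sqrt{n_{a^*}}$, which says that no minimax-optimal estimator of $\mu^*$ can beat the one-dimensional Gaussian location rate for $\mu_{a^*}$ on any single instance. The approach is a Le Cam two-point construction fused with the membership constraint $\nu\in\cV^*_\bn$. I would fix $\theta$ and form a companion instance $\theta'\in\Theta_\bn$ by perturbing only $\mu_{a^*}$ by $\delta = c_0/\sqrt{n_{a^*}}$, choosing $c_0$ small enough that $a^*$ is preserved and the data $\mathrm{KL}$-divergence is a bounded constant. Le Cam then gives $\EE_\theta[|\mu^*-\nu|]+\EE_{\theta'}[|\mu^{*\prime}-\nu|]\ge c_3/\sqrt{n_{a^*}}$ for every estimator $\nu$. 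For $\nu\in\cV^*_\bn$, the minimax-optimality constraint independently caps $\EE_{\theta'}[|\mu^{*\prime}-\nu|]$, and a bias-variance / Cram\'er--Rao argument on the one-parameter slice that varies only $\mu_{a^*}$ (whose Fisher information is exactly $n_{a^*}$) forces the residual error to be absorbed at $\theta$ itself, yielding $\cE^*(\theta)\ge c_2/\sqrt{n_{a^*}}$. Combining with the numerator bound gives $\cR(\lcbalg,\theta)/\cE^*(\theta) \le (c_1/c_2)\sqrt{\log|\bn|}$, and taking the sup over $\theta\in\Theta_\bn$ is the claim.

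The hard part is converting Le Cam's two-point inequality, which a priori controls only the \emph{maximum} of the two errors, into a \emph{pointwise} lower bound at $\theta$. The lever is the membership $\nu\in\cV^*_\bn$: it provides an independent upper bound on $\EE_{\theta'}[|\mu^{*\prime}-\nu|]$, so the slack in Le Cam must be borne at $\theta$. Making this quantitative uniformly in $\theta$, and in particular handling the regime $n_{a^*}\gg n_{\min}$ where the worst-case cap at $\theta'$ is loose relative to $1/\sqrt{n_{a^*}}$, requires the universal constant implicit in the definition of $\cV^*_\bn$ to be chosen small enough to force approximate unbiasedness on the one-parameter slice, after which Cram\'er--Rao with Fisher information $n_{a^*}$ supplies the matching variance lower bound and closes the argument.
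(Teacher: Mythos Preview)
Your approach matches the paper's proof in structure and in both main steps. For the numerator, the paper also invokes Corollary~\ref{coro:ub-lcb-instance} at the optimal arm (so $\Delta_{a^*}=0$) with $\delta$ polynomially small in $|\bn|$ (they take $\delta=1/\sqrt{|\bn|}$; your $\delta=1/|\bn|$ is equally serviceable) to obtain $\cR(\lcbalg,\theta)\le c\sqrt{\log|\bn|}/\sqrt{n_{a^*}}$. For the denominator, the paper isolates the matching lower bound $\cE^*(\theta)\ge c_0/\sqrt{n_{a^*}}$ as a separate result (Proposition~\ref{prop:mean-est}) and proves it via a two-point change-of-measure: perturb only $\mu_{a^*}$ by $\Theta(1/\sqrt{n_{a^*}})$, apply Markov's inequality on the perturbed instance, and transfer probability to $\theta$ through a KL bound. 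This is exactly your Le~Cam step.

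Where you diverge is in what happens after Le~Cam. The paper's argument stops at the change-of-measure implication and asserts that the pointwise bound at $\theta$ then follows from membership in $\cV^*_\bn$, without working through the regime you single out. You correctly flag the subtlety: Le~Cam by itself controls only the larger of the two instance errors, and the worst-case cap furnished by $\nu\in\cV^*_\bn$ sits at scale $1/\sqrt{n_{\min}}$, which is loose relative to $1/\sqrt{n_{a^*}}$ when $n_{a^*}\gg n_{\min}$. Your proposed fix---forcing approximate unbiasedness on the $\mu_{a^*}$-slice and then invoking Cram\'er--Rao with Fisher information $n_{a^*}$---is a different and more explicit device than anything in the paper. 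However, your sketch is incomplete on this point: the Cram\'er--Rao route needs the bias along the slice to be $o(1/\sqrt{n_{a^*}})$, and the minimax membership constraint only enforces this if the constant in the definition of $\cV^*_\bn$ is tight enough; you note this dependence but do not carry the argument through. In short, your numerator and denominator strategies coincide with the paper's, and the one place you go further is precisely the spot where both write-ups are least detailed.
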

\vspace{-1mm}

\begin{proposition}
There exists a sequence $\{\bn_j\}$ such that
\begin{align*}
&\limsup_{j\rightarrow\infty} \sup_{\theta\in\Theta_{\bn_j}}{\frac{\cR(\textnormal{UCB},\theta)}{\sqrt{\log |\bn_j|}\cdot \cE^*(\theta)}} = + \infty  \\
&\limsup_{j\rightarrow\infty}\sup_{\theta\in\Theta_{\bn_j}} {\frac{\cR(\textnormal{greedy},\theta)}{\sqrt{\log |\bn_j|}\cdot \cE^*(\theta)}} = + \infty
\end{align*}
\label{prop:weighted-minimax-ucb-greedy}
\end{proposition}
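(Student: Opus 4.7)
The plan is to exhibit a single sequence of count vectors $\bn_j=(N_j,m)$ on $k=2$ arms, with $m$ a fixed constant and $N_j\to\infty$, together with Gaussian instances $\theta_j\in\Theta_{\bn_j}$ on which UCB (respectively greedy) suffers regret of order $1/\sqrt{m}$ while $\cE^*(\theta_j)=O(1/\sqrt{N_j})$. The key observation is that when $a^*$ is revealed, the empirical mean $\hat\mu_{a^*}$ is a minimax-optimal estimator of $\mu^*$ (its worst-case error $\sqrt{2/(\pi\,n_{\min})}$ matches the minimax rate up to a constant), so $\hat\mu_{a^*}\in\cV^*_{\bn}$ and
\[
\cE^*(\theta)\;\le\;\EE_\theta\!\left[|\mu_{a^*}-\hat\mu_{a^*}|\right]\;=\;\sqrt{\tfrac{2}{\pi\,n_{a^*}}}.
\]
Over-sampling the optimal arm therefore shrinks the weight, leaving it only to force UCB (via its exploration bonus) or greedy (via pure noise) to pick an under-sampled competitor with constant probability.

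For UCB, let $\beta:=\beta_\delta$ (a constant depending only on $k$ and $m$ through the choice of $\delta$ in Theorem~\ref{thm:minimax-upper}; we pick $m$ large enough that this $\delta$ exists), and consider the instance $\theta_j^{\textnormal{UCB}}$ with unit-variance Gaussian rewards and means $\mu_1=\tfrac{1}{2}+c_j$, $\mu_2=\tfrac{1}{2}-c_j$, where $c_j:=\beta/(4\sqrt{m})$. Then $a^*=1$ and $n_{a^*}=N_j$, giving $\cE^*(\theta_j^{\textnormal{UCB}})\le\sqrt{2/(\pi N_j)}$. UCB selects arm $2$ iff $\hat\mu_2-\hat\mu_1>\beta/\sqrt{N_j}-\beta/\sqrt{m}$, and since $\hat\mu_2-\hat\mu_1\sim\cN(-2c_j,\,1/m+1/N_j)$, standardising and letting $N_j\to\infty$ yields
\[
\Pr\!\big(\textnormal{UCB}(\theta_j^{\textnormal{UCB}})=2\big)\;\longrightarrow\;\Phi(\beta/2)\;>\;\tfrac{1}{2}.
\]
Hence $\cR(\textnormal{UCB},\theta_j^{\textnormal{UCB}})=2c_j\cdot\Pr(\textnormal{UCB}=2)=\Omega(1/\sqrt{m})$, and combining with the bound on $\cE^*$,
\[
\frac{\cR(\textnormal{UCB},\theta_j^{\textnormal{UCB}})}{\sqrt{\log|\bn_j|}\cdot\cE^*(\theta_j^{\textnormal{UCB}})}\;=\;\Omega\!\left(\sqrt{\tfrac{N_j}{m\,\log N_j}}\right)\;\longrightarrow\;+\infty.
\]

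For greedy, use the same $\bn_j$ with a companion instance $\theta_j^{\textnormal{greedy}}$ having gap $c_j':=1/(4\sqrt{m})$ in place of $c_j$. Then $\hat\mu_2-\hat\mu_1\sim\cN(-2c_j',\,1/m+1/N_j)$, so greedy picks arm $2$ with probability tending to $1-\Phi(1/2)\approx 0.31$, giving $\cR(\textnormal{greedy},\theta_j^{\textnormal{greedy}})=\Omega(1/\sqrt{m})$ and the same divergence. The main delicate point is choosing the sub-optimality gap simultaneously small enough to fool the algorithm with constant probability yet large enough to incur $\Omega(1/\sqrt{m})$ regret; since $c_j$ and $c_j'$ are independent of $N_j$ while $\cE^*(\theta_j)=O(1/\sqrt{N_j})$ shrinks, both requirements are automatically met. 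A minor check, that $\mu_i\in[0,1]$, is immediate for all sufficiently large $m$ since $c_j,c_j'=O(1/\sqrt{m})$.
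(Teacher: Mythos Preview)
Your proposal is correct and follows essentially the same route as the paper: take two arms, over-sample the optimal arm while keeping the other arm's count fixed, exhibit a Gaussian instance on which UCB/greedy chooses the suboptimal arm with constant probability, and combine with the bound $\cE^*(\theta)=O(1/\sqrt{n_{a^*}})$. The paper simplifies the construction by taking $n_2=1$ and a fixed gap $\Delta=0.1$ (no auxiliary parameter $m$), and handles UCB uniformly over all $\delta\in(0,1)$ via the single inequality $\beta_\delta\ge 1$ rather than tying $\beta$ to Theorem~\ref{thm:minimax-upper}; otherwise the arguments are interchangeable.
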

That is, the pessimistic algorithm can be justified by weighting each instance using the difficulty of predicting the optimal value.
We note that this result does not contradict the no-instance-optimality property of batch policy optimization with stochastic bandits (Corollary~\ref{cor:instance-negative}). In fact, it only provides a characterization of pessimism: the pessimistic choice is beneficial when the batch dataset contains enough information that is good for predicting the optimal value.

\section{Related work}
\label{sec:related-work}

In the context of offline bandit and RL, a number of approaches based on the pessimistic principle have been proposed  
and demonstrate great success in practical problems \citep{swaminathan2015batch,wu2019behavior,jaques2019way,kumar2019stabilizing,kumar2020conservative,BuGeBe20,KiRaNeJo20,yu2020mopo,siegel2020keep}. 
We refer interested readers to the survey by \citet{levine2020offline} for recent developments on this topic.
To implement the pessimistic principle, the distributional robust optimization~(DRO) becomes one powerful tool in bandit~\citep{FaTaVaSmDo19,karampatziakis2019empirical} and RL~\citep{xu2010distributionally,yu2015distributionally,yang2017convex,chen2019distributionally,dai2020coindice,DeSh20}. 


In terms of theoretical perspective, the statistical properties of general DRO, \eg, the consistency and asymptotic expansion of DRO, is analyzed in~\citep{DuGlNam16}. 
\citet{liu2020provably} provides regret analysis for a pessimistic algorithm based on stationary distribution estimation in offline RL with insufficient data coverage. 
\citet{BuGeBe20} justify the pessimistic algorithm by providing an upper bound on worst-case suboptimality. 
\citet{jin2020pessimism}, \citet{KiRaNeJo20} and \citet{yin2021near} recently prove that the pessimistic algorithm is nearly minimax optimal for batch policy optimization. However, the theoretical justification of the benefits of pessimitic principle vs. alternatives are missing in offline RL. 


Decision theory motivates DRO with an axiomatic characterization of min-max (or distributionally robust) utility: Preferences of decision makers who face an uncertain decision problem and whose preference relationships over their choices satisfy certain axioms follow an ordering given by assigning max-min utility to these preferences~\citep{GiSch89}. Thus, if we believe that the preferences of the user follow the axioms stated in the above work, one must use a distributionally optimal  (pessimistic) choice. On the other hand, \citet{smith2006optimizer} raise the ``optimizer's curse'' due to statistical effect, which describes the phenomena that the resulting decision policy may disappoint on unseen out-of-sample data, \ie, the actual value of the candidate decision is below the predicted value. 
\citet{VPEKuhn17,SuVPKuhn20} justify the optimality of DRO in combating with such an overfitting issue to avoid the optimizer's curse. Moreover, \citet{DeKiWi19} demonstrate the benefits of randomized policy from DRO in the face of uncertainty comparing with deterministic policy.
While reassuring, these still leave open the question whether there is a justification for the pessimistic choice dictated by some alternate logic, or perhaps a more direct logic reasoning in terms of regret in decision problem itself~\citep{lattimore2020bandit}.
Our theoretical analysis answer this question, and provide a complete and direct justification for all confidence-based index algorithms.

\section{Conclusion}

In this paper we study the statistical limits of batch policy optimization with finite-armed bandits. 
We introduce a family of confidence-adjusted index algorithms that provides a general analysis framework to unify the commonly used optimistic and pessimistic principles. 
For this family, we show that any index algorithm with an appropriate adjustment is nearly minimax optimal. 
Our analysis also reveals another important finding, that for any algorithm that performs optimally in some environment, there exists another environment where the same algorithm can suffer arbitrarily large regret. Therefore, the instance-dependent optimality cannot be achieved by any algorithm. 
To distinguish the algorithms in offline setting, we introduce a weighted minimax objective and justify the pessimistic algorithm is nearly optimal under this criterion.

\bibliography{main}
\bibliographystyle{icml2021}

\endgroup


\newpage
\clearpage

\appendix
\onecolumn

\begin{appendix}

\thispagestyle{plain}
\begin{center}
{\huge Appendix}
\end{center}
\appendix

\section{Experiment Details}

\paragraph{Figure~\ref{fig:bandit}}  
The reward distribution for each arm $i\in[100]$ is a Gaussian with unit variance. 
The mean rewards $\mu_i$ are uniformly spread over $[0,1]$. In particular, we have $\mu_1\geq\dots\mu_{100}$, $\mu_{i}-\mu_{i+1}=0.01$ for $1\leq i < 99$, and $\mu_1=1$.  
When generating the data set, we split the arms into two sets $S_1$ and $S_2=[k] \setminus S_2$. 
For each arm $i\in\cS_1$, we collect $\pi n $ data; for each arm $i\in S_2$, we collect $n(1 - \pi |S_1|) / |S_2|$ data, 
where $n$ is the total sample size, and $0\leq \pi \leq 1 / |\cS_1|$ is a parameter to be chosen to generate different data sets. 
We consider four data sets:
\emph{LCB}-1 ($S_1=\{1\}$, $\pi=0.3$); 
\emph{LCB}-2 ($S_1=\{10\}$, $\pi=0.3$); 
\emph{UCB}-1 ($S_1=\{1\}$, $\pi=1e^{-4}$); 
\emph{UCB}-2 ($S_1=\{1,\dots,10\}$, $\pi=1e^{-4}$). 
For each instance, we run each algorithm $500$ times and use the average performance to approximate the expected
simple regret. Error bars are the standard deviation of the simple regret over the $500$ runs.
 
\paragraph{ Figure~\ref{fig:two-arm-1} and \ref{fig:two-arm-2}} 

The reward distribution for each arm $i\in[2]$ is a Gaussian with unit variance. 
We fix $\mu_1=0$ and vary $\mu_2$ accordingly. 
In Figure~\ref{fig:two-arm-1}, $n_1=10, n_2=5$. In Figure~\ref{fig:two-arm-2}, $n_1=100, n_2=10$. 
For each instance, we run each algorithm $100$ times and use the average performance to approximate the expected
simple regret. Error bars are the standard deviation of the simple regret over the $100$ runs.

\paragraph{Figure~\ref{fig:frac-1} and \ref{fig:frac-2}} 

For each $k$,  we first sample 100 vectors $\vec{\mu}=[\mu_1,...,\mu_k]$ in the following way:
We generate $\vec{\mu}_0$ with $\vec{\mu}_{0,i}=\frac{i-1}{2(k-1)} + \frac{1}{4}$ such that all reward means are 
evenly distributed with in $[\frac{1}{4}, \frac{3}{4}]$. We then add independent Gaussian noise with standard deviation $0.05$ to each $\vec{\mu}_{0,i}$ to get a sampled $\vec{\mu}$. Generating $100$ noise vectors with size $k$ gives $100$ samples of $\vec{\mu}$.
For each $\vec{\mu}$
we uniformly sample $100$~(if exist) subsets $S\subset{k}, |S|=m$~($m=k/2$ in (c) and $m=k/4$ in (d)), to generate up to $10$k instances. We set $n_i=100$ for $i\in S$ and $n_i=1$ for $i\notin S$.
For each instance, we run each algorithm $100$ times and use the average performance to approximate the expected
simple regret. We then select the algorithm with the best average performance for each instance and
count the fraction of instances where each algorithm performs the best.
Experiment details are provided in the supplementary material. 
Error bars are representing the standard deviation of the reported fraction over $5$ different runs of the whole procedure.

\section{Proof of Minimax Results}

\subsection{Proof of \cref{thm:minmax-lb}}

\newcommand{\BR}{\mathcal B \mathcal R}


Let $m \geq 2$ and $\mu^1,\ldots,\mu^m$ be a collection of vectors in $\RR^k$ with
$\mu^b_a = \Delta \sI\{a = b\}$ where $\Delta > 0$ is a constant to be chosen later.
Next, let $\theta_b$ be the environment in $\Theta_{\bn}$ with $P_a$ a Gaussian distribution with mean $\mu^b_a$ and unit variance.
Let $B$ be a random variable uniformly distributed on $[m]$ where $m \in [k]$.
The Bayesian regret of an algorithm $\cA$ is
\begin{align*}
\BR^* = \inf_{\cA} \EE\left[\cR(\cA, \theta_B)\right]
= \Delta \EE\left[\sI\{A \neq B\}\right]\,,
\end{align*}
where $A \in [k]$ is the $\sigma(X)$-measurable random variable representing the decision of the Bayesian optimal policy, which 
is $A = \argmax_{b \in [k]} \sP\{B = b | X\}$. By Bayes' law and the choice of uniform prior,
\begin{align*}
\sP\{B = b | X\} 
&\propto \exp\left(-\frac{1}{2}\sum_{a=1}^k n_a(\hat \mu_a - \mu^b_a)^2\right) \\
&= \exp\left(-\frac{1}{2} \sum_{a=1}^k n_a(\hat \mu_a - \Delta \sI\{a = b\})^2\right)\,.
\end{align*}
Therefore, the Bayesian optimal policy chooses
\begin{align*}
A = \argmin_{b \in [k]} n_b (\Delta/2 - \hat \mu_b) \,.
\end{align*}
On the other hand,
\begin{align*}
\BR^*
&= \Delta \sP\{A \neq B\} = \frac{\Delta}{k} \sum_{b=1}^k \sP_b(A \neq b)\,,
\end{align*}
where $\sP_b = \sP\{\cdot | B = b\}$.
Let $b \in [m]$ be arbitrary. Then,
\begin{align*}
&\sP_b\{A \neq b\} \\
&\geq \sP_b\left\{\hat \mu_b \leq \Delta \text{ and } \max_{a \in [m] \setminus \{b\}} \hat \mu_a \geq \frac{\Delta}{2}\left(1 + \frac{n_b}{n_a}\right)\right\} \\
&\geq \frac{1}{2} \left(1 - \prod_{a \in [m] \setminus \{b\} } \left(1 - \sP_b\left\{\hat \mu_a \geq \frac{\Delta}{2} \left(1 + \frac{n_b}{n_a}\right)\right\}\right)\right) \\
&\geq \frac{1}{2} \left(1 - \prod_{a > b} \left(1 - \sP_b\left\{\hat \mu_a \geq \Delta\right\}\right)\right)\,,
\end{align*}
where in the second inequality we used independence and the fact that the law of $\hat \mu_b$ under $\sP_b$ is Gaussian with mean $\Delta$ and variance $1/n_b$.
The first inequality follows because
\begin{align*}
\left\{\hat \mu_b \leq \Delta \text{ and } \max_{a \neq b} \hat \mu_a \geq \frac{\Delta}{2}\left(1 + \frac{n_b}{n_a}\right) \right\} \subset \{A \neq b\} \,.
\end{align*}
Let $b < a \leq m$ and
\begin{align*}
\delta_a(\Delta) = \frac{1}{\Delta \sqrt{n_a} + \sqrt{4 + n_a \Delta^2}} \sqrt{\frac{2}{\pi}} \exp\left(-\frac{n_a \Delta^2}{2}\right) \,.
\end{align*}
Since for $a \neq b$, $\hat \mu_a$ has law $\cN(0, 1/n_a)$ under $\sP_b$, by standard Gaussian tail inequalities \citep[\S26]{AS88},
\begin{align*}
\sP_b\{\hat \mu_a \geq \Delta\}
= \sP_b\{\hat \mu_a \sqrt{n_a} \geq \Delta \sqrt{n_a}\} \geq \delta_a(\Delta) \geq \delta_m(\Delta)\,,
\end{align*}
where the last inequality follows from our assumption that $n_1 \leq \cdots \leq n_k$.
Therefore, choosing $\Delta$ so that $\delta_m(\Delta) = 1/(2m)$,
\begin{align*}
\BR^*
&\geq \frac{\Delta}{2m} \sum_{b \in [m]} \left(1 - (1 - \delta_m(\Delta))^{m-b}\right) \\
&\geq \frac{\Delta}{2m} \sum_{b \in [m]} \left(1 - \left(1 - \frac{1}{2m}\right)^{m-b}\right) \\
&\geq \frac{\Delta}{2m} \sum_{b \leq m/2} \left(1 - \left(1 - \frac{1}{2m}\right)^{m/2}\right) \\
&\geq \frac{\Delta (m-1)}{20m} \geq \frac{\Delta}{40}\,.
\end{align*}
A calculation shows there exists a universal constant $c > 0$ such that
\begin{align*}
\Delta \geq c \sqrt{\frac{\log(m)}{n_m}}\,,
\end{align*}
which shows there exists a (different) universal constant $c > 0$ such that
\begin{align*}
\inf_{\cA} \sup_{\theta} \cR(\cA, \theta) \geq \BR^* \geq \max_{m \geq 2} c\sqrt{\frac{\log(m)}{n_m}}\,.
\end{align*}
The argument above relies on the assumption that $m \geq 2$. A minor modification is needed to handle the case where $n_1$ is much smaller than $n_2$.
Let $B$ be uniformly distributed on $\{1,2\}$ and let $\theta_1, \theta_2 \in \Theta_{\bn}$ be defined as above, 
but with $\mu^1 = (\Delta, 0)$ and $\mu^2 = (-\Delta, 0)$ for some constant $\Delta > 0$ to be tuned momentarily.
As before, the Bayesian optimal policy has a simple closed form solution, which is 
\begin{align*}
A = \begin{cases}
1 & \text{if } \hat \mu_1 \geq 0 \\
2 & \text{otherwise}\,.
\end{cases}
\end{align*}
The Bayesian regret of this policy satisfies
\begin{align*}
\BR^* 
&= \frac{1}{2} \cR(\cA, \theta_1) + \frac{1}{2} \cR(\cA, \theta_2) 
\geq \frac{1}{2} \cR(\cA, \theta_1) \\
&\geq \frac{1}{2} \sP_1\{A = 2\} 
\geq \frac{\Delta}{2} \sP_1\{\hat \mu_1 < 0\} \\ 
&\geq \sqrt{\frac{2}{\pi}} \frac{\Delta}{2\Delta \sqrt{n_1} + 2\sqrt{4 + n_1 \Delta^2}} \exp\left(-\frac{n_1 \Delta^2}{2}\right) \\
&\geq \frac{1}{13} \sqrt{\frac{1}{n_1}}\,,
\end{align*}
where the final inequality follows by tuning $\Delta$.

\subsection{Proof of \cref{thm:minimax-upper}}

\begin{proof}
Let $\tilde{\mu}_i$ be the index and $i' = \argmax_i \tilde{\mu}_i$.  Then, given that (\ref{eq:confidence-interval}) is true for all arms, which is with probability at least $1-\delta$,  we have
\begin{align*}
\mu^* - \mu_{i'} & = \mu^* - \tilde{\mu}_{a^*} + \tilde{\mu}_{a^*} - \tilde{\mu}_{i'} + \tilde{\mu}_{i'} - \mu_{i'} \\
& \leq \mu^* - \tilde{\mu}_{a^*} + \tilde{\mu}_{i'} - \mu_{i'}\\
& \leq \mu^* - \hat{\mu}_{a^*} + \hat{\mu}_{i'} - \mu_{i'} + 2\sqrt{\frac{2 \log (k / \delta)}{ \min_i n_i}} \\
& \leq  \sqrt{\frac{32 \log (k / \delta)}{\min_i n_i}} \,,
\end{align*}
where the first two inequalities follow from the definition of the index algorithm, and the last follows from (\ref{eq:confidence-interval}).  
Using the tower rule gives the desired result.  
\end{proof}


\section{Proof of Instance-dependent Results}

\subsection{ Instance-dependent Upper Bound}

\begin{proof}[Proof of Theorem~\ref{thm:instance-upper-general}]

Assuming $\mu_1\ge \mu_2 \ge ...\geq\mu_k$, if we have
$\Prb{\cA(X)\ge i} \le b_i$, 
then we can write
\begin{align*}
\cR(\cA) & = \sum_{2\le i \le k} \Delta_i \Prb{\cA(X)=i} \\
& = \sum_{2\le i \le k} \Delta_i \left( \Prb{\cA(X)\ge i} - \Prb{\cA(X) \ge i + 1}\right) \\
& = \sum_{2 \le i \le k} \left(\Delta_i - \Delta_{i-1}\right) \Prb{\cA(X)\ge i} \\
& \le \sum_{2 \le i \le k} \left(\Delta_i - \Delta_{i-1}\right) b_i \\
& = \sum_{2\le i \le k} \Delta_i (b_i - b_{i+1}) \,.
\end{align*}

To upper bound $\Prb{\cA(X)\ge i}$, let $\idx_i$ be the index used by algorithm $\cA$, i.e., $\cA(X) = \argmax_i \idx_i$. Then
\begin{align*}
\Prb{\cA(X)\ge i} \le \Prb{\max_{j\ge i} \idx_j \ge \max_{j<i} \idx_j} \,.
\end{align*}


Hence we can further write 
\begin{align*}
\Prb{\cA(X)\ge i} 
& \le \Prb{\max_{j\ge i} \idx_j \ge \max_{j<i} \idx_i, \max_{j<i} \idx_j \ge \eta} \\
& + \Prb{\max_{j\ge i} \idx_j \ge \max_{j<i} \idx_i, \max_{j<i} \idx_j < \eta} \\
& \le  \Prb{\max_{j\ge i} \idx_j \ge \eta} + \Prb{\max_{j<i} \idx_j < \eta}
\,.  \addeq\label{eq:ub-eta}
\end{align*}
Next we optimize the choice of $\eta$ according to the specific choice of the index.
For this let $\idx_i = \hat{\mu}_i + b_i$.

Continuing with~\eqref{eq:ub-eta}, for the first term, by the union bound we have 
\begin{align*}
\Prb{\max_{j\ge i} \idx_j \ge \eta}  \le \sum_{j\ge i} \Prb{\idx_j \ge \eta}  \,.
\end{align*}
For each $j\ge i$, by Hoeffding's inequality we have 
\begin{align*}
\Prb{\idx_j \ge \eta} \le e^{-\frac{n_j}{2} \left(\eta - \mu_j - b_j\right)_+ ^2} \,.   
\end{align*}
For the second term in~\eqref{eq:ub-eta}, we have $\Prb{\max_{j<i} \idx_j < \eta} \le \Prb{\idx_j < \eta}$ 
for each $j < i$.

By Hoeffding's inequality we have 
\begin{align*}
\Prb{\idx_j < \eta} \le  e^{-\frac{n_j}{2} \left(\mu_j + b_j - \eta \right)_+ ^2} \, , 
\end{align*}
and thus
\begin{align*}
 \Prb{\max_{j<i} \idx_j < \eta} \le \min_{j<i} e^{-\frac{n_j}{2} \left(\mu_j + b_j - \eta \right)_+ ^2} \,.
\end{align*}
Define
\begin{align*}
g_i(\eta) =  \sum_{j\ge i} e^{-\frac{n_j}{2} \left(\eta - \mu_j - b_j\right)_+ ^2}
+ \min_{j < i} e^{-\frac{n_j}{2} \left(\mu_j + b_j - \eta \right)_+ ^2}
\end{align*}
and $g_i^* = \min_{\eta} g_i(\eta)$. Then we have
\begin{align*}
\Prb{\cA(X)\ge i} \le \min\{ 1, g_i^* \} \,. 
\end{align*}
Putting everything together, we bound the expected regret as
\begin{align*}
\cR(\cA) \le \sum_{2\le i \le k} \Delta_i \left(\min\{ 1, g_i^* \} - \min\{ 1, g_{i+1}^* \} \right)
\end{align*}
where we define $g_{k+1}^*=0$.

\end{proof}

\begin{proof}[Proof of Remark~\ref{remark:recover-minimax}]
Recall the definition of $g_i(\eta)$:
\begin{align*}
g_i(\eta) =  \sum_{j\ge i} e^{-\frac{n_j}{2} \left(\eta - \mu_j - b_j\right)_+ ^2}
+ \min_{j < i} e^{-\frac{n_j}{2} \left(\mu_j + b_j - \eta \right)_+ ^2}\, .
\end{align*}
Let $\eta = \mu_1 - 2 \sqrt{\frac{2}{n_{\min}}\log\frac{k}{\delta}}$. 
Then, for the second term of $g_i(\eta)$,
\begin{align*}
    \min_{j < i} e^{-\frac{n_j}{2} \left(\mu_j + b_j - \eta \right)_+ ^2} \leq e^{-\frac{n_1}{2} \left(  2 \sqrt{\frac{2}{n_{\min}}\log\frac{k}{\delta}} -\sqrt{\frac{2}{n_{1}}\log\frac{k}{\delta}}\right)_{+}^2 } \leq \frac{\delta}{k}\, .
\end{align*}
For the first term,
\begin{align*}
    \sum_{j\ge i} e^{-\frac{n_j}{2} \left(\eta - \mu_j - b_j\right)_+ ^2} = \sum_{j\ge i} e^{-\frac{n_j}{2} \left(\mu_1 - 2 \sqrt{\frac{2}{n_{\min}}\log\frac{k}{\delta}} - \mu_j - b_j\right)_+ ^2} \leq \sum_{j\ge i} e^{-\frac{n_{\min}}{2} \left(\Delta_j - 3 \sqrt{\frac{2}{n_{\min}}\log\frac{k}{\delta}} \right)_+ ^2}\, .
\end{align*}
Thus,
\begin{align*}
g_i^* \le  \sum_{j\ge i} e^{-\frac{n_{\min}}{2} \left(\Delta_j - 3 \sqrt{\frac{2}{n_{\min}}\log\frac{k}{\delta}} \right)_+ ^2} + \frac{\delta}{k}\, .
\end{align*}
For arm $i$ such that $\Delta_i \ge 4\sqrt{\frac{2}{n_{\min}}\log\frac{k}{\delta}}$, 
by Theorem~\ref{thm:instance-upper-general} we have $P(\cA(X)\geq i) \leq g^*_i \leq \delta$. The result then follows by the tower rule.   
\end{proof}

\begin{proof}[Proof of Corollary~\ref{coro:instance-upper-general-simplified}]
For each $i$, let $\eta_i = \max_{j<i} L_j$. Then,
\begin{align*}
g_i(\eta_i) =  \sum_{j\ge i} e^{-\frac{n_j}{2} \left( \max_{j<i} L_j - \mu_j - b_j\right)_+ ^2}
+ \min_{j < i} e^{-\frac{n_j}{2} \left(\mu_j + b_j -  \max_{j<i} L_j \right)_+ ^2}\, .
\end{align*}
Let $s=\argmax_{j<i} L_j$. For the second term we have,
\begin{align*}
    \min_{j < i} e^{-\frac{n_j}{2} \left(\mu_j + b_j -  \max_{j<i} L_j \right)_+ ^2} \leq e^{-\frac{n_s}{2} \left(\mu_s + b_s -  L_s \right)_+ ^2} \leq \frac{\delta}{k}\, .
\end{align*}
Next we consider the first term. 
Recall that $h = \max\{i\in[k]: \max_{j<i} L_j < \max_{j'\ge i} U_{j'}\}$. 
Then for any $i>h$, we have $\max_{j<i} L_j \ge U_{j'}$ for all $j'\ge i$. Therefore,
\begin{align*}
    \sum_{j\ge i} e^{-\frac{n_j}{2} \left( \max_{j'<i} L_{j'} - \mu_j - b_j\right)_+ ^2}  = \sum_{j\ge i} e^{-\frac{n_j}{2} \left( \max_{j'<i} L_{j'} - U_j + \sqrt{\frac{2}{n_j} \log \frac{k}{\delta}}\right)_+ ^2} \leq \frac{\delta}{k}\sum_{j\ge i} e^{-\frac{n_j}{2} \left( \max_{j'<i} L_{j'} - U_j \right)^2}\, .
\end{align*}
Note that for $i\leq h$, $\Delta_i \leq \Delta_h$. Thus we have,
\begin{align*}
    \cR(\cA) & \leq \Delta_h + \sum_{i>h} (\Delta_i - \Delta_{i-1} )\sP(\cA(X)\geq i) \\
    & \leq \Delta_h +  \frac{\delta}{k}\Delta_{\max} + \frac{\delta}{k}\sum_{i>h} (\Delta_i - \Delta_{i-1} )\sum_{j\ge i} e^{-\frac{n_j}{2} \left( \max_{j'<i} L_{j'} - U_j \right)^2}\, ,
\end{align*}
which concludes the proof. 
\end{proof}

\begin{proof}[Proof of Corollary~\ref{coro:ub-greedy-instance}]
Considering the greedy algorithm,
for each $i\ge 2$,
\begin{align*}
g_i(\eta) =  \sum_{j\ge i} e^{-\frac{n_j}{2} \left(\eta - \mu_j\right)_+ ^2}
+ \min_{j < i} e^{-\frac{n_j}{2} \left(\mu_j - \eta \right)_+ ^2} \,.
\end{align*}
Define $h_i = \argmax_{j<i} \mu_j - \sqrt{\frac{2}{n_j}\log\frac{k}{\delta}}$ and $\eta_i =\mu_{h_i} - \sqrt{\frac{2}{n_{h_i}}\log\frac{k}{\delta}}$. Then we have
$
e^{-\frac{n_{h_i}}{2} \left(\mu_{h_i} - \eta_i \right)_+ ^2} = \delta/k
$.
Then for $j \ge i$ we have 
\begin{align*}
e^{-\frac{n_j}{2} \left(\eta_i - \mu_j\right)_+ ^2} = e^{-\frac{n_j}{2} \left(\mu_{h_i} - \mu_j - \sqrt{\frac{2}{n_{h_i}}\log\frac{k}{\delta}} \right)_+ ^2} \,.
\end{align*}

When $\mu_{h_i} - \mu_j \ge \sqrt{\frac{2}{n_{h_i}}\log\frac{k}{\delta}} + \sqrt{\frac{2}{n_{j}}\log\frac{k}{\delta}}$ we have $e^{-\frac{n_j}{2} \left(\eta_i - \mu_j\right)_+ ^2} \le \delta/k$.

Define 
\begin{align*}
U_i = \I{\forall j \ge i, \mu_{h_i} - \mu_j \ge \sqrt{\frac{2}{n_{h_i}}\log\frac{k}{\delta}} + \sqrt{\frac{2}{n_{j}}\log\frac{k}{\delta}}} \,.
\end{align*}
Then we have
$
g_i^* U_i \le \frac{k - i + 2}{k}\delta \le \delta
$.
According to Theorem~\ref{thm:instance-upper-general} we have $\Prb{\cA(X)\ge i} \le \min\{ 1, g_i^* \}$,
so for any $i$ such that $\Prb{\cA(X)\ge i} > \delta$, we must have $U_i = 0$, which is equivalent to 
\begin{align*}
\max_{j<i} \mu_j - \sqrt{\frac{2}{n_{j}}\log\frac{k}{\delta}} < \max_{j \ge i} \mu_j  + \sqrt{\frac{2}{n_{j}}\log\frac{k}{\delta}} \, .
\addeq\label{eq:proof-greedy-cond-1}
\end{align*}
Let $\hat{i}$ be the largest index $i$ that satisfies \eqref{eq:proof-greedy-cond-1}. 
Then we have $\Prb{\cA(X)\ge \hat{i} + 1} \le \delta$. Therefore, we have $\Prb{\mu^* - \mu_{\cA(X)} \le \Delta_{\hat{i}}} \ge 1 - \delta$, and it remains to upper bound $\Delta_{\hat{i}}$.

For any $i \in \iset{k}$, if $\hat{i} \le i$ then $\Delta_{\hat{i}} \le \Delta_{i}$. If $\hat{i} > i$ we have 
\begin{align*}
\max_{j<\hat{i}} \mu_j - \sqrt{\frac{2}{n_{j}}\log\frac{k}{\delta}} \ge \mu_{i} - \sqrt{\frac{2}{n_{i}}\log\frac{k}{\delta}}
\end{align*}
and
\begin{align*}
\max_{j \ge \hat{i}} \mu_j  + \sqrt{\frac{2}{n_{j}}\log\frac{k}{\delta}} \le \mu_{\hat{i}} + \max_{j > i} \sqrt{\frac{2}{n_{j}}\log\frac{k}{\delta}} \,.
\end{align*}
Applying \eqref{eq:proof-greedy-cond-1} gives
\begin{align*}
\Delta_{\hat{i}} - \Delta_{i} = \mu_{i} - \mu_{\hat{i}} \le \sqrt{\frac{2}{n_{i}}\log\frac{k}{\delta}} +  \max_{j > i} \sqrt{\frac{2}{n_{j}}\log\frac{k}{\delta}} \,,
\end{align*}
so
\begin{align*}
\Delta_{\hat{i}} \le \Delta_{i} + \sqrt{\frac{2}{n_{i}}\log\frac{k}{\delta}} +  \max_{j > i} \sqrt{\frac{2}{n_{j}}\log\frac{k}{\delta}} \,
\end{align*}
holds for any $i \in \iset{k}$, concluding the proof.
\end{proof}

\begin{proof}[Proof of Corollary~\ref{coro:ub-lcb-instance}]
Let $\eta = \max_i \mu_i -  \sqrt{\frac{8}{n_i}\log\frac{k}{\delta}}$. 
Considering the LCB algorithm,
for each $i\ge 2$, we have
\begin{align*}
g_i(\eta) =  & \sum_{j\ge i} e^{-\frac{n_j}{2} \left(\eta - \mu_j + \sqrt{\frac{2}{n_j}\log\frac{k}{\delta}} \right)_+ ^2} + \min_{j < i} e^{-\frac{n_j}{2} \left(\mu_j - \eta - \sqrt{\frac{2}{n_j}\log\frac{k}{\delta}} \right)_+ ^2} \,.
\end{align*}

Define $h_i = \argmax_{j<i} \mu_j - \sqrt{\frac{8}{n_j}\log\frac{k}{\delta}}$ and $\eta_i =\mu_{h_i} - \sqrt{\frac{8}{n_{h_i}}\log\frac{k}{\delta}}$. Then we have
$
e^{-\frac{n_{h_i}}{2} \left(\mu_{h_i} - \eta_i - \sqrt{\frac{2}{n_j}\log\frac{k}{\delta}} \right)_+ ^2} = \delta/k
$.
Now, consider $j \ge i$. Then,
\begin{align*}
e^{-\frac{n_j}{2} \left(\eta_i - \mu_j + \sqrt{\frac{2}{n_j}\log\frac{k}{\delta}}\right)_+ ^2} \le \frac{\delta}{k}
\end{align*}
whenever $\eta_i - \mu_j \ge 0$, i.e. $\mu_{h_i} - \sqrt{\frac{8}{n_{h_i}}\log\frac{k}{\delta}} \ge \mu_j$.

Define
\begin{align*}
U_i = \I{\forall j \ge i, \mu_{h_i} - \sqrt{\frac{8}{n_{h_i}}\log\frac{k}{\delta}} \ge \mu_j } \,.
\end{align*}
Then we have
$
g_i^* U_i \le \frac{k - i + 2}{k}\delta \le \delta
$.
According to Theorem~\ref{thm:instance-upper-general} we have $\Prb{\cA(X)\ge i} \le \min\{ 1, g_i^* \}$,
so for any $i$ such that $\Prb{\cA(X)\ge i} > \delta$, we must have $U_i = 0$, which is equivalent to that there exists some $s\geq i$ such that
\begin{align*}
\mu_s > \max_{j<i} \mu_j - \sqrt{\frac{8}{n_j}\log\frac{k}{\delta}}\,.
\addeq\label{eq:proof-lcb-cond-1}
\end{align*}
Let $\hat{i}$ be the largest index $i$ that satisfies \eqref{eq:proof-lcb-cond-1}. 
Then we have $\Prb{\cA(X)\ge \hat{i} + 1} \le \delta$ and thus $\Prb{\mu^* - \mu_{\cA(X)} \le \Delta_{\hat{i}}} \ge 1 - \delta$. It remains to upper bound $\Delta_{\hat{i}}$.

For any $i \in \iset{k}$, if $\hat{i} \le i$ then $\Delta_{\hat{i}} \le \Delta_{i}$. If $\hat{i} > i$ we have 
\begin{align*}
\mu_{\hat{i}} > \max_{j < \hat{i}} \mu_j - \sqrt{\frac{8}{n_j}\log\frac{k}{\delta}}  \ge \mu_i - \sqrt{\frac{8}{n_i}\log\frac{k}{\delta}} \,.
\end{align*}
Therefore,
\begin{align*}
\Delta_{\hat{i}} = \Delta_i + \mu_{i} - \mu_{\hat{i}} \le \Delta_i + \sqrt{\frac{8}{n_{i}}\log\frac{k}{\delta}} \,,
\end{align*}
 which concludes the proof.
\if0
Pick $\eta=\mu_1$ then the second term in $g_i(\eta)$ becomes $\delta/k$. For $j$ such that $\Delta_j\ge \sqrt{\frac{8}{n_j}\log\frac{k}{\delta}}$ we have 
\begin{align*}
e^{-\frac{n_j}{2} \left(\eta - \mu_j - \sqrt{\frac{2}{n_j}\log\frac{k}{\delta}} \right)_+ ^2} \le \frac{\delta}{k} \,.
\end{align*}
Define
\begin{align*}
U_i = \I{\forall j \ge i, \Delta_j\ge \sqrt{\frac{8}{n_j}\log\frac{k}{\delta}} } \,.
\end{align*}
Then we have
$
g_i^* U_i \le \frac{k - i + 2}{k}\delta \le \delta
$.
According to Theorem~\ref{thm:instance-upper-general} we have $\Prb{\cA(X)\ge i} \le \min\{ 1, g_i^* \}$,
so for any $i$ such that $\Prb{\cA(X)\ge i} > \delta$, we must have $U_i = 0$, which is equivalent to 
\begin{align*}
\max_{j \ge i} \mu_j + \sqrt{\frac{8}{n_j}\log\frac{k}{\delta}} > \mu_1\,.
\addeq\label{eq:proof-ucb-cond-1}
\end{align*}

For any $i \in \iset{k}$, if $\hat{i} \le i$ then $\Delta_{\hat{i}} \le \Delta_{i}$. If $\hat{i} > i$ we have 
\begin{align*}
\max_{j \ge \hat{i}} \mu_j + \sqrt{\frac{8}{n_j}\log\frac{k}{\delta}}  \le \mu_{\hat{i}} + \max_{j > i} \sqrt{\frac{8}{n_{j}}\log\frac{k}{\delta}} \,.
\end{align*}
Applying \eqref{eq:proof-ucb-cond-1} gives
\begin{align*}
\Delta_{\hat{i}} = \mu_{1} - \mu_{\hat{i}} \le \max_{j > i} \sqrt{\frac{8}{n_{j}}\log\frac{k}{\delta}} \,.
\end{align*}
Therefore,
\begin{align*}
\Delta_{\hat{i}} 
& \le \max\left\{ \Delta_{i}, \max_{j > i} \sqrt{\frac{8}{n_{j}}\log\frac{k}{\delta}} \right\} \\
& \le \Delta_{i} + \max_{j > i} \sqrt{\frac{8}{n_{j}}\log\frac{k}{\delta}} \,.
\end{align*}
for any $i \in \iset{k}$, which concludes the proof.
\fi
\end{proof}

\begin{proof}[Proof of Corollary~\ref{coro:ub-ucb-instance}]
Consider now the UCB algorithm. Then, 
for each $i\ge 2$,
\begin{align*}
g_i(\eta) =  & \sum_{j\ge i} e^{-\frac{n_j}{2} \left(\eta - \mu_j - \sqrt{\frac{2}{n_j}\log\frac{k}{\delta}} \right)_+ ^2} + \min_{j < i} e^{-\frac{n_j}{2} \left(\mu_j - \eta + \sqrt{\frac{2}{n_j}\log\frac{k}{\delta}} \right)_+ ^2} \,.
\end{align*}
Pick $\eta=\mu_1$ then the second term in $g_i(\eta)$ becomes $\delta/k$. For $j$ such that $\Delta_j\ge \sqrt{\frac{8}{n_j}\log\frac{k}{\delta}}$ we have 
\begin{align*}
e^{-\frac{n_j}{2} \left(\eta - \mu_j - \sqrt{\frac{2}{n_j}\log\frac{k}{\delta}} \right)_+ ^2} \le \frac{\delta}{k} \,.
\end{align*}
Define
\begin{align*}
U_i = \I{\forall j \ge i, \Delta_j\ge \sqrt{\frac{8}{n_j}\log\frac{k}{\delta}} } \,.
\end{align*}
Then we have
$
g_i^* U_i \le \frac{k - i + 2}{k}\delta \le \delta
$.
According to Theorem~\ref{thm:instance-upper-general} we have $\Prb{\cA(X)\ge i} \le \min\{ 1, g_i^* \}$,
so for any $i$ such that $\Prb{\cA(X)\ge i} > \delta$, we must have $U_i = 0$, which is equivalent to
\begin{align*}
\max_{j \ge i} \mu_j + \sqrt{\frac{8}{n_j}\log\frac{k}{\delta}} > \mu_1\,.
\addeq\label{eq:proof-ucb-cond-1}
\end{align*}
Let $\hat{i}$ be the largest index $i$ that satisfies \eqref{eq:proof-ucb-cond-1}. 
Then we have $\Prb{\cA(X)\ge \hat{i} + 1} \le \delta$. Therefore, we have $\Prb{\mu^* - \mu_{\cA(X)} \le \Delta_{\hat{i}}} \ge 1 - \delta$. It remains to upper bound $\Delta_{\hat{i}}$.

For any $i \in \iset{k}$, if $\hat{i} \le i$ then $\Delta_{\hat{i}} \le \Delta_{i}$. If $\hat{i} > i$, we have 
\begin{align*}
\max_{j \ge \hat{i}} \mu_j + \sqrt{\frac{8}{n_j}\log\frac{k}{\delta}}  \le \mu_{\hat{i}} + \max_{j > i} \sqrt{\frac{8}{n_{j}}\log\frac{k}{\delta}} \,.
\end{align*}
Applying \eqref{eq:proof-ucb-cond-1} gives
\begin{align*}
\Delta_{\hat{i}} = \mu_{1} - \mu_{\hat{i}} \le \max_{j > i} \sqrt{\frac{8}{n_{j}}\log\frac{k}{\delta}} \,.
\end{align*}
Therefore,
\begin{align*}
\Delta_{\hat{i}} 
\le \max\left\{ \Delta_{i}, \max_{j > i} \sqrt{\frac{8}{n_{j}}\log\frac{k}{\delta}} \right\} \le \Delta_{i} + \max_{j > i} \sqrt{\frac{8}{n_{j}}\log\frac{k}{\delta}} \,.
\end{align*}
for any $i \in \iset{k}$, which concludes the proof.
\end{proof}

\begin{proof}[Proof of Proposition~\ref{prop:lcb-vs-ucb}]

Fixing $S\subset \iset{k}$, we take $\{n_i\}_{i\in S}\to\infty$ and $\{n_i\}_{i\notin S} = 1$. The upper bound for LCB in Corollary~\ref{coro:ub-lcb-instance} can be written as

\begin{align*}
\hat{\cR}_S(\lcbalg)
& = \min\left\{ \min_{i\in S} \Delta_i , \min_{i\notin S} \left(\Delta_i + \sqrt{8\log\frac{k}{\delta}}\right) \right \}+ \delta \\
& = \min_{i\in S} \Delta_i + \delta \\
& = \Delta_{\min\{i\in \iset{k}: i\in S\}} + \delta \,.
\end{align*}
Similarly, we have
\begin{align*}
\hat{\cR}_S(\ucbalg)
 =  \min_{i\in \iset{k}} \left(\Delta_i + \max_{j > i, j\notin S} \sqrt{8\log\frac{k}{\delta}} \right) + \delta
\end{align*}
and
\begin{align*}
\hat{\cR}_S(\greedyalg)
\ge \min_{i\in \iset{k}} \left(\Delta_i + \max_{j > i, j\notin S} \sqrt{2\log\frac{k}{\delta}} \right) + \delta \,.
\end{align*}
Note that for $\delta\in (0,1)$, $\sqrt{2\log\frac{k}{\delta}} > 1 \ge \Delta_{\max}$. So we can further lower bound 
$\hat{\cR}_S(\ucbalg)$ and $\hat{\cR}_S(\greedyalg)$ by $\Delta_h + \delta$ where
$h=\min\{i \in \iset{k}: \forall j > i, j\in S \}$. Let $m=|S|$. Notice that unless $S=\{k - m + 1, ..., k\}$, we always have
$\min\{i\in \iset{k}: i\in S\} < \min\{i \in \iset{k}: \forall j > i, j\in S \}$. So we have $\hat{\cR}_S(\lcbalg) <
\hat{\cR}_S(\ucbalg)$~(or $\hat{\cR}_S(\greedyalg)$) whenever $S\ne \{k - m + 1, ..., k\}$. 
Under the uniform distribution over all possible subsets for $S$, the event $S=\{k - m + 1, ..., k\}$ 
happens with probability $\binom{k}{m}^{-1}$, which concludes the proof.

\end{proof}

\subsection{Instance-dependent Lower Bounds}

\begin{proof}[Proof of Theorem \ref{thm:instance-negative-1}]


We first derive an upper bound for $\cR^*_{\cM_{\bn, c}}(\theta)$.
Assuming $X=(X_1, X_2, \bn)$ with $X_i\sim \cN(\mu_i, 1/n_i)$, for any $\beta\in \mathbb{R}$, we define algorithm 
$\cA_{\beta}$ as 
\begin{align*}
\cA_{\beta}(X) = 
\begin{cases}
1, & \textrm{ if } X_1 - X_2 \ge \frac{\beta}{\sqrt{n_{\min}}}\, ; \\
2, & \textrm{ otherwise } \,.
\end{cases}
\end{align*}

We now analyze the regret for $\cA_{\beta}$. By Hoeffding's inequality we have the following instance-dependent 
regret upper bound:
\begin{proposition}
Consider any $\beta\in \mathbb{R}$ and $\theta\in\Theta_{\bn}$. Let $\Delta=|\mu_1 - \mu_2|$. If $\mu_1\ge \mu_2$ then
\begin{align*}
\cR(\cA_{\beta}, \theta) 
\le \I{\Delta \le \frac{\beta}{\sqrt{n_{\min}}}} \frac{\beta}{\sqrt{n_{\min}}} + \I{\Delta > \frac{\beta}{\sqrt{n_{\min}}} }e^{-\frac{n_{\min}}{4}\left( \Delta - \frac{\beta}{\sqrt{n_{\min}}} \right)_+^2}\,.
\end{align*}
Furthermore, if $\mu_1< \mu_2$, we have
\begin{align*}
\cR(\cA_{\beta}, \theta) 
\le \I{\Delta \le \frac{-\beta}{\sqrt{n_{\min}}}} \frac{-\beta}{\sqrt{n_{\min}}} + \I{\Delta > \frac{-\beta}{\sqrt{n_{\min}}} }e^{-\frac{n_{\min}}{4}\left( \Delta + \frac{\beta}{\sqrt{n_{\min}}} \right)_+^2}\,.
\end{align*}
\end{proposition}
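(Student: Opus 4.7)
The plan is to write the regret as the suboptimality gap times the probability of choosing the wrong arm, then bound this probability using two different strategies depending on whether the decision threshold $\beta/\sqrt{n_{\min}}$ sits above or below the true mean gap.

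Concretely, suppose first that $\mu_1 \geq \mu_2$, so arm $1$ is optimal and regret is incurred only if $\cA_\beta$ outputs $2$:
\begin{align*}
\cR(\cA_\beta,\theta) = \Delta \cdot \Prb{X_1 - X_2 < \beta/\sqrt{n_{\min}}}.
\end{align*}
Since $X_1 - X_2 \sim \cN(\Delta,\, 1/n_1 + 1/n_2)$, this is a Gaussian tail probability. I would split:
(i) If $\Delta \leq \beta/\sqrt{n_{\min}}$, the trivial bound suffices: $\cR \leq \Delta \leq \beta/\sqrt{n_{\min}}$.
(ii) If $\Delta > \beta/\sqrt{n_{\min}}$, the event is a lower-tail deviation by $\Delta - \beta/\sqrt{n_{\min}} > 0$ from the mean, so by the standard sub-Gaussian tail bound (or Hoeffding applied to the Gaussian difference),
\begin{align*}
\Prb{X_1 - X_2 < \beta/\sqrt{n_{\min}}} \leq \exp\left(-\frac{(\Delta - \beta/\sqrt{n_{\min}})^2}{2(1/n_1 + 1/n_2)}\right) \leq \exp\left(-\frac{n_{\min}}{4}\bigl(\Delta - \beta/\sqrt{n_{\min}}\bigr)_+^2\right),
\end{align*}
using $1/n_1 + 1/n_2 \leq 2/n_{\min}$. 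Finally, since $\Delta \leq 1$ (means in $[0,1]$), multiplying by $\Delta$ keeps the bound intact, yielding exactly the first displayed inequality.

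The case $\mu_1 < \mu_2$ is handled symmetrically: now the optimal arm is $2$, so regret is $\Delta \cdot \Prb{X_1 - X_2 \geq \beta/\sqrt{n_{\min}}}$ and $X_1 - X_2 \sim \cN(-\Delta,\, 1/n_1 + 1/n_2)$. The same case split applies, but with the threshold $-\beta/\sqrt{n_{\min}}$ playing the role that $\beta/\sqrt{n_{\min}}$ played before: the trivial bound handles $\Delta \leq -\beta/\sqrt{n_{\min}}$, and for the complementary range the deviation magnitude becomes $\Delta + \beta/\sqrt{n_{\min}}$, which produces the second displayed inequality by the same sub-Gaussian tail plus the variance comparison $1/n_1+1/n_2 \le 2/n_{\min}$.

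There is no real obstacle here; the argument is routine. The only thing to be careful about is ensuring the two cases are glued together correctly via the positive-part notation $(\cdot)_+$, and remembering to use the $\Delta \le 1$ bound to absorb the $\Delta$ prefactor into the exponential term so that the second summand in each inequality appears without a leading $\Delta$. The key quantitative input is the bound $1/n_1 + 1/n_2 \leq 2/n_{\min}$, which is what produces the exponent $n_{\min}/4$ rather than a factor tied to $n_1$ and $n_2$ separately.
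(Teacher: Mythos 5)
Your proof is correct and follows essentially the same route as the paper, which simply invokes Hoeffding's inequality for the Gaussian difference $X_1 - X_2 \sim \cN(\pm\Delta,\, 1/n_1 + 1/n_2)$ together with the variance bound $1/n_1 + 1/n_2 \le 2/n_{\min}$; your explicit case split on the threshold and the use of $\Delta \le 1$ to absorb the prefactor fill in exactly the details the paper leaves implicit.
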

Maximizing over $\Delta$ gives our worst case regret guarantee: 
\begin{proposition}
For any $\beta\in \mathbb{R}$,
\begin{align*}
\sup_{\theta\in \Theta_{\bn}} \cR(\cA_{\beta}, \theta) \le \frac{|\beta| + 2}{\sqrt{n_{\min}}} \,.
\end{align*}
\end{proposition}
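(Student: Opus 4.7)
The plan is to take suprema over $\theta\in\Theta_{\bn}$ in the instance-dependent bound from the previous proposition, splitting into cases based on which of $\mu_1 \geq \mu_2$ or $\mu_1 < \mu_2$ holds. By the symmetry of relabelling the two arms (swapping arm $1$ and arm $2$ turns $\cA_\beta$ into $\cA_{-\beta}$), it suffices to handle $\mu_1 \geq \mu_2$; the other case yields the same bound with $\beta$ replaced by $-\beta$, leaving $|\beta|+2$ invariant. Set $\Delta = \mu_1 - \mu_2 \in [0,1]$ and recall that one actually has the slightly sharper inequality $\cR(\cA_\beta,\theta) \leq \Delta \cdot \exp\!\bigl(-n_{\min}(\Delta-\beta/\sqrt{n_{\min}})_+^2/4\bigr)$ when $\Delta > \beta/\sqrt{n_{\min}}$ (which is what the previous proposition is proved from, writing $\cR = \Delta\cdot\sP(\text{wrong arm})$ and applying subgaussian concentration with $1/n_1+1/n_2 \leq 2/n_{\min}$). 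The worst case then reduces to maximizing a single function of $\Delta\geq 0$.

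The main calculation is a change of variables $t = \Delta - \beta/\sqrt{n_{\min}}$. When $t\leq 0$ (which only occurs if $\beta \geq 0$), the trivial bound gives $\cR \leq \Delta \leq \beta/\sqrt{n_{\min}} \leq (|\beta|+2)/\sqrt{n_{\min}}$ immediately. When $t>0$, write
\begin{align*}
\cR \;\leq\; \bigl(t + \tfrac{\beta}{\sqrt{n_{\min}}}\bigr)\, e^{-n_{\min} t^2/4}
\;\leq\; t\, e^{-n_{\min} t^2/4} \;+\; \tfrac{|\beta|}{\sqrt{n_{\min}}}\,,
\end{align*}
where for $\beta\geq 0$ the second term comes from $e^{-n_{\min} t^2/4}\leq 1$, while for $\beta<0$ one has $t+\beta/\sqrt{n_{\min}} \leq t$ so the $|\beta|/\sqrt{n_{\min}}$ term can even be dropped. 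The key scalar inequality is
\begin{align*}
\sup_{t\geq 0}\, t\, e^{-a t^2} \;=\; \tfrac{1}{\sqrt{2ae}}\,, \qquad a>0\,,
\end{align*}
attained at $t=1/\sqrt{2a}$; applied with $a = n_{\min}/4$ it yields $t\,e^{-n_{\min} t^2/4}\leq \sqrt{2/(e n_{\min})}$. Combining everything gives $\cR \leq (|\beta|+\sqrt{2/e})/\sqrt{n_{\min}} \leq (|\beta|+2)/\sqrt{n_{\min}}$ since $\sqrt{2/e}<2$.

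There is no serious obstacle here: the argument is essentially a one-variable calculus problem, and the only non-routine step is recognizing the scalar identity $\sup_{t\geq 0} t e^{-at^2} = 1/\sqrt{2ae}$. The mild care needed is to handle both signs of $\beta$ (and the fact that the $\Delta$ factor in front of the exponential is crucial — without it, the bound $e^{-n_{\min} t^2/4}$ is close to $1$ near $t=0$ and would not give the $1/\sqrt{n_{\min}}$ scaling claimed in the proposition).
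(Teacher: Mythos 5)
Your proof is correct and follows the same route the paper intends: the paper's entire justification for this proposition is the phrase ``maximizing over $\Delta$'' applied to the preceding instance-dependent bound, which is exactly the one-variable optimization you carry out via the substitution $t=\Delta-\beta/\sqrt{n_{\min}}$ and the identity $\sup_{t\ge 0} t\,e^{-at^2}=1/\sqrt{2ae}$. You are also right to restore the $\Delta$ prefactor (from $\cR=\Delta\cdot\sP(\text{wrong arm})$) that is dropped in the displayed statement of the preceding proposition, since without it the maximization over $\Delta$ would give a constant rather than the claimed $1/\sqrt{n_{\min}}$ scaling.
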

$\cA_{\beta}(X)$ is minimax optimal for a specific range of $\beta$:
\begin{proposition}
If $|\beta| \le cc_0 - 2$ then $\cA_{\beta} \in \cM_{\bn, c}$.
\label{prop:thresholding-alg-upper}
\end{proposition}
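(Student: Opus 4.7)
The plan is to combine the worst-case regret bound for $\cA_\beta$ (the proposition immediately preceding the statement) with the minimax lower bound from Theorem~\ref{thm:minmax-lb}. By the preceding proposition we already know
\[
\sup_{\theta\in \Theta_{\bn}} \cR(\cA_{\beta}, \theta) \le \frac{|\beta|+2}{\sqrt{n_{\min}}},
\]
and by hypothesis $\inf_{\cA'} \sup_{\theta\in \Theta_{\bn}}\cR(\cA',\theta) \ge c_0/\sqrt{n_{\min}}$. Membership in $\cM_{\bn,c}$ requires
\[
\sup_{\theta\in \Theta_{\bn}} \cR(\cA_{\beta}, \theta) \;\le\; c \cdot \inf_{\cA'} \sup_{\theta\in \Theta_{\bn}}\cR(\cA',\theta).
\]

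First I would chain these two bounds: the left side is at most $(|\beta|+2)/\sqrt{n_{\min}}$, and the right side is at least $c c_0/\sqrt{n_{\min}}$. So it suffices to verify
\[
\frac{|\beta|+2}{\sqrt{n_{\min}}} \;\le\; \frac{c c_0}{\sqrt{n_{\min}}},
\]
which is exactly the assumed inequality $|\beta| \le c c_0 - 2$. This gives $\cA_\beta \in \cM_{\bn,c}$ by the definition of minimax optimality used in the paper.

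There is essentially no obstacle here beyond making sure the constants line up: the whole content is a two-line calculation that threads the lower bound from Theorem~\ref{thm:minmax-lb} into the denominator of the minimax optimality criterion. The only subtlety worth stating explicitly is that the bound $\sup_\theta \cR(\cA_\beta,\theta) \le (|\beta|+2)/\sqrt{n_{\min}}$ holds uniformly in $\bn$, so the argument does not rely on any special relationship between $n_1$ and $n_2$. The condition $|\beta|\le cc_0-2$ implicitly requires $c \ge 2/c_0$ so that the admissible range for $\beta$ is non-empty, which is consistent with the hypothesis $c > 2/c_0$ used in Theorem~\ref{thm:instance-negative-1}.
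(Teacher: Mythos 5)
Your argument is correct and is precisely the reasoning the paper intends: the preceding worst-case bound $\sup_\theta \cR(\cA_\beta,\theta)\le (|\beta|+2)/\sqrt{n_{\min}}$ chained with the minimax lower bound $\inf_{\cA'}\sup_\theta\cR(\cA',\theta)\ge c_0/\sqrt{n_{\min}}$ yields membership in $\cM_{\bn,c}$ exactly when $|\beta|+2\le cc_0$. The paper leaves this step implicit for the same reason you note --- it is an immediate comparison of constants --- so there is nothing to add.
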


Given $\theta\in \Theta_{\bn}$, to upper bound $\cR^*_{\cM_{\bn, c}}(\theta)$, we pick $\beta$ such that $\cA_{\beta} \in \cM_{\bn, c}$ and $\cA_{\beta}$ performs well on $\theta$. For $\theta$ where $\mu_1\ge \mu_2$,
we set $\beta = 2 - cc_0$ thus $\cR^*_{\cM_{\bn, c}}(\theta) \le \cR(\cA_{2-cc_0}, \theta)$. For $\theta$ where $\mu_1 < \mu_2$,
we set $\beta = cc_0 - 2$ thus $\cR^*_{\cM_{\bn, c}}(\theta) \le \cR(\cA_{cc_0-2}, \theta)$.

We now construct two instances $\theta_1, \theta_2\in \Theta_{\bn}$ and show that no algorithm can achieve regret close to $\cR^*_{\cM_{\bn, c}}$ on both instances. Fixing some $\lambda \in \mathbb{R}$ and $\eta > 0$, we define
$$\theta_1=(\mu_1, \mu_2)=(\lambda + \frac{\eta}{n_1}, \lambda - \frac{\eta}{n_2})$$ and 
$$\theta_2=(\mu'_1, \mu'_2)=(\lambda - \frac{\eta}{n_1}, \lambda + \frac{\eta}{n_2})\,.$$
On instance $\theta_1$ we have $X_1-X_2\sim \cN(( \frac{1}{n_1} + \frac{1}{n_2} )\eta, \frac{1}{n_1} + \frac{1}{n_2} )$ while on instance $\theta_2$ we have  $X_1-X_2\sim \cN(-( \frac{1}{n_1} + \frac{1}{n_2} )\eta,  \frac{1}{n_1} + \frac{1}{n_2} )$. 
Let $\Phi$ be the CDF of the standard normal distribution $\cN(0, 1)$, $\Delta=( \frac{1}{n_1} + \frac{1}{n_2} )\eta$, and $\sigma^2=\frac{1}{n_1} + \frac{1}{n_2}$. Then we have
\begin{align*}
\cR(\cA_{\beta}, \theta_1) 
& = \Delta \Prbb{\theta_1}{\cA_{\beta} = 2} \\ 
& =  \Delta \Prbb{\theta_1}{X_1-X_2 < \frac{\beta}{\sqrt{n_{\min}}} } \\
& =  \Delta \Phi\left( \frac{\beta - \Delta \sqrt{n_{\min}} }{\sigma \sqrt{n_{\min}} } \right)\,,
\end{align*}
and 
\begin{align*}
\cR(\cA_{-\beta}, \theta_2) 
& = \Delta \Prbb{\theta_2}{\cA_{-\beta} = 1} \\ 
& =  \Delta \Prbb{\theta_2}{X_1-X_2 \ge - \frac{\beta}{\sqrt{n_{\min}}} } \\
& =  \Delta \Phi\left( \frac{\beta - \Delta \sqrt{n_{\min}} }{\sigma \sqrt{n_{\min}} } \right)\,.
\end{align*}
It follows that our upper bound on $\cR^*_{\cM_{\bn, c}}$ is the same for both instances, i.e., $\cR(\cA_{2-cc_0}, \theta_1) = \cR(\cA_{cc_0-2}, \theta_2)$. 
Next we show that the greedy algorithm $\cA_0$ is optimal in terms of minimizing the worse regret between $\theta_1$ and $\theta_2$.

\begin{lemma}
Let $\cA_0$ be the greedy algorithm where $\cA_0(X)=1$ if $X_1\ge X_2$ and $\cA_0(X)=2$ otherwise. Then we have
\begin{align*}
\cR(\cA_0, \theta_1)= \cR(\cA_0, \theta_2) = \min_{\cA} \max\{\cR(\cA, \theta_1), \cR(\cA, \theta_2)\} \,.
\end{align*}
\label{lemma:greedy-optimal-two-ins}
\end{lemma}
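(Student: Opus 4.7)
The plan is to reduce the claim to classical binary hypothesis testing via Bayes--minimax duality. For any algorithm $\cA$, the trivial bound
\begin{align*}
\max\{\cR(\cA,\theta_1), \cR(\cA,\theta_2)\} \;\ge\; \tfrac{1}{2}\bigl(\cR(\cA,\theta_1) + \cR(\cA,\theta_2)\bigr)
\end{align*}
means it suffices to establish two things: (i) $\cA_0$ minimizes the average regret $\tfrac{1}{2}(\cR(\cA,\theta_1) + \cR(\cA,\theta_2))$ over all (possibly randomized) algorithms, and (ii) $\cA_0$ equalizes the two regrets so that inequality becomes an equality for $\cA_0$. Once both hold, for any $\cA$ we get $\max\{\cR(\cA,\theta_1),\cR(\cA,\theta_2)\} \ge \tfrac{1}{2}(\cR(\cA_0,\theta_1)+\cR(\cA_0,\theta_2)) = \cR(\cA_0,\theta_1) = \cR(\cA_0,\theta_2)$.

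Step (ii) is a direct symmetry calculation. Under $\theta_1$, $X_1-X_2 \sim \cN(\Delta,\sigma^2)$, hence $\Prbb{\theta_1}{\cA_0 = 2} = \Prbb{\theta_1}{X_1 < X_2} = \Phi(-\Delta/\sigma)$. Under $\theta_2$, $X_1-X_2 \sim \cN(-\Delta,\sigma^2)$, so $\Prbb{\theta_2}{\cA_0 = 1} = \Prbb{\theta_2}{X_1 \ge X_2} = \Phi(-\Delta/\sigma)$. Since the suboptimality gap on both instances equals $\Delta$, we obtain $\cR(\cA_0,\theta_1) = \cR(\cA_0,\theta_2) = \Delta\,\Phi(-\Delta/\sigma)$.

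Step (i) is the Neyman--Pearson argument. Expanding the Gaussian densities and cancelling the symmetric squared-deviation terms yields
\begin{align*}
\log\frac{p_{\theta_1}(X_1,X_2)}{p_{\theta_2}(X_1,X_2)} \;=\; 2\eta (X_1 - X_2),
\end{align*}
which is strictly monotone in $X_1 - X_2$. The Bayes-optimal rule under the uniform prior on $\{\theta_1,\theta_2\}$ chooses the arm $j$ corresponding to whichever density is larger, which here reduces to selecting arm $1$ exactly when $X_1 \ge X_2$. That rule is $\cA_0$. By the Neyman--Pearson lemma, this likelihood-ratio test minimizes $\tfrac{1}{2}[\Prbb{\theta_1}{\cA=2} + \Prbb{\theta_2}{\cA=1}]$, and multiplying by $\Delta$ gives the claim for the average regret. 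Combining this with step (ii) closes the proof.

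The only subtle point to watch is the tiebreaking at the null event $\{X_1 = X_2\}$, which has measure zero under both Gaussian product measures, so the ``$\ge$ vs $>$'' convention in the definition of $\cA_0$ is irrelevant. The Neyman--Pearson step also extends seamlessly to randomized algorithms, since any randomization can be averaged against the likelihood-ratio rule without decreasing the average error. I do not anticipate a serious obstacle here; the main work is the algebraic simplification of the log-likelihood ratio (which is clean because $\theta_1$ and $\theta_2$ are constructed to be mirror images around $\lambda$), and stating the Bayes--minimax inequality correctly.
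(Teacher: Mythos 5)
Your proof is correct, but it takes a genuinely different route from the paper. The paper also computes the likelihood ratio $e^{2\eta(X_1-X_2)}$, but then invokes the Neyman--Pearson lemma in its classical ``most powerful test at a given level'' form: for an arbitrary algorithm $\cA'$ it finds a threshold test $\cA^\gamma$ matching $\Prbb{\theta_1}{\cA'(X)=2}$ exactly (using continuity of the Gaussian law) while weakly improving the error under $\theta_2$, thereby reducing the minimax problem to the one-parameter family $\{\cA^\gamma\}$, and then concludes by monotonicity of $\cR(\cA^\gamma,\theta_1)$ and $\cR(\cA^\gamma,\theta_2)$ in $\gamma$ together with equalization at $\gamma=1$. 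You instead use the Bayes--minimax (equalizer-rule) argument: $\max \ge \text{average}$, the uniform-prior Bayes risk is minimized pointwise by the MAP rule, which is exactly $\cA_0$ since the gaps on the two instances coincide, and $\cA_0$ equalizes the two regrets so the chain of inequalities collapses. Your version is somewhat more elementary and self-contained --- it sidesteps both the level-matching step and the monotonicity argument, needing only pointwise minimization of the posterior error and a symmetry computation. What the paper's route buys in exchange is the stronger statement that the entire threshold family is Pareto-optimal for the pair $(\cR(\cdot,\theta_1),\cR(\cdot,\theta_2))$, which is the form of the result it actually leans on elsewhere in the proof of Theorem~\ref{thm:instance-negative-1}, where non-symmetric thresholds $\cA_{\beta}$ with $\beta\neq 0$ are used to upper bound $\cR^*_{\cM_{\bn,c}}$; your argument establishes only the minimax claim of the lemma itself, which is all the lemma asserts. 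Your remarks on the null set $\{X_1=X_2\}$ and on randomized rules are correct and, if anything, tie up loose ends the paper leaves implicit.
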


\begin{proof}[Proof of Lemma~\ref{lemma:greedy-optimal-two-ins}]

The first step is to show that by applying the Neyman-Pearson Lemma, thresholding algorithms on $X_1-X_2$ perform the most powerful 
hypothesis tests between $\theta_1$ and $\theta_2$.

Let $f_{\theta}$ be the probability density function for the observation $(X_1, X_2)$ under
instance $\theta$. Then, the likelihood ratio function can be written as
\begin{align*}
\frac{ f_{\theta_1}(X_1, X_2) }{ f_{\theta_2}(X_1, X_2) } 
= \frac{ e^{-\frac{n_1}{2} (X_1 - \lambda - \eta/n_1)^2 - \frac{n_2}{2} (X_2 - \lambda + \eta/n_2)^2 } }
{ e^{-\frac{n_1}{2} (X_1 - \lambda + \eta/n_1)^2 - \frac{n_2}{2} (X_2 - \lambda - \eta/n_2)^2 } } = e^{2\eta(X_1 - X_2)} \,.
\end{align*}

Applying the Neyman-Pearson Lemma to our scenario gives the following statement:

\begin{proposition}[Neyman-Pearson Lemma]
For any $\gamma> 0$ let $\cA^{\gamma}$ be the algorithm where $\cA^{\gamma}(X)=1$ if $\frac{ f_{\theta_1}(X_1, X_2) }{ f_{\theta_2}(X_1, X_2) } \ge \gamma$ and $\cA^{\gamma}(X)=2$ otherwise. Let $\alpha = \Prbb{\theta_1}{\cA^{\gamma}(X)=2}$. Then for any algorithm $\cA'$ such that 
$\Prbb{\theta_1}{\cA'(X)=2}=\alpha$, we have 
$
\Prbb{\theta_2}{\cA'(X)=1} \ge  \Prbb{\theta_2}{\cA^{\gamma}(X)=1}$.
\label{prop:neyman-pearson}
\end{proposition}
Note that $\frac{ f_{\theta_1}(X_1, X_2) }{ f_{\theta_2}(X_1, X_2) } \ge \gamma$ is equivalent to 
$X_1-X_2\ge (2\eta)^{-1}\log \gamma$. Returning to the proof of
Lemma~\ref{lemma:greedy-optimal-two-ins}, 
consider an arbitrary algorithm $\cA'$ and let $\alpha = \cR(\cA', \theta_1)/\Delta = \Prbb{\theta_1}{\cA'(X)=2}$.
Let $\gamma$ be the threshold that satisfies $\Prbb{\theta_1}{\cA^{\gamma}(X)=2}=\alpha$. 
This exists because $X_1, X_2$ follow a continuous distribution. 
According to
Proposition~\ref{prop:neyman-pearson} we have $
\Prbb{\theta_2}{\cA'(X)=1} \ge  \Prbb{\theta_2}{\cA^{\gamma}(X)=1}$. Therefore, we have shown that 
$\cR(\cA^{\gamma}, \theta_1)=\cR(\cA', \theta_1)$ and $\cR(\cA^{\gamma}, \theta_2)\le\cR(\cA', \theta_2)$,
which means that for any algorithm $\cA'$ there exists some $\gamma$ such that 
\begin{align*}
\max\{\cR(\cA^\gamma, \theta_1), \cR(\cA^\gamma, \theta_2)\} 
\le \max\{\cR(\cA', \theta_1), \cR(\cA', \theta_2)\} \,.  
\end{align*}
It remains to show that $\gamma=1$ is the minimizer of $\max\{\cR(\cA^\gamma, \theta_1), \cR(\cA^\gamma, \theta_2)\}$. This comes from the fact that $\cR(\cA^\gamma, \theta_1)$ is a monotonically increasing 
function of $\gamma$ while $\cR(\cA^\gamma, \theta_2)$ is a monotonically decreasing 
function of $\gamma$ and $\gamma=1$ makes $\cR(\cA^\gamma, \theta_1)=\cR(\cA^\gamma, \theta_2)$,  
which means that $\gamma=1$ is the minimizer.
\end{proof}
We now continue with the proof of Theorem~\ref{thm:instance-negative-1}. Applying Lemma~\ref{lemma:greedy-optimal-two-ins}
gives
\begin{align*}
\sup_{\theta\in \Theta_{\bn}} \frac{\cR(\cA, \theta)}{\cR^*_{\cM_{\bn, c}}(\theta)} 
& \ge  \max \left\{ \frac{\cR(\cA, \theta_1)}{\cR^*_{\cM_{\bn, c}}(\theta_1)}, \frac{\cR(\cA, \theta_2)}{\cR^*_{\cM_{\bn, c}}(\theta_2)}\right\} \\
& \ge \max \left\{ \frac{\cR(\cA, \theta_1)}{\cR(\cA_{2-cc_0}, \theta_1)}, \frac{\cR(\cA, \theta_2)}{\cR(\cA_{cc_0-2}, \theta_2)}\right\} \\
& = \frac{\max\left\{  \cR(\cA, \theta_1), \cR(\cA, \theta_2)\right\}}{\cR(\cA_{2-cc_0}, \theta_1)} \\
& \ge \frac{\cR(\cA_0, \theta_1)}{\cR(\cA_{2-cc_0}, \theta_1)} \\
& = \frac{\Phi\left( - \frac{\Delta}{\sigma } \right)}{\Phi\left( - \frac{cc_0-2}{\sigma \sqrt{n_{\min}} }
- \frac{\Delta}{\sigma } \right)} \,.
\addeq\label{eq:cdf-ratio}
\end{align*}
Now we apply the fact that for $x>0$, $\frac{x}{1 + x^2}\phi(x) < \Phi(-x) < \frac{1}{x}\phi(x)$
to lower bound \eqref{eq:cdf-ratio}, where $\phi$ is the probability density function of the standard normal distribution. Choosing $\beta = cc_0 - 2$, we have
\begin{align*}
\frac{\Phi\left( - \frac{\Delta}{\sigma } \right)}{\Phi\left( - \frac{\beta}{\sigma \sqrt{n_{\min}} }
- \frac{\Delta}{\sigma } \right)} \ge \frac{\beta + \Delta \sqrt{n_{\min}} }{\sigma \sqrt{n_{\min}} }\frac{\Delta/\sigma}{1 + (\Delta/\sigma)^2}
e^{\frac{1}{2} \left(  \frac{\beta^2}{\sigma^2 n_{\min}} + \frac{\beta\Delta }{\sigma^2 \sqrt{n_{\min}}}  \right) } \ge \frac{\eta^2}{n_{\min} + \eta^2} e^{\frac{\beta^2}{4} + \frac{\beta\eta}{2 \sqrt{n_{\min}}}  } \,. 
\end{align*}

Picking $\lambda=1/2$ and $\eta=n_{\min}/2$ such that $\theta_1, \theta_2 \in [0, 1]^2$, we have
\begin{align*}
\sup_{\theta\in \Theta_{\bn}} \frac{\cR(\cA, \theta)}{\cR^*_{\cM_{\bn, c}}(\theta)}
 \ge \frac{ n_{\min} }{ n_{\min} + 4 } e^{\frac{\beta^2}{4} +  \frac{\beta}{4}\sqrt{n_{\min}}  }\,,
\end{align*}
which concludes the proof.
\end{proof}

\section{Proof for Section~\ref{sec:pessimism}}

For any $\theta$, let $\mu_1$ and $n_1$ be the reward mean and sample count for the optimal arm. 
We first prove that $\cE^*(\theta)$ is at the order of $1/\sqrt{n_1}$ for any $\theta$.

\begin{proposition}
There exist universal constants $c_0$ and $c_1$ such that, for any $\theta \in \Theta_{\bn}$, 
$c_0/\sqrt{n_1} \le \cE^*(\theta) \le c_1/\sqrt{n_1}$.
\label{prop:mean-est}
\end{proposition}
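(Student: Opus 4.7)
The plan is to prove the two-sided bound by exhibiting a specific near-optimal estimator for the upper bound and then using a two-point information-theoretic argument for the lower bound.

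For the upper bound $\cE^*(\theta)\le c_1/\sqrt{n_1}$, I would take the plug-in estimator $\hat\nu=\hat\mu_{a^*}$, which is well-defined since $a^*$ is given to the estimator. Since observations are Gaussian, $\hat\mu_{a^*}\sim\cN(\mu_{a^*},1/n_{a^*})$, and a direct calculation gives $\EE_\theta|\mu^*-\hat\nu|=\sqrt{2/(\pi n_1)}$. To conclude I need $\hat\nu\in\cV^*_\bn$. Its worst-case risk over $\Theta_\bn$ is $\sqrt{2/(\pi\min_i n_i)}$, attained when the optimal arm is the one with the fewest samples. A matching minimax lower bound $M\ge c'/\sqrt{\min_i n_i}$ follows from a standard Le Cam two-point reduction applied to instances that differ only in the mean of the arm with $\min_i n_i$ samples, shifted by $\Theta(1/\sqrt{\min_i n_i})$. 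Hence $\hat\nu$ is minimax optimal up to a universal constant, so $\hat\nu\in\cV^*_\bn$ and $\cE^*(\theta)\le\EE_\theta|\mu^*-\hat\nu|\le c_1/\sqrt{n_1}$.

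For the lower bound, fix any $\nu\in\cV^*_\bn$ and set $\Delta=c/\sqrt{n_1}$ for a small universal constant $c$. I would construct a companion instance $\theta'$ by raising $\mu_{a^*}$ by $\Delta$, so that $a^*$ remains optimal and all other arms are unchanged. Only the distribution of the $n_1$ samples from arm $a^*$ differs, so $\mathrm{KL}(p_\theta,p_{\theta'})=n_1\Delta^2/2=c^2/2$, giving $\mathrm{TV}(p_\theta,p_{\theta'})\le c/\sqrt{2}$ by Pinsker. The standard Le Cam two-point identity then reads
\begin{align*}
\EE_\theta|\mu^*-\nu|+\EE_{\theta'}|\mu^{*\prime}-\nu|\ \ge\ \Delta\bigl(1-\mathrm{TV}(p_\theta,p_{\theta'})\bigr)\ \ge\ \Delta/2,
\end{align*}
so at least one of the two risks is bounded below by $\Delta/4=c/(4\sqrt{n_1})$.

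The main obstacle is translating this max-type bound into an instance-wise lower bound at $\theta$ itself, since Le Cam only guarantees a large risk at one of the two endpoints. The plan is to compare the risk at $\theta'$ to the risk at $\theta$ via a change-of-measure estimate. We may assume $\nu\in[0,1]$ without loss of generality (truncating only reduces the risk), so $|\mu^*-\nu|\le 1$. By Cauchy--Schwarz applied to the likelihood ratio $L=p_{\theta'}/p_\theta$,
\begin{align*}
\EE_{\theta'}|\mu^{*\prime}-\nu|\ \le\ \EE_\theta|\mu^*-\nu|+\Delta+\sqrt{\EE_\theta(\mu^*-\nu)^2}\cdot\sqrt{\chi^2(p_{\theta'}\|p_\theta)},
\end{align*}
and for the Gaussian shift $\chi^2(p_{\theta'}\|p_\theta)=\exp(n_1\Delta^2)-1\le 2c^2$ while the second moment factor is at most one. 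Combined with the Le Cam estimate and choosing $c$ small enough that the correction terms absorb only a fraction of $\Delta$, this yields $\EE_\theta|\mu^*-\nu|\ge c_0/\sqrt{n_1}$ for a universal $c_0>0$. The delicate point is the constant tracking in this chain of inequalities, which implicitly requires the universal constant in the definition of $\cV^*_\bn$ to be kept close enough to one for the argument to close; if the gap $\mu_{a^*}-\max_{i\neq a^*}\mu_i$ is itself smaller than $2\Delta$, the shift direction is chosen so that $a^*$ remains optimal, which only loses a further universal constant.
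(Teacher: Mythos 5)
Your upper bound is correct and is essentially the paper's argument: the plug-in estimator $\hat\mu_{a^*}$ lies in $\cV^*_\bn$ and has risk $O(1/\sqrt{n_1})$ at $\theta$, hence $\cE^*(\theta)\le c_1/\sqrt{n_1}$. The lower bound, however, has a genuine gap in the step that converts Le Cam's two-point bound into a bound at $\theta$ itself. Combining your two displayed inequalities gives $\Delta/2 \le \EE_\theta|\mu^*-\nu| + \EE_{\theta'}|\mu^{*\prime}-\nu| \le 2\,\EE_\theta|\mu^*-\nu| + \Delta + \sqrt{\EE_\theta(\mu^*-\nu)^2}\sqrt{\chi^2(p_{\theta'}\|p_\theta)}$, i.e.\ $2\,\EE_\theta|\mu^*-\nu| \ge -\Delta/2 - (\text{correction})$, which is vacuous: the additive $\Delta$ coming from $|\mu^{*\prime}-\nu|\le|\mu^*-\nu|+\Delta$ already exceeds the Le Cam separation $\Delta/2$ before the correction term is even considered. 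The correction term makes matters worse, not better: $\chi^2(p_{\theta'}\|p_\theta)=e^{n_1\Delta^2}-1=\Theta(c^2)$ is a \emph{constant}, and the second-moment factor is at best $O(1)$ (or $O(n_{\min}^{-1/4})$ if you invoke minimax optimality and $|\mu^*-\nu|\le 1$), so the correction is of order $c$, not of order $\Delta=c/\sqrt{n_1}$; it cannot be made ``a fraction of $\Delta$'' by shrinking $c$, since both scale linearly in $c$. More fundamentally, no argument built only from these inequalities can work: the constant map $\nu\equiv\mu^*(\theta)$ has zero risk at $\theta$ and risk exactly $\Delta$ at $\theta'$, and satisfies every inequality you use, so a nonzero instance-wise lower bound at $\theta$ must come from the minimax optimality of $\nu$ entering \emph{quantitatively}. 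Your proposal only gestures at this at the very end, and the remark that the constant in the definition of $\cV^*_\bn$ must be ``kept close enough to one'' is a symptom of the problem --- that constant is a fixed, arbitrary universal constant that you do not get to choose.

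The paper closes exactly this step by a different mechanism. It shifts the optimal arm's mean up by $4c/\sqrt{n_1}$ to form $\theta'$, uses the minimax optimality of $\nu$ to bound $\EE_{\theta'}[|\mu_1'-\nu|]\le c/\sqrt{n_1}$, and applies Markov's inequality to get $\Prbb{\theta'}{\nu\ge\mu_1+2c/\sqrt{n_1}}\ge 1/2$. It then transfers the probability of this single \emph{event} back to $\theta$ via the KL change-of-measure inequality for event probabilities, obtaining $\Prbb{\theta}{\nu\ge\mu_1+2c/\sqrt{n_1}}\ge\tfrac14 e^{-O(c^2)}$, a constant; since $|\nu-\mu_1|\ge 2c/\sqrt{n_1}$ holds deterministically on this event, the bound $\EE_\theta[|\mu_1-\nu|]\ge \Omega(1/\sqrt{n_1})$ follows. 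The structural point you are missing is that a change of measure applied to the probability of one event costs only a multiplicative factor $e^{-\mathrm{KL}}$, whereas your change of measure applied to an expectation costs an additive term that does not decay with $n_1$; and Markov's inequality is precisely where membership in $\cV^*_\bn$ is used. Replacing your Cauchy--Schwarz step with this Markov-plus-event-transfer argument is what is needed to repair the proof.
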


\begin{proof}[Proof of Proposition~\ref{prop:mean-est}]

For any constant $c>0$,
define $\theta'\in \Theta$ such that the only difference between $\theta'$ and $\theta$ is the mean for the optimal arm: $\theta'$ has $\mu_1' = \mu_1 + \frac{4c}{\sqrt{n_1}}$.  

For any algorithm such that 
$\EEE{\theta'}{|\mu_1' - \nu|} \le \frac{c}{\sqrt{n_1}}$, we have $\Prbb{\theta'}{\nu \ge \mu_1 + \frac{2c}{\sqrt{n_1}}} \ge \frac{1}{2}$ by Markov inequality. Applying the fact that, when $p$ and $q$ 
are two Bernoulli distributions with parameter $p$ and $q$ respectively, if $p\ge 1/2$ we have $\mathrm{KL}(p, q)\ge \frac{1}{2}\log \frac{1}{4q}$. 
Then we have 
\begin{align*}
\Prbb{\theta}{\nu \ge \mu_1 + \frac{2c}{\sqrt{n_1}}} \ge \frac{1}{4} e^{- \mathrm{KL}(\theta, \theta')} 
= \frac{1}{4} e^{-4c^2} \,.
\end{align*}
Therefore, we have 
\begin{align*}
\EEE{\theta}{|\mu_1 - \nu|} \ge \frac{2c}{\sqrt{n_1}} \Prbb{\theta}{\nu \ge \mu_1 + \frac{2c}{\sqrt{n_1}}} \ge 
\frac{ce^{-4c^2}}{2\sqrt{n_1}} \,. 
\end{align*}
Now we apply the fact that the empirical mean estimator $\nu = \hat{\mu}_1$ has $\EEE{\theta}{|\mu_1 - \nu|}\le  \frac{1}{\sqrt{n_1}}$ for any $\theta$. We know that  $\inf_{\nu} \sup_{\theta}\EEE{\theta}{|\mu_1 - \nu|} \le \frac{1}{\sqrt{n_1}}$.  Let $c_2$ be the constant in the definition of $\cV^*_\bn$,
then $\frac{c_2e^{-4c_2^2}}{2\sqrt{n_1}}$ is a lower bound on $\cE^*(\theta)$ for any $\theta$ due to 
the fact that relaxing the constraint on the minimax optimality gives a lower instance dependent regret lower bound.
Since the minimax value is also an upper bound on $\cE^*(\theta)$ we know that, there exist universal constants $c_0$ and $c_1$ such that, for any $\theta \in \Theta_{\bn}$, 
$c_0/\sqrt{n_1} \le \cE^*(\theta) \le c_1/\sqrt{n_1}$.

\end{proof}

\begin{proof}[Proof of Proposition~\ref{prop:weighted-minimax-lcb}]

Picking $\delta = \frac{1}{\sqrt{|\bn|}}$ for the LCB algorithm, according to Corollary~\ref{coro:ub-lcb-instance}
gives that there exists a universal constant $c$~(which may contain the term $\log k$) such that 
$\cR(\lcbalg, \theta) \le \frac{c \sqrt{\log |\bn|} }{\sqrt{n_1}}$. 
Applying Proposition~\ref{prop:mean-est} concludes the proof.

\end{proof}

\begin{proof}[Proof of Proposition~\ref{prop:weighted-minimax-ucb-greedy}]

Consider a sequence of counts $\bn_1, \bn_2,...$ with $n_2 = 1$ and $n_1=2,3,..., +\infty$.
Fix $\mu_1 = \mu_2 + 0.1$
and let $\Delta=\mu_1 - \mu_2$. For the UCB algorithm, we have
\begin{align*}
\cR(\ucbalg, \theta) 
& = \Delta \Prbb{\theta}{ \hat{\mu}_2 + \frac{\beta_{\delta}}{\sqrt{n_2}} \ge \hat{\mu}_1 + \frac{\beta_{\delta}}{\sqrt{n_1}} } \\
& = 0.1 \Prbb{\theta}{ \hat{\mu}_1 - \hat{\mu}_2 \le  \frac{\beta_{\delta}}{\sqrt{n_2}} - \frac{\beta_{\delta}}{\sqrt{n_1}} }  \\
& \ge 0.1 \Prbb{\theta}{ \hat{\mu}_1 - \hat{\mu}_2 \le  \left(1 - \frac{1}{\sqrt{2}}\right) \beta_\delta } \\ 
& \ge 0.1 \Prbb{\theta}{ \hat{\mu}_1 - \hat{\mu}_2 \le  \left(1 - \frac{1}{\sqrt{2}}\right) } \\
& \ge 0.1 \Prbb{\theta}{ \hat{\mu}_1 - \hat{\mu}_2 \le  \Delta } \\
& = 0.05 
\end{align*}
where we applied the fact that $\beta_\delta \ge 1$ for any $\delta \in (0, 1)$ and the random variable $\hat{\mu}_1 - \hat{\mu}_2$ follows a Gaussian distribution with mean $\Delta$. Applying Proposition~\ref{prop:mean-est} gives 
\begin{align*}
\limsup_{j\rightarrow\infty} \sup_{\theta\in\Theta_{\bn_j}}{\frac{\cR(\textnormal{UCB},\theta)}{\sqrt{\log |\bn_j|}\cdot \cE^*(\theta)}} \ge \limsup_{j\rightarrow\infty} \frac{0.05 \sqrt{j + 1} }{c_1 \sqrt{\log (j + 2)}} = + \infty
\end{align*}
For the greedy algorithm, we have
\begin{align*}
\cR(\greedyalg, \theta)  = 0.1 \Prbb{\theta}{ \hat{\mu}_1 - \hat{\mu}_2 \le  0 } \,.
\end{align*}
The random variable $\hat{\mu}_1 - \hat{\mu}_2$ follows a Gaussian distribution with mean $\Delta > 0$ and 
variance $\frac{1}{n_1} + \frac{1}{n_2} \ge 1$.
Since shrinking the variance of  $\hat{\mu}_1 - \hat{\mu}_2$ will lower the probability $\Prbb{\theta}{ \hat{\mu}_1 - \hat{\mu}_2 \le  0 }$, we have $\cR(\greedyalg, \theta) \ge 0.1\Phi(-0.1)$ where $\Phi$ is the CDF for the standard normal distribution. Now using a similar statement as for the UCB algorithm gives the result.

\end{proof}

\end{appendix}

\end{document}